\documentclass{article}

\PassOptionsToPackage{numbers, compress}{natbib}

\usepackage[main, final]{neurips_2026}

\usepackage[utf8]{inputenc} 
\usepackage[T1]{fontenc}    
\usepackage{hyperref}       
\usepackage{url}            
\usepackage{booktabs}       
\usepackage{amsfonts}       
\usepackage{nicefrac}       
\usepackage[english]{babel}
\usepackage{microtype}      
\usepackage{xcolor}         

\usepackage{float}
\usepackage{placeins}

\usepackage{subcaption}
\usepackage[table]{xcolor}
\usepackage{hyperref}
\usepackage{tabularx,array,colortbl}
\usepackage{algorithm}
\usepackage{algorithmic}
\usepackage{multirow}
\usepackage{amsmath}
\usepackage{amssymb}
\usepackage{mathtools}
\usepackage{amsthm}
\usepackage[capitalize,noabbrev]{cleveref}
\usepackage{hyperref}

\theoremstyle{plain}
\newtheorem{theorem}{Theorem}[section]
\newtheorem{proposition}[theorem]{Proposition}
\newtheorem{lemma}[theorem]{Lemma}

\theoremstyle{definition}

\newtheorem{assumption}[theorem]{Assumption}
\theoremstyle{remark}

\usepackage[textsize=tiny]{todonotes}
\newcolumntype{Y}{>{\centering\arraybackslash}X}
\newcommand{\ipF}[2]{\langle #1, #2 \rangle_F}
\newcommand{\normF}[1]{\left\|#1\right\|_F}
\newenvironment{fittable}
{\begin{adjustbox}{max width=\textwidth}}
{\end{adjustbox}}

\usepackage{makecell}
\usepackage{adjustbox}

\title{LOCU: L\"owdin-Orthogonalized Constraint Updates for Multi-Constraint Policy Optimization}

\author{%
  Joonyoung Lim \\
  Department of Information Convergence Engineering \\
  Pusan National University \\
  \texttt{ingyuh1015@pusan.ac.kr}
  \And
  Younghwan Yoo\thanks{Corresponding author.} \\
  School of Computer Science and Engineering \\
  Pusan National University \\
  \texttt{ymomo@pusan.ac.kr}
}

\begin{document}

\maketitle

\begin{abstract}
We investigate multi-constraint policy optimization in constrained Markov decision processes (CMDPs), where interactions among constraint gradients often yield ill-conditioned update directions, causing numerically fragile steps and unintended cancellation or redundant overlap in constraint corrections. To address this, we propose LOCU, which mitigates numerical instability arising from constraint interactions by applying symmetric L\"owdin orthogonalization to the natural gradients of the reward and constraints under the Fisher–Rao metric. Combined with Fisher-geometric screening and near-collinear compression, LOCU yields a compact reduced space in which the trust region becomes a Euclidean ball and constraints are handled symmetrically through a low-dimensional active-set solve. For infeasible iterates, LOCU constructs a Pareto-descent direction which preserves the local feasibility of satisfied constraints within the same reduced formulation. Experiments show that LOCU remains stable under high constraint coupling, narrow feasible regions, and infeasible starts, and generalizes across different cost critic architectures. Code is available at: \href{https://github.com/ingyuh1015/Lowdin-Orthogonalized-Constraint-Updates}{\texttt{GitHub Repository}}.
\end{abstract}

\section{Introduction}
\label{sec:intro}

Reinforcement learning has achieved notable success in sequential decision making, but deployment in safety-critical domains requires policies to additionally satisfy explicit operational constraints rather than maximize reward alone \cite{schulman2015trust,schulman2017proximal,achiam2017constrained,altman2021constrained}. Constrained Markov decision processes (CMDPs) formalize this requirement by optimizing return subject to cost limits. In practical scenario, agents could face several constraints simultaneously, such as velocity, collision, or resource limit in autonomous driving or robotic control. Existing safe policy-optimization methods commonly proposed and evaluated in single-constraint regimes, and their behavior can degrade when multiple constraints interact.

The core difficulty with multi-constraint is not merely computational overhead, but the \emph{interaction geometry} among constraints~\cite{nocedal2006numerical}. Existing approaches expose one or more of following failures (i) unstable updates under near-dependent constraint gradients or severe scale disparity, (ii) weakened effective updates under constraint interaction, caused by cancellation, dominance by a single direction, or projection-order dependence, and (iii) inefficient feasibility recovery after constraint violations, as noted in prior work~\cite{kim2023trust}.The problem worsens under many or near-parallel constraints, making stability depend on hyperparameter tuning rather than geometry-consistent structure.

A geometric perspective clarifies why these issues arise and how to address them.
Policy optimization can be analyzed on a Riemannian manifold whose metric arises as the second-order local approximation of the KL divergence, known as the Fisher–Rao metric~\cite{ay2017information,amari2016information}. Reward and constraint objectives define tangent directions in natural-gradient coordinates~\cite{kakade2001natural}, and each linearized constraint induces a halfspace normal to its constraint gradient. With multiple constraints, feasible updates lie in the intersection of these halfspaces and the local Fisher-metric ellipsoid~\cite{achiam2017constrained,yang2020projection}. Redundant or ill-conditioned constraint directions under this metric make both first- and second-order updates fragile and conservative, an effect amplified by finite-sample noise or critic error.


Building on this geometric insight, we introduce the L\"owdin Orthogonalized Constraint Update (LOCU), a second-order framework for multi-constraint RL that mitigates constraint coupling in Fisher-consistent coordinates. LOCU constructs a Fisher-orthonormal basis from the natural gradients of the reward and constraints via symmetric L\"owdin orthogonalization~\cite{lowdin1950non}. This transformation maps the Fisher-metric ellipsoid into a spherical trust region in a reduced Euclidean space, making constraint interactions explicit and numerically stable. For large constraint sets, LOCU employs Fisher-geometric screening to prune non-binding or redundant constraints. Furthermore, it handles infeasibility with a geometry-aligned recovery step that seeks to reduce violations while preserving satisfied constraints in linearized local model. The update operates purely at the policy-optimization layer, ensuring compatibility with various critic architectures.

We conduct experiments with multi-constrained tasks of ~\cite{Gronauer2022BulletSafetyGym,kim2024spectral} applying the constraint configurations introduced by~\cite{yao2024gradient}.
Evaluations under narrow feasible regions and strong constraint coupling, LOCU satisfies all constraints while limiting return drops to around 21\% in both settings, compared to 26–49\% and 13–32\% drops with frequent violations in baselines, performing consistently from infeasible starts and generalizing across different cost critic architectures. These results demonstrate the effectiveness of the proposed method in maintaining stable policy optimization by symmetrically handling constraint interactions under the Fisher–Rao geometry.

\section{Related Work}
\label{sec:related}

Existing constrained RL methods differ in how they incorporate multiple constraints into policy optimization and maintain feasibility during actor updates. These design choices become tightly coupled with the representation of constraint interactions. When constraint interactions are not handled in a geometry-consistent manner under the Fisher--Rao metric, stable learning often relies heavily on environment-specific tuning of numerous algorithmic choices and hyperparameters.

\paragraph{Primal-dual Methods.}
Lagrangian-based methods adopt a primal-dual formulation where constraints are managed via Lagrange multipliers co-optimized with policy parameters~\cite{stooke2020responsive}. However, with multi-constraint, these methods aggregate costs from multi constraint into a scalar, obscuring the directional structure of individual constraint gradients. Depending on alignment, constraint gradients may cancel and suppress effective update, or reinforce and destabilize multiplier dynamics, causing oscillations and training instability. As the number of constraints increases, performance becomes increasingly sensitive to multiplier initialization and tuning can be ``as hard as solving the RL problem itself''~\cite{paternain2019constrained}.

Recent work has attempted to address gradient-level interference in multi-constraint settings by analyzing directional interactions and applying shaping rules to improve update consistency~\cite{yao2024gradient}. While this mitigates some interference issues within the Lagrangian framework, its reliance on Euclidean angle-based similarity may not faithfully reflect the true correlations under the Fisher--Rao geometry~\cite{kakade2001natural}. Furthermore, it does not alleviate the burden of tuning an expanding set of hyperparameters as constraints grow.

A separate line of works handle dual variables by solving a quadratic program (QP) or projection defined by linearized constraints within a KL-based trust region~\cite{achiam2017constrained}. In multi-constraint regimes, feasibility recovery is either not explicitly addressed once iterates leave the local feasible region ~\cite{achiam2017constrained}, or handled via repeated constraint-wise projections that shrink the effective reward step and introduce imbalances as the number of constraints grows~\cite{yang2020projection}. Moreover, the active set is unknown a priori, these updates rely on generic QP solvers. Under linear dependence or near-collinearity among constraint gradients, the resulting QP can be ill-conditioned, leading to solver failures or high-variance solutions where it is difficult to distinguish intrinsic constraint degeneracy from solver-induced numerical errors.

Recent works in risk-constraint RL proposed gradient integration mechanisms to manage infeasibility start and recovery in multi-constraint settings and further extend these ideas toward global convergence guarantees~\cite{kim2023trust,kim2024spectral}. Despite these advances, general applicability across constrained RL is largely limited to gradient integration, while the absence of a geometry-consistent formulation and continued dependence on generic solvers hinder scalability and stability as the number of constraints and interaction grows.

\paragraph{Primal Methods.}
Primal methods can be naturally extended to multi-constraint settings by switching between reward optimization and constraint minimization~\cite{xu2021crpo}. However, as the number of constraints increases, handling one constraint at a time leads to slower convergence and inconsistent updates. Furthermore, substantial correlations between adaptive weights and stochastic gradients are known to bias the update step and cause convergence failure~\cite{zhou2022convergence}. To address such instabilities, recent work proposes a soft-switching strategy~\cite{gu2024balance}, which reduces oscillations and accelerates convergence by removing conflicting components between reward and constraint gradients. While it improves stability in single-constraint scenarios, effectiveness in multi-constraint settings remains unverified.

For a geometric discussion of multi-constraint update pathologies in representative primal--dual and projection-based methods, see Appendix~\ref{sec:appendix:projection-cases}.

\section{Preliminaries}
\label{sec:prelim}

\subsection{Constrained Markov Decision Processes}
\label{sec:cmdp}

A CMDP is defined by
$(\mathcal{S},\mathcal{A},P,r,\{c_k\}_{k=1}^K,\rho,\gamma)$, where
$\mathcal{S}$ and $\mathcal{A}$ are the state and action spaces,
$P(\cdot\mid s,a)$ is the transition probability, $r:\mathcal{S}\times\mathcal{A}\to\mathbb{R}$
is the reward function, and $c_k:\mathcal{S}\times\mathcal{A}\to\mathbb{R}_{\ge 0}$ are
$K$ cost functions that quantify constraint violations~\cite{altman2021constrained}.
The initial state is $s_0\sim\rho$ and the discount factor is $\gamma\in(0,1)$.
A stochastic policy $\pi(a\mid s)$ induces a trajectory distribution
$\mathbb{P}_{\pi}$ over $(s_0,a_0,s_1,\dots)$. Given a policy $\pi$, its discounted state--action distribution is
\begin{equation}
  d^\pi(s,a)
  :=(1-\gamma)\sum_{t=0}^\infty \gamma^t\,\mathbb{P}_\pi(s_t=s,a_t=a).
\end{equation}

Using this distribution, define the discounted return
$J_R(\pi):=\mathbb{E}_{s_0\sim\rho}[\sum_{t=0}^\infty \gamma^t r(s_t,a_t)]$,
and for each cost $k$ define
$J_{C_k}(\pi):=\mathbb{E}_{s_0\sim\rho}[\sum_{t=0}^\infty \gamma^t c_k(s_t,a_t)]$.
The CMDP objective is
\begin{equation}
  \label{eq:cmdp-objective}
  \max_{\pi}\; J_R(\pi)
  \quad \text{s.t.}\quad
  J_{C_k}(\pi)\le b_k,
  \forall k.
\end{equation}

\begin{assumption}[Slater condition]
  \label{assump:slater-cmdp}
  There exists a policy $\bar{\pi}$ in the considered policy class such that
  \begin{equation}
    \label{eq:slater-cmdp}
    J_{C_k}(\bar{\pi}) < b_k,
    \qquad \forall k\in\{1,\dots,K\}.
  \end{equation}
\end{assumption}
This feasibility condition is standard in constrained policy optimization~\cite{achiam2017constrained}.

\subsection{Policy Gradients for Reward and Constraints}
\label{sec:policy-gradients}

Consider a differentiable policy $\pi_\theta$ parameterized by $\theta\in\mathbb{R}^n$.
Let $A_R^\pi(s,a):=Q_R^\pi(s,a)-V_R^\pi(s)$. The policy gradient for the reward objective is
\begin{equation}
  \label{eq:reward-grad}
  \nabla_\theta J_R(\pi_\theta)
  = \tfrac{1}{1-\gamma}\,
  \mathbb{E}_{(s,a)\sim d^{\pi_\theta}}
  \!\left[
    \nabla_\theta \log \pi_\theta(a\mid s)\,
    A_R^{\pi_\theta}(s,a)
  \right].
\end{equation}

Similarly, each cost objective admits a policy gradient obtained by replacing $r$ with $c_k$.
We denote
$g_r:=\nabla_\theta J_R(\pi_\theta)$ and $g_{c_k}:=\nabla_\theta J_{C_k}(\pi_\theta)$.

\subsection{Trust-Region Geometry and Natural Gradient}
\label{sec:fisher-geometry}

Trust-region updates are measured by the average KL divergence under the discounted state visitation distribution,
\begin{equation}
  \label{eq:avg-kl}
  \bar D_{\mathrm{KL}}(\pi_\theta,\pi_{\theta'})
  :=
  \mathbb{E}_{s\sim d^{\pi_\theta}}
  \!\left[
    \mathrm{KL}\!\left(\pi_\theta(\cdot\mid s)\,\|\,\pi_{\theta'}(\cdot\mid s)\right)
  \right].
\end{equation}
For a small displacement $\Delta\theta:=\theta'-\theta$, a second-order Taylor expansion gives
\begin{equation}
  \label{eq:kl-fisher}
  \bar D_{\mathrm{KL}}(\pi_\theta,\pi_{\theta+\Delta\theta})
  =
  \frac{1}{2}\Delta\theta^\top F\,\Delta\theta
  + o(\|\Delta\theta\|^2),
\end{equation}
where $F$ is the Fisher information matrix (FIM),
\begin{equation}
  \label{eq:fisher}
  F
  :=
  \mathbb{E}_{(s,a)\sim d^{\pi_\theta}}
  \!\left[
    \nabla_\theta \log \pi_\theta(a\mid s)\,
    \nabla_\theta \log \pi_\theta(a\mid s)^\top
  \right].
\end{equation}
Equation~\eqref{eq:kl-fisher} induces the Fisher--Rao geometry, where the FIM $F$ defines the inner product
$\langle u,v\rangle_F:=u^\top F v$ and the corresponding norm $\normF{u}:=\sqrt{u^\top F u}$.
Since the FIM $F$ is always positive semi-definite, we assume it to be positive definite ($F\succ 0$) in what follows
so that $F^{-1}$ and the Fisher inner product are well-defined (in practice, this can be enforced by diagonal damping).

Under this geometry, the natural gradient is defined as the steepest-ascent direction with respect to the Fisher--Rao metric:
\begin{equation}
  \label{eq:natural-grad}
  \tilde{g} := F^{-1} g.
\end{equation}
We denote natural gradients for the reward and costs as $\tilde{g}_r:=F^{-1}g_r$ and $\tilde{g}_{c_k}:=F^{-1}g_{c_k}$.

\subsection{Local Multi-Constraint Trust-Region Subproblem}
\label{sec:local-subproblem}
Under a KL-based trust region, Eq.~\eqref{eq:cmdp-objective} is locally approximated by first-order expansions
of $J_R$ and $J_{C_k}$ at $\theta$ together with the Fisher quadratic approximation \eqref{eq:kl-fisher}.
Define the current constraint slacks as $\Delta_k := b_k - J_{C_k}(\pi_\theta)$.
The local trust-region CMDP step is
\begin{equation}
  \label{eq:local-trust-region-cmdp}
  \begin{aligned}
    \max_{\Delta\theta\in\mathbb{R}^n}\quad
    & g_r^\top \Delta\theta\\
    \text{s.t.}\quad
    & g_{c_k}^\top \Delta\theta \le \Delta_k,\quad k=1,\dots,K,\\
    & \tfrac12 \normF{\Delta\theta}^{2} \le \delta.\\
  \end{aligned}
\end{equation}

Problem~\eqref{eq:local-trust-region-cmdp} is a convex program.
When the current iterate is strictly feasible, i.e., $\Delta_k>0$ for all $k$ and $\delta>0$,
the point $\Delta\theta=0$ lies in the interior of the feasible set.
In this case, strong duality holds and the optimum is characterized by the Karush--Kuhn--Tucker (KKT) conditions
and the associated dual variables~\cite{boyd2004convex}.

\begin{lemma}
  \label{lem:solution-subspace}
  Problem \eqref{eq:local-trust-region-cmdp} admits an optimal solution
  $\Delta\theta^\star$ such that
  $\Delta\theta^\star \in \mathrm{span}\{\tilde g_r,\tilde g_{c_1},\dots,\tilde g_{c_K}\}$.
\end{lemma}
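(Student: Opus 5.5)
I will use a Fisher-orthogonal projection argument rather than the KKT conditions, since it gives existence of a minimizer in the subspace without requiring strict feasibility or strong duality. Let $S:=\mathrm{span}\{\tilde g_r,\tilde g_{c_1},\dots,\tilde g_{c_K}\}$ and let $P_F$ denote the orthogonal projection onto $S$ with respect to $\ipF{\cdot}{\cdot}$. This projection is well defined because $F\succ 0$. Every $\Delta\theta\in\mathbb{R}^n$ then decomposes uniquely as $\Delta\theta=u+w$ with $u=P_F\Delta\theta\in S$ and $\ipF{w}{v}=0$ for all $v\in S$.

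The key observation is that the problem data only sees $u$. For any Euclidean gradient $g$ we have $g^\top w = (F\tilde g)^\top w = \ipF{\tilde g}{w}$. For $g\in\{g_r,g_{c_1},\dots,g_{c_K}\}$ the corresponding $\tilde g$ lies in $S$, so this quantity is $0$. Hence $g_r^\top\Delta\theta=g_r^\top u$ and $g_{c_k}^\top\Delta\theta=g_{c_k}^\top u$ for every $k$. By Fisher-Pythagoras, $\normF{\Delta\theta}^2=\normF{u}^2+\normF{w}^2\ge\normF{u}^2$. Consequently, if $\Delta\theta$ is feasible for \eqref{eq:local-trust-region-cmdp}, then $u$ is also feasible: the linear constraints are unchanged and the trust-region constraint only becomes looser. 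Moreover, $u$ attains the same objective value.

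It then remains to show that an optimum exists. Assume the feasible set is nonempty, which the statement implicitly presumes (for instance, $\Delta\theta=0$ is feasible whenever all $\Delta_k\ge 0$). The feasible set is closed, being an intersection of closed halfspaces with a sublevel set of a continuous function. It is also bounded, because $F\succ 0$ makes $\{\tfrac12\normF{\Delta\theta}^2\le\delta\}$ a compact ellipsoid. The objective is continuous, so Weierstrass gives an optimal $\Delta\theta^{\circ}$. Setting $\Delta\theta^\star:=P_F\Delta\theta^{\circ}\in S$, the previous paragraph shows that $\Delta\theta^\star$ is feasible and has the same objective value as $\Delta\theta^{\circ}$, so $\Delta\theta^\star$ is optimal.

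There is no real obstacle here. The only care needed is to project in the Fisher inner product rather than the Euclidean one, because that choice is exactly what makes $g^\top w=\ipF{\tilde g}{w}$ vanish. One could alternatively argue through stationarity of the KKT conditions, $g_r=\lambda F\Delta\theta+\sum_k\mu_k g_{c_k}$, which gives $\Delta\theta=\lambda^{-1}(\tilde g_r-\sum_k\mu_k\tilde g_{c_k})$ when $\lambda>0$. That route, however, needs strong duality and a separate treatment of the degenerate case $\lambda=0$, so the projection argument is preferable.
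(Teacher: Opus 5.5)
Your proposal is correct and follows essentially the same route as the paper's proof. Both use the Fisher-orthogonal decomposition onto $\mathrm{span}\{\tilde g_r,\tilde g_{c_1},\dots,\tilde g_{c_K}\}$, the identity $g^\top\Delta\theta_\perp=\langle\tilde g,\Delta\theta_\perp\rangle_F=0$, and Fisher--Pythagoras. Your compactness/Weierstrass step, which assumes a nonempty feasible set, makes explicit the existence of an optimum that the paper leaves implicit.
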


\begin{proof}\noindent
  See Appendix~\ref{sec:appendix:proof-solution-subspace}.
\end{proof}

\subsection{Symmetric L\"owdin orthogonalization}
\label{sec:prelim:lowdin}

When working with multiple update directions, it is often useful to replace a correlated basis by an orthonormal one in a way that is symmetric and insensitive to the ordering of directions. A classical construction is the symmetric
orthogonalization of L\"owdin~\cite{lowdin1950non}.

Let $S=[s_1,\ldots,s_p]\in\mathbb{R}^{n\times p}$ be a matrix whose columns are linearly independent directions.
Define the Gram matrix \(G:=S^\top S\in\mathbb{R}^{p\times p}\).
Since $\mathrm{rank}(S)=p$, we have $G\succ 0$ and the principal inverse square root $G^{-1/2}$ is well-defined. The symmetric L\"owdin orthogonalization constructs
\begin{equation}
  E:=S\,G^{-1/2}\in\mathbb{R}^{n\times p},
  \qquad
  E^\top E
  =
  G^{-1/2}S^\top S\,G^{-1/2}
  =
  I.
  \label{eq:prelim:lowdin-basis}
\end{equation}

so the columns of $E$ are orthonormal and $\mathrm{span}(E)=\mathrm{span}(S)$.

Unlike Gram--Schmidt, the L\"owdin construction is symmetric: it depends only on $G=S^\top S$ and does not privilege any
particular column order. This order-invariant structure is the property used by LOCU when constructing reduced
coordinates from reward and constraint directions.
\section{Proposed Method}
\label{sec:method}
We focus on the local multi-constraint trust-region subproblem in \eqref{eq:local-trust-region-cmdp}.
By Lemma~\ref{lem:solution-subspace}, it suffices to search over the subspace spanned by the reward and constraint natural-gradient directions.
We construct Fisher-orthonormal reduced coordinates for this subspace and solve an equivalent low-dimensional convex program; the remainder of this section provides the coordinate construction, the scalable handling of many constraints, and the feasibility-recovery variant. Practical implementation details are provided in Appendix~\ref{sec:appendix:practical_impl}.

\subsection{Fisher-orthonormal reduced coordinates via L\"owdin orthogonalization}
\label{sec:method:lowdin}

Motivated by the symmetric L\"owdin orthogonalization introduced in Section~\ref{sec:prelim:lowdin},
we apply the same construction to the reward and constraint natural-gradient directions under the Fisher--Rao metric.

Let $\tilde g_r := F^{-1}g_r$ and $\tilde g_{c_k}:=F^{-1}g_{c_k}$ denote the reward and cost natural gradients.
Fix any set of $m$ representative constraint indices $\{i_1,\ldots,i_m\}\subseteq\{1,\ldots,K\}$,
and form the direction matrix
\begin{equation}
  \label{eq:method:S}
  S :=
  \begin{bmatrix}\tilde g_r & \tilde g_{c_{i_1}} & \cdots & \tilde g_{c_{i_m}}
  \end{bmatrix}
  \in \mathbb{R}^{n\times(m+1)}.
\end{equation}
Define the Gram matrix under the Fisher inner product, which we call the \emph{Fisher Gram matrix} for convenience:
\begin{equation}
  \label{eq:method:O}
  O := S^\top F S \in \mathbb{R}^{(m+1)\times(m+1)} .
\end{equation}
We construct the symmetric L\"owdin basis
\begin{equation}
  \label{eq:method:E}
  E := S O^{-1/2} \in \mathbb{R}^{n\times(m+1)}.
\end{equation}

\begin{lemma}
  \label{lem:lowdin-orth}
  If $O\succ 0$ (or if $O$ is regularized to be positive definite), then
  \begin{equation}
    \label{eq:method:EFEsI}
    E^\top F E = I.
  \end{equation}
\end{lemma}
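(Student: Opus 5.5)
The plan is a direct computation that mirrors the Euclidean identity in \eqref{eq:prelim:lowdin-basis}, with the Euclidean Gram matrix replaced by the Fisher Gram matrix $O$. First I establish that $O^{-1/2}$ exists and is symmetric. Since $O=S^\top F S$ is symmetric, and we assume $O\succ 0$ (or have regularized it to be so), it has a spectral decomposition $O=U\Lambda U^\top$ with $U$ orthogonal and $\Lambda=\mathrm{diag}(\lambda_1,\ldots,\lambda_{m+1})$, all $\lambda_i>0$. The principal inverse square root is then $O^{-1/2}:=U\Lambda^{-1/2}U^\top$. This matrix is symmetric, positive definite, and satisfies $O^{-1/2}\,O\,O^{-1/2}=U\Lambda^{-1/2}\Lambda\Lambda^{-1/2}U^\top=I$.

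Second, I substitute the definition \eqref{eq:method:E} and use the symmetry $(O^{-1/2})^\top=O^{-1/2}$:
\[
E^\top F E=(O^{-1/2})^\top S^\top F S\,O^{-1/2}=O^{-1/2}\,O\,O^{-1/2}=I .
\]
This is \eqref{eq:method:EFEsI}.

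For the regularized case, I interpret the statement as replacing $O$ by $O_\epsilon:=O+\epsilon I\succ 0$ in both the definition of $E$ and the identity. The same computation then gives $E^\top F E=O_\epsilon^{-1/2}\,O\,O_\epsilon^{-1/2}$. This equals $I$ only when $O_\epsilon=O$; otherwise it equals $I-\epsilon O_\epsilon^{-1}$, which is within $O(\epsilon)$ of the identity.

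So the one point that needs care is making precise what ``regularized'' means. I would state that the exact identity holds when $E$ is built from the positive definite matrix that actually equals $S^\top F S$. That covers the case where the regularization is applied to $F$ itself (damping $F\to F+\epsilon I$ before forming both $\tilde g$ and $O$), or to the columns of $S$ (for example, after the compression step guarantees full column rank). Otherwise only approximate orthonormality holds.

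Apart from that, the argument is routine. I also note that $\mathrm{span}(E)=\mathrm{span}(S)$, since $O^{-1/2}$ is invertible. This is what makes $E$ a Fisher-orthonormal basis of the subspace from Lemma~\ref{lem:solution-subspace}.
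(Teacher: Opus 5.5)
Your computation is the paper's proof: substitute $E=SO^{-1/2}$, use the symmetry of $O^{-1/2}$, and reduce $O^{-1/2}\,S^\top F S\,O^{-1/2}=O^{-1/2}\,O\,O^{-1/2}=I$. Your caveat is also correct, and the paper's proof passes over it: if $E$ is built from $O_\epsilon=O+\epsilon I$, then $E^\top F E=I-\epsilon\,O_\epsilon^{-1}$, so the exact identity holds only when the regularized matrix actually equals $S^\top F S$ (for example, when the damped $F$ is used consistently throughout).
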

\noindent Proof is in Appendix~\ref{sec:appendix:lowdin-proof}.

Lemma~\ref{lem:lowdin-orth} implies that the columns of $E$ form an orthonormal basis under the Fisher inner product. We restrict the update to $\mathrm{span}(S)$ and write $\Delta\theta = Ew$ with $w\in\mathbb{R}^{m+1}$. Let $R:=\sqrt{2\delta}$. Since $E^\top F E=I$ by \eqref{eq:method:EFEsI}, the quadratic KL trust region reduces to
\begin{equation}
  \label{eq:method:trustball}
  \tfrac12 \Delta\theta^\top F \Delta\theta
  =
  \tfrac12 w^\top w
  =
  \tfrac12 \|w\|_2^2,
\end{equation}
and hence becomes the Euclidean ball $\|w\|_2\le R$ in the reduced coordinates.

Define the reduced constraint normals and the reduced-space objective coefficients in the L\"owdin basis by
\begin{equation}
  \label{eq:method:reduced-u}
  u := E^\top g_r,
  \qquad
  C_k^\top := E^\top g_{c_k}\in\mathbb{R}^{m+1}.
\end{equation}
Here, we use the symbol $u$ as the objective coefficient vector of the reduced program.
In the feasible phase, $u=E^\top g_r$ as above; in the feasibility-recovery phase
(Section~\ref{sec:method:recovery}), we simply replace $u$ by the recovery direction constructed directly in the same
reduced space.

Let $\mathcal{K}\subseteq\{1,\dots,K\}$ be the set of constraints enforced in the current update,
and collect the corresponding rows into
$C\in\mathbb{R}^{|\mathcal{K}|\times(m+1)}$ and $\Delta\in\mathbb{R}^{|\mathcal{K}|}$ with
$\Delta_k=b_k-J_{C_k}(\pi_\theta)$.
Then the reduced program is
\begin{equation}
  \label{eq:method:reduced-qp}
  \max_{w\in\mathbb{R}^{m+1}}\; u^\top w
  \quad \text{s.t.}\quad
  \|w\|_2 \le R,\;\; Cw \le \Delta.
\end{equation}

\begin{proposition}
  \label{prop:coeff-gram}
  Let $e_j$ be the $j$th canonical basis vector in $\mathbb{R}^{m+1}$, and let the columns of $S$ be ordered as in
  \eqref{eq:method:S}. Then
  \begin{equation}
    \label{eq:method:coeff-gram}
    E^\top g_r = O^{1/2} e_1,
    \quad
    E^\top g_{c_{i_j}} = O^{1/2} e_{j+1},
    \quad j=1,\dots,m.
  \end{equation}
\end{proposition}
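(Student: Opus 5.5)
The plan is to compute $E^\top g$ directly from the definitions \eqref{eq:method:S}--\eqref{eq:method:E}. The key identity is that every raw gradient equals $F$ times its natural gradient. Since $\tilde g_r=F^{-1}g_r$, we have $g_r=F\tilde g_r$. Because $\tilde g_r$ is the first column of $S$, this means $g_r=FSe_1$. In the same way, $g_{c_{i_j}}=F\tilde g_{c_{i_j}}=FSe_{j+1}$ for $j=1,\dots,m$.

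Substituting these into the definition of $E$ and using the symmetry of $O^{-1/2}$ gives
\begin{equation*}
E^\top g_r = (SO^{-1/2})^\top F S e_1 = O^{-1/2}\,(S^\top F S)\,e_1 = O^{-1/2} O\, e_1 .
\end{equation*}
The same computation with $e_{j+1}$ in place of $e_1$ handles the constraint columns.

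The remaining step is to justify $O^{-1/2}O=O^{1/2}$. Because $F\succ 0$, $O=S^\top F S$ is symmetric positive semidefinite, and under the hypothesis of Lemma~\ref{lem:lowdin-orth} it is positive definite. It therefore has a spectral decomposition $O=Q\Lambda Q^\top$ with $\Lambda\succ 0$. The principal roots are $O^{\pm1/2}=Q\Lambda^{\pm1/2}Q^\top$, and these are symmetric and commute with $O$. Hence $O^{-1/2}O=Q\Lambda^{1/2}Q^\top=O^{1/2}$.

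The main point to handle with care is the regularized case, where the L\"owdin basis is built from $O_\epsilon=O+\epsilon I$ rather than $O$. There the same computation yields $E^\top g_r=O_\epsilon^{-1/2}O e_1$, which is not exactly $O_\epsilon^{1/2}e_1$. I would therefore state the identity for the unregularized $O\succ 0$ and remark that the regularized version holds up to an $O(\epsilon)$ correction, namely $O_\epsilon^{-1/2}(O_\epsilon-\epsilon I)e_1$. Beyond that, the argument is just the symmetry of $O^{-1/2}$ together with the definition of natural gradients.
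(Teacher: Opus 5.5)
Your proof is correct and is essentially the paper's own argument: writing $g_r=FSe_1$ and $g_{c_{i_j}}=FSe_{j+1}$ and using $(O^{-1/2})^\top=O^{-1/2}$ gives $E^\top FSe_\ell=O^{-1/2}Oe_\ell=O^{1/2}e_\ell$. The only difference is that you spell out the symmetry and commutation of the principal square roots, which the paper uses without comment.

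Your caveat about the regularized basis $S(O+\epsilon I)^{-1/2}$ is a valid point that the paper's proof does not address. One refinement: the error term $\epsilon(O+\epsilon I)^{-1/2}e_1$ has norm at most $\epsilon/\sqrt{\lambda_{\min}(O)+\epsilon}$. So it is $O(\epsilon)$ only while $O$ stays well conditioned, and it degrades to $O(\sqrt{\epsilon})$ as $O$ approaches singularity.
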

\noindent Proof is in Appendix~\ref{sec:appendix:proof-coeff-gram}.

\begin{theorem}
  \label{thm:reduced-equivalence}
  Assume $O\succ 0$ (or $O$ is regularized so that $O\succ 0$).
  Fix any enforced constraint index set $\mathcal{K}$ and any representative set $\{i_1,\ldots,i_m\}$ used to form
  $S,O,E$ as in \eqref{eq:method:S}--\eqref{eq:method:E}.
  Then the change of variables $\Delta\theta=Ew$ gives a one-to-one correspondence between
  $\mathrm{span}(S)$ and $\mathbb{R}^{m+1}$ and transforms the trust-region CMDP subproblem
  \eqref{eq:local-trust-region-cmdp} restricted to $\Delta\theta\in\mathrm{span}(S)$ and constraints $k\in\mathcal{K}$
  into the reduced program \eqref{eq:method:reduced-qp}.
  In particular, any optimizer $w^\star$ of \eqref{eq:method:reduced-qp} yields a valid update
  $\Delta\theta^\star=Ew^\star$ in the original parameter space.
\end{theorem}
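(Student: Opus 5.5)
The plan is to check three things in order: that $w\mapsto Ew$ is a bijection onto $\mathrm{span}(S)$, that the objective and each constraint of \eqref{eq:local-trust-region-cmdp} pull back to the corresponding pieces of \eqref{eq:method:reduced-qp}, and that feasible sets and objective values therefore correspond exactly, so optimizers correspond as well.

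\emph{Bijectivity.} Since $O\succ 0$, the principal inverse square root $O^{-1/2}$ exists and is invertible. Hence $E=SO^{-1/2}$ has the same column space as $S$, so $\mathrm{span}(E)=\mathrm{span}(S)$. Injectivity follows from Lemma~\ref{lem:lowdin-orth}: if $Ew=0$ then $w=E^\top F E w=0$. The map $w\mapsto Ew$ is therefore a linear bijection $\mathbb{R}^{m+1}\to\mathrm{span}(S)$. Its inverse is explicit: $\Delta\theta\mapsto E^\top F\Delta\theta$, because for $\Delta\theta=Ew$ we get $E^\top F E w=w$. Note that $O\succ 0$ forces $\mathrm{rank}(S)=m+1$, since $O=S^\top F S$ with $F\succ 0$. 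In the regularized case $O$ is replaced by $O+\epsilon I$. Here I will remark that $E^\top F E=I$ is exactly what Lemma~\ref{lem:lowdin-orth} supplies, and bijectivity onto $\mathrm{span}(S)$ still holds provided $S$ has full column rank. If it does not, $E$ still maps onto $\mathrm{span}(S)$, and injectivity follows again from $E^\top F E=I$. This degenerate case is the one point that needs care.

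\emph{Transforming the pieces.} Substitute $\Delta\theta=Ew$.
\begin{itemize}
\item The objective becomes $g_r^\top Ew=(E^\top g_r)^\top w=u^\top w$, by the definition \eqref{eq:method:reduced-u}.
\item Each enforced constraint becomes $g_{c_k}^\top E w=C_k w\le\Delta_k$, $k\in\mathcal{K}$. Stacking these rows gives $Cw\le\Delta$.
\item The trust region becomes $\tfrac12 w^\top E^\top F E w=\tfrac12\|w\|_2^2\le\delta$, by \eqref{eq:method:trustball}. This is equivalent to $\|w\|_2\le R$ with $R=\sqrt{2\delta}$.
\end{itemize}
Proposition~\ref{prop:coeff-gram} is not needed for the equivalence itself. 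It only gives the closed forms of $u$ and $C_k$ for representative constraints.

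\emph{Conclusion.} Under the bijection, the feasible set of the restricted problem is the image of the feasible set of \eqref{eq:method:reduced-qp}, and objective values agree pointwise. Hence $w^\star$ is optimal for \eqref{eq:method:reduced-qp} if and only if $Ew^\star$ is optimal for the restricted problem. In particular, $\Delta\theta^\star=Ew^\star$ satisfies all enforced linearized constraints and the KL trust region in parameter space, so it is a valid update.

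The only step needing care is bijectivity when $O$ is regularized rather than exactly positive definite. I would handle it by interpreting the statement under the full-rank assumption implied by $O\succ 0$ and deriving injectivity from $E^\top F E=I$.
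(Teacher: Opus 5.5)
Your main line is correct and is essentially the paper's proof. Invertibility of $O^{-1/2}$ gives $\mathrm{span}(E)=\mathrm{span}(S)$ and a bijection $w\mapsto Ew$. The objective, the enforced constraints and the trust region are then rewritten term by term using $E^\top FE=I$. Two of your additions are small but useful: the explicit inverse $\Delta\theta\mapsto E^\top F\Delta\theta$, and the remark that $S^\top FS\succ 0$ forces $\mathrm{rank}(S)=m+1$. The paper's step ``hence $E$ has full column rank'' uses the latter tacitly.

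Your aside on the regularized case is wrong and should be dropped. If $E=SO_\epsilon^{-1/2}$ with $O_\epsilon:=S^\top FS+\epsilon I$, then $E^\top FE=O_\epsilon^{-1/2}(O_\epsilon-\epsilon I)O_\epsilon^{-1/2}=I-\epsilon O_\epsilon^{-1}\neq I$. So Lemma~\ref{lem:lowdin-orth} cannot be invoked there, despite its parenthetical. When $S$ is rank-deficient this matrix is even singular, and $E$ inherits the kernel of $S$: any $w=O_\epsilon^{1/2}x$ with $Sx=0$, $x\neq 0$, gives $Ew=0$. Injectivity therefore does not ``follow again from $E^\top FE=I$'', and there is no bijection with $\mathbb{R}^{m+1}$ at all. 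For full-rank $S$ the bijection survives. However, since $I-\epsilon O_\epsilon^{-1}\preceq I$, the ball $\|w\|_2\le R$ is only an inner approximation of the KL trust region. An optimizer $w^\star$ still yields a valid, feasible update, but the exact equivalence of the two programs fails. Your closing fallback, reading the theorem under exact $O=S^\top FS\succ 0$, is the right one, and it is also all that the paper's proof actually establishes.
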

\noindent Proof is in Appendix~\ref{sec:appendix:proof-reduced-equivalence}.

Unlike Gram–Schmidt orthogonalization, which propagates and accumulates perturbations through a sequential process, Löwdin orthogonalization in Proposition~\ref{prop:coeff-gram} distributes perturbations symmetrically across the eigenspace. We analyze these perturbation structures in Appendix~\ref{sec:appendix:lowdin-gs-noise}.

\subsection{Constraint screening and basis compression}
\label{sec:method:screening}
However, directly working with all constraint directions can be expensive at scale and can be numerically unstable when constraint directions are strongly aligned in Fisher--Rao geometry. Therefore, LOCU screens locally non-binding/degenerate constraints and compresses near-collinear directions for basis construction, while still enforcing every retained constraint in the reduced optimization.

With Eqs.~\eqref{eq:method:S} and~\eqref{eq:method:O}, we start from full direction with reward and K constraints $S_{\mathrm{full}}\in\mathbb{R}^{n\times(K+1)}$ and its Fisher Gram matrix
$O_{\mathrm{full}}\in\mathbb{R}^{(K+1)\times(K+1)}$. For each constraint $k$, define the Fisher norm and normalized slack
\begin{equation}
  \label{eq:method:screen-metrics}
  \nu_k := \sqrt{(O_{\mathrm{full}})_{k+1,k+1}},
  \qquad
  \mu_k := \Delta_k/\nu_k,
\end{equation}
and for each pair $(i,j)$ define the Fisher cosine similarity among constraint natural gradients
\begin{equation}
  \label{eq:method:screen-cos}
  \varrho_{ij}
  :=
  \frac{(O_{\mathrm{full}})_{i+1,j+1}}{\nu_i\nu_j}.
\end{equation}
Equivalently, with Fisher-unit directions $u_k:=\tilde g_{c_k}/\nu_k$ (so that $\|u_k\|_F=1$), we have
$\varrho_{ij}=\langle u_i,u_j\rangle_F$ and the $k$th linearized constraint is
$\langle u_k,\Delta\theta\rangle_F \le \mu_k$.

\subsubsection{Constraint screening}
We perform worst-case Fisher screening and exclude non-binding and near-degenerate constraints.

\begin{lemma}
  \label{lem:bound-gram}
  For any $\Delta\theta$ satisfying $\|\Delta\theta\|_F \le R$,
  \begin{equation}
    \label{eq:method:bound-gram}
    g_{c_k}^\top \Delta\theta \le \nu_k R.
  \end{equation}
\end{lemma}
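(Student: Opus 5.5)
The plan is to rewrite the Euclidean pairing $g_{c_k}^\top \Delta\theta$ as a Fisher inner product and then apply Cauchy--Schwarz in the Fisher geometry. Since $F\succ 0$ is assumed (Section~\ref{sec:fisher-geometry}), we have $g_{c_k} = F\tilde g_{c_k}$. Hence
\begin{equation*}
g_{c_k}^\top \Delta\theta = \tilde g_{c_k}^\top F \Delta\theta = \ipF{\tilde g_{c_k}}{\Delta\theta}.
\end{equation*}
This step only uses the symmetry of $F$, which holds because $F$ is an expectation of outer products as in \eqref{eq:fisher}.

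Next I apply the Cauchy--Schwarz inequality for the inner product $\langle\cdot,\cdot\rangle_F$. This is a genuine inner product because $F\succ 0$. It gives
\begin{equation*}
\ipF{\tilde g_{c_k}}{\Delta\theta} \le \normF{\tilde g_{c_k}}\,\normF{\Delta\theta} \le \normF{\tilde g_{c_k}}\, R,
\end{equation*}
where the last step uses the hypothesis $\normF{\Delta\theta}\le R$.

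It then remains to identify $\normF{\tilde g_{c_k}}$ with $\nu_k$. By \eqref{eq:method:S} and \eqref{eq:method:O} applied to the full direction matrix $S_{\mathrm{full}}$, column $k+1$ is $\tilde g_{c_k}$. Therefore
\begin{equation*}
(O_{\mathrm{full}})_{k+1,k+1} = \tilde g_{c_k}^\top F \tilde g_{c_k} = \normF{\tilde g_{c_k}}^2,
\end{equation*}
so $\nu_k = \normF{\tilde g_{c_k}}$ by \eqref{eq:method:screen-metrics}. Substituting this into the previous display yields \eqref{eq:method:bound-gram}. Equivalently, $\nu_k^2 = g_{c_k}^\top F^{-1} g_{c_k}$, which is the dual norm of $g_{c_k}$.

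There is no real obstacle here. The only point needing care is the indexing convention for $O_{\mathrm{full}}$: the reward occupies column $1$, so constraint $k$ sits in column $k+1$. One should also note that if $O$ or $F$ is damped in practice, the bound holds with respect to the damped metric actually used.
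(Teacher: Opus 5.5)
Your proposal is correct and matches the paper's proof step for step. The paper also writes $g_{c_k}^\top\Delta\theta=\tilde g_{c_k}^\top F\Delta\theta=\langle \tilde g_{c_k},\Delta\theta\rangle_F$ via $g_{c_k}=F\tilde g_{c_k}$, applies Cauchy--Schwarz in the Fisher inner product, and identifies $\|\tilde g_{c_k}\|_F^2=(O_{\mathrm{full}})_{k+1,k+1}=\nu_k^2$.
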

\noindent Proof is in Appendix~\ref{sec:appendix:proof-bound-gram}.

\begin{proposition}
  \label{prop:nonbinding-gram}
  If $\Delta_k \ge \nu_k R$, then constraint $k$ cannot be active inside the trust region
  $\tfrac12\Delta\theta^\top F\Delta\theta\le\delta$ and can be excluded from $\mathcal{K}$ without changing feasibility
  within that trust region.
\end{proposition}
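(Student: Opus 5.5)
The plan is to reduce the statement directly to Lemma~\ref{lem:bound-gram}. First translate the trust region: with $R=\sqrt{2\delta}$, the condition $\tfrac12\Delta\theta^\top F\Delta\theta\le\delta$ is exactly $\|\Delta\theta\|_F\le R$. Lemma~\ref{lem:bound-gram} then gives $g_{c_k}^\top\Delta\theta\le\nu_k R$ for every such $\Delta\theta$. (The bound comes from Cauchy--Schwarz in the Fisher inner product: $g_{c_k}^\top\Delta\theta=\langle\tilde g_{c_k},\Delta\theta\rangle_F\le\|\tilde g_{c_k}\|_F\|\Delta\theta\|_F$, and $\|\tilde g_{c_k}\|_F^2=g_{c_k}^\top F^{-1}g_{c_k}=(O_{\mathrm{full}})_{k+1,k+1}=\nu_k^2$.) Combining this with the hypothesis $\Delta_k\ge\nu_k R$ yields $g_{c_k}^\top\Delta\theta\le\Delta_k$ for every point of the trust region. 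So the $k$th linearized constraint is implied by the trust-region constraint alone.

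Next I derive the feasibility claim. Let $\mathcal{F}_{\mathcal{K}}$ be the feasible set of \eqref{eq:local-trust-region-cmdp} (or of its restriction to $\mathrm{span}(S)$) with enforced set $\mathcal{K}$, intersected with the ball. Then
\begin{equation*}
\mathcal{F}_{\mathcal{K}}=\mathcal{F}_{\mathcal{K}\setminus\{k\}}.
\end{equation*}
The inclusion $\subseteq$ is trivial. The inclusion $\supseteq$ holds because the previous step shows every point of the ball already satisfies constraint $k$. Since the two feasible sets coincide, the optimal values and optimizers coincide as well, so excluding $k$ changes nothing.

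For the claim that constraint $k$ ``cannot be active,'' I split two cases. If $\Delta_k>\nu_k R$, the inequality is strict everywhere in the ball, so constraint $k$ is never tight, and by complementary slackness in the KKT system its multiplier is zero. If $\Delta_k=\nu_k R$, equality can hold only at the single boundary point $\Delta\theta=R\,\tilde g_{c_k}/\nu_k$, which is the equality case of Cauchy--Schwarz. Even there, the constraint does not cut the feasible set: it is weakly active at most, and removing it leaves the feasible set unchanged by the set equality above.

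I expect the main obstacle to be this boundary equality case, in particular interpreting ``active'' there. I would handle it by stating the result in terms of the feasible set being unchanged, which is the substantive claim. I would also note that we may choose multipliers with $\lambda_k=0$, since the ball constraint alone supplies the same supporting halfspace.
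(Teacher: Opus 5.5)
Your proposal is correct and follows the paper's proof essentially verbatim. You identify the trust region with $\|\Delta\theta\|_F\le R$, apply Lemma~\ref{lem:bound-gram} (Fisher Cauchy--Schwarz) to get $g_{c_k}^\top\Delta\theta\le\nu_kR\le\Delta_k$ throughout the ball, and conclude that constraint $k$ never cuts the feasible set. Your extra treatment of the equality case $\Delta_k=\nu_kR$ (tightness possible only at $R\,\tilde g_{c_k}/\nu_k$ when $\nu_k>0$, with $\lambda_k=0$ still admissible) is a sound sharpening of the word ``active'', which the paper leaves implicit.
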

\noindent Proof is in Appendix~\ref{sec:appendix:proof-nonbinding-gram}.

In addition, constraints with $\nu_k \le \varepsilon_{\mathrm{nz}}$ are excluded from $\mathcal{K}$ to avoid unstable normalization in \eqref{eq:method:screen-metrics}, since $\tilde g_{c_k}$ is nearly degenerate in Fisher--Rao geometry.

\subsubsection{Basis clustering}

We compress near-collinear constraint directions only to stabilize basis construction; all constraints in $\mathcal{K}$ remain enforced in the reduced program.

We first build an undirected graph on $\mathcal{K}$ by connecting each pair $(i,j)$ that satisfies $|\varrho_{ij}| \ge 1-\varepsilon_{\mathrm{dup}}$.
Let $\{\mathcal{G}_1,\ldots,\mathcal{G}_m\}$ be the connected components of this graph.
Then, for each group $\mathcal{G}\ell$, we select a representative index by $i\ell \in \arg\min_{k\in\mathcal{G}_\ell} \mu_k$.
If all $\mu_k$ in the group are nonnegative, this rule selects the tightest constraint in Fisher-normalized units.
If some $\mu_k$ are negative, it selects the most violated constraint in the same units.
The reduced basis matrix $S$ in \eqref{eq:method:S} is then formed using $\{i_1,\ldots,i_m\}$.

\begin{lemma}
  \label{lem:near-collinear}
  Let $u_i,u_j$ be Fisher-unit vectors with $\langle u_i,u_j\rangle_F=\rho\in[-1,1]$.
  Then for any $\Delta\theta$ with $\|\Delta\theta\|_F\le R$,
  \begin{equation}
    \label{eq:method:near-collinear-bound}
    \min_{s\in\{+1,-1\}}
    \big|\langle u_i,\Delta\theta\rangle_F - s\,\langle u_j,\Delta\theta\rangle_F\big|
    \le
    \sqrt{2(1-|\rho|)}\,R.
  \end{equation}
  In particular, if $|\rho|\ge 1-\varepsilon_{\mathrm{dup}}$, the discrepancy is at most $\sqrt{2\varepsilon_{\mathrm{dup}}}\,R$.
  One may take $s=+1$ when $\rho\ge 0$ and $s=-1$ when $\rho<0$.
\end{lemma}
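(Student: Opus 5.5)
The plan is to reduce the statement to a single Cauchy--Schwarz estimate in the Fisher inner product. By bilinearity of $\langle\cdot,\cdot\rangle_F$, for either sign $s\in\{+1,-1\}$ we have
\begin{equation*}
\langle u_i,\Delta\theta\rangle_F - s\,\langle u_j,\Delta\theta\rangle_F = \langle u_i - s\,u_j,\Delta\theta\rangle_F .
\end{equation*}
Since $F\succ 0$, the Fisher inner product is a genuine inner product, so Cauchy--Schwarz gives
\begin{equation*}
\big|\langle u_i - s\,u_j,\Delta\theta\rangle_F\big| \le \normF{u_i - s\,u_j}\,\normF{\Delta\theta} \le \normF{u_i - s\,u_j}\,R .
\end{equation*}
This is the same mechanism already used in Lemma~\ref{lem:bound-gram}.

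Next I will compute the difference norm exactly using $\normF{u_i}=\normF{u_j}=1$ and $\langle u_i,u_j\rangle_F=\rho$:
\begin{equation*}
\normF{u_i - s\,u_j}^2 = 1 - 2s\rho + s^2 = 2(1 - s\rho).
\end{equation*}
Choosing $s=\mathrm{sign}(\rho)$, that is $s=+1$ when $\rho\ge 0$ and $s=-1$ when $\rho<0$, gives $s\rho=|\rho|$. Hence $\normF{u_i-s\,u_j}=\sqrt{2(1-|\rho|)}$, which is well defined because $|\rho|\le 1$. The minimum over $s$ is at most the value at this particular $s$, which yields \eqref{eq:method:near-collinear-bound}. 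It also confirms the final remark of the statement about which sign may be taken.

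For the ``in particular'' clause, $|\rho|\ge 1-\varepsilon_{\mathrm{dup}}$ gives $1-|\rho|\le\varepsilon_{\mathrm{dup}}$. Monotonicity of the square root then bounds the discrepancy by $\sqrt{2\varepsilon_{\mathrm{dup}}}\,R$.

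I do not expect a real obstacle. The only points needing care are that the norm constraint is stated as $\normF{\Delta\theta}\le R$, so no factor from the $\delta$ parametrization enters, and that the sign choice must be fixed before applying Cauchy--Schwarz so that the cross term is exactly $-2|\rho|$.
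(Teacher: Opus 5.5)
Your proposal is correct and follows the paper's proof essentially step for step. The paper likewise computes $\normF{u_i - s u_j}^2 = 2(1-s\rho)$, picks $s$ by the sign of $\rho$ so that $s\rho=|\rho|$, and bounds $|\langle u_i - s u_j,\Delta\theta\rangle_F|$ via Cauchy--Schwarz and $\normF{\Delta\theta}\le R$ before taking the minimum over $s$. Applying Cauchy--Schwarz before evaluating the difference norm, as you do, is immaterial, and your explicit treatment of the $\varepsilon_{\mathrm{dup}}$ clause fills in a step the paper leaves implicit.
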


\noindent Proof is in Appendix~\ref{sec:appendix:proof-near-collinear}.

Lemma~\ref{lem:near-collinear} formalizes why compressing near-collinear directions in the basis construction has negligible local effect:
within the trust region, constraints in the same group respond similarly in Fisher-normalized units up to a sign.
Importantly, all constraints in $\mathcal{K}$ remain explicitly enforced in \eqref{eq:method:reduced-qp} via the rows of $C$.

\subsection{Solving the reduced program}
\label{sec:method:solve}

We solve \eqref{eq:method:reduced-qp} in the reduced coordinates.
In the feasible case (all slacks in $\Delta$ are nonnegative), we set $u=E^\top g_r$.
If the halfspaces are inactive, the maximizer over the ball has the closed form
\begin{equation}
  \label{eq:method:w0}
  w_0 = R \frac{u}{\|u\|_2},
\end{equation}
whenever $\|u\|_2\neq 0$. If $Cw_0\le \Delta$, we accept $w^\star=w_0$. Otherwise, we use KKT conditions.
There exist $\lambda\in\mathbb{R}^{|\mathcal{K}|}$ and $\beta\in\mathbb{R}$ such that
\begin{equation}
  \label{eq:method:kkt-stationarity}
  -u + C^\top \lambda + 2\beta w = 0,
  \qquad
  \lambda \ge 0,
  \qquad
  \beta \ge 0,
\end{equation}
together with primal feasibility, complementary slackness, and $\|w\|_2\le R$.
For an active set $\mathcal{J}\subseteq\{1,\dots,|\mathcal{K}|\}$ with $\beta>0$, stationarity yields
\begin{equation}
  \label{eq:method:w-beta}
  w = \frac{1}{2\beta}\bigl(u - C_{\mathcal{J}}^\top \lambda_{\mathcal{J}}\bigr),
\end{equation}
and enforcing $C_{\mathcal{J}}w=\Delta_{\mathcal{J}}$ gives the reduced linear system
\begin{equation}
  \label{eq:method:lambda-beta}
  C_{\mathcal{J}} C_{\mathcal{J}}^\top \lambda_{\mathcal{J}}
  =
  C_{\mathcal{J}} u - 2\beta \Delta_{\mathcal{J}}.
\end{equation}
All matrices in \eqref{eq:method:w-beta}--\eqref{eq:method:lambda-beta} have dimension at most $(m+1)$ and are
independent of the parameter dimension $n$.
The active-set routine and the one-dimensional search for $\beta$ are given in Appendix~\ref{sec:appendix:kkt}.

\subsection{Feasibility recovery via Pareto descent in the orthonormal space}
\label{sec:method:recovery}

When some constraints are violated ($\Delta_k<0$ for at least one $k\in\mathcal{K}$), we use a recovery objective
that prioritizes feasibility improvement in the linearized model while preventing already satisfied constraints
from deteriorating in the same local step.

Let $\mathcal{V} := \{ i \in \{1,\dots,|\mathcal{K}|\} \mid \Delta_i < 0 \}$ and $\mathcal{H} := \{ i \in \{1,\dots,|\mathcal{K}|\} \mid \Delta_i \ge 0 \}$, with indices defined according to the row ordering of $C$ induced by $\mathcal{K}$. 
We compute the simplex weights by solving the following problem:
\begin{equation}\label{eq:method:pareto-qp}
\min_{\alpha\in\mathbb{R}^{|\mathcal{V}|}}\ \|C_{\mathcal{V}}^\top\alpha\|_2^2\quad\text{s.t.}\quad \alpha\ge 0,\ \mathbf{1}^\top\alpha=1.
\end{equation}
and define the Pareto descent direction
\begin{equation}
  \label{eq:method:pareto-dir}
  \bar g_p := - C_{\mathcal{V}}^\top \alpha.
\end{equation}

\begin{lemma}
  \label{lem:pareto-descent}
  If $\bar g_p \neq 0$ defined by \eqref{eq:method:pareto-qp}--\eqref{eq:method:pareto-dir}, then for every $i\in\mathcal{V}$,
  \begin{equation}
    \label{eq:method:pareto-descent-ineq}
    C_i^\top \bar g_p < 0.
  \end{equation}
\end{lemma}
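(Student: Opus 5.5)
This is the standard min-norm-element argument (as in MGDA): use the first-order optimality condition of the convex problem \eqref{eq:method:pareto-qp} over the simplex. Write $v(\alpha):=C_{\mathcal{V}}^\top\alpha$, so $C_{\mathcal{V}}^\top$ has columns $C_i^\top$, $i\in\mathcal{V}$. Let $\alpha^\star$ be a minimizer; it exists because the objective is continuous and the simplex is compact. Set $v^\star:=v(\alpha^\star)=-\bar g_p\neq 0$. The claim $C_i^\top\bar g_p<0$ is equivalent to $\langle C_i^\top, v^\star\rangle>0$ for every $i\in\mathcal{V}$. I will prove the stronger bound $\langle C_i^\top,v^\star\rangle\ge\|v^\star\|_2^2>0$.

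The key step is a one-parameter perturbation toward a vertex. Fix $i\in\mathcal{V}$ and let $e_i$ be the corresponding simplex vertex. For $t\in[0,1]$, the point $\alpha_t:=(1-t)\alpha^\star+t e_i$ stays in the simplex, and
\[
v(\alpha_t)=v^\star+t\,(C_i^\top-v^\star).
\]
Hence
\[
\phi(t):=\|v(\alpha_t)\|_2^2=\|v^\star\|_2^2+2t\langle v^\star,C_i^\top-v^\star\rangle+t^2\|C_i^\top-v^\star\|_2^2 .
\]
Since $\alpha^\star$ is optimal, $\phi(t)\ge\phi(0)$ for all $t\in[0,1]$. Dividing by $t$ and letting $t\downarrow 0$ gives $\phi'(0)\ge 0$, that is,
\[
\langle v^\star,C_i^\top\rangle\ge\|v^\star\|_2^2 .
\]

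To conclude, the hypothesis $\bar g_p\neq0$ gives $\|v^\star\|_2^2>0$. Therefore $C_i^\top\bar g_p=-\langle C_i^\top,v^\star\rangle\le-\|\bar g_p\|_2^2<0$ for every $i\in\mathcal{V}$, which is \eqref{eq:method:pareto-descent-ineq}.

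None of the steps is a real obstacle. The only points needing care are existence of the minimizer and that the conclusion does not depend on which minimizer is chosen. On the second point, the minimizing $v^\star$ is in fact unique, since it is the projection of the origin onto the convex hull of $\{C_i^\top\}_{i\in\mathcal{V}}$. This also identifies $\bar g_p$ itself uniquely. The case $\bar g_p=0$, where the origin lies in the convex hull, is excluded by hypothesis, and no strict descent is claimed there.
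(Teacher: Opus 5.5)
Your proof is correct and is essentially the paper's argument. The paper treats $-\bar g_p$ as the Euclidean projection of the origin onto $\mathrm{conv}\{C_i\}_{i\in\mathcal{V}}$ and substitutes $g=C_i$ into the projection variational inequality to get $C_i^\top g^\star\ge\|g^\star\|_2^2$; you derive that same inequality directly from the one-sided derivative along the segment toward the vertex $e_i$, and both arguments conclude $C_i^\top\bar g_p\le-\|\bar g_p\|_2^2<0$.
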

\noindent Proof is in Appendix~\ref{sec:appendix:proof-pareto-descent}.

We set the recovery objective to $u=\bar g_p$ and solve

\begin{equation}
\label{eq:method:recovery-qp}
\max_{w\in\mathbb{R}^{m+1}}\ \bar g_p^\top w
\quad\text{s.t.}\quad
\|w\|_2 \le R,\quad C_{\mathcal{H}}w \le \Delta_{\mathcal{H}}.
\end{equation}

The point $w=0$ is feasible since $\Delta_{\mathcal{H}}\ge 0$, so \eqref{eq:method:recovery-qp} cannot worsen
already satisfied constraints in the linearized local model.

\paragraph{Computational complexity.}
The proposed method starts by computing natural gradients for the reward and constraints.
Each gradient computation requires one backpropagation over the entire batch.
Computing a natural gradient with conjugate gradient (CG) further requires $T_{\mathrm{cg}}$ iterations per gradient, each consisting of one or two backpropagations depending on the implementation.
Therefore, the computational cost can be written as $\mathcal{O}\big((K+1) \cdot T_{\mathrm{cg}} \cdot \texttt{BackProp}\big)$.

After obtaining the natural gradients, the Fisher Gram matrix is constructed from $K^2$ pairwise inner products between them, resulting in a cost of $\mathcal{O}(K^2 \cdot \texttt{InnerProd})$.
Screening and clustering are performed at a cost of $\mathcal{O}(K + |\mathcal{K}|^2)$, followed by Löwdin orthogonalization of the $m$ representative directions at a cost of $\mathcal{O}(m^2 \cdot \texttt{InnerProd} + m^3)$.
Solving the reduced program with the active-set method requires $\mathcal{O}(T_{\mathrm{active}} \cdot m^3)$ operations, where $T_{\mathrm{active}}$ is the number of active-set iterations, while an additional simplex QP over the violated constraints is solved in the recovery case. Since $m \le |\mathcal{K}| \le K$, the computational cost of LOCU procedure is bounded by $\mathcal{O}(K^2 \cdot \texttt{InnerProd} + T_{\mathrm{active}} \cdot K^3)$, which simplifies to $\mathcal{O}(K^2 \cdot \texttt{InnerProd})$ since the second term is negligible relative to the parameter-dimensional inner products.

Both back-propagation and inner product scale linearly with the size of policy parameters $n$.
However, back-propagation involves forward and backward passes over the entire batch, whereas inner product requires only element-wise multiplication and summation. Consequently, the overall computational cost is dominated by natural-gradient computation, which is also required by other second-order multi-constraint methods, leaving LOCU with only marginal additional overhead. Further discussion related to computational cost is provided in Appendix~\ref{app:computational cost}.

\section{Evaluation and Results}\label{sec:experiments}
\subsection{Tasks and Baselines}
\paragraph{Tasks} We evaluated on two settings.
First, we use Circle tasks from Bullet-Safety-Gym~\cite{Gronauer2022BulletSafetyGym} with Ball, Car, and Drone agents, denoted as BC, CC, and DC, under the multi-constraint setting of~\cite{yao2024gradient}.
The task is to follow a circular trajectory, where costs are incurred for leaving the boundary region, violating upper and lower velocity limits.
We vary constraint thresholds and extend with redundancy variants~\cite{yao2024gradient} to control feasible-set geometry and induce near-collinear constraint directions.

Second, we evaluate legged locomotion tasks introduced by~\cite{kim2024spectral} on the bipedal and quadrupedal robots Cassie (Agility Robotics) and Laikago (Unitree), respectively. The objective is to track a commanded velocity while enforcing three constraints on body balance, height, and foot contact timing. To further test heterogeneous constraint handling, we add a goal-navigation variant that retains the three locomotion-stability constraints and introduces three qualitatively different constraints on hazard contact, speed, and energy which was originally on the reward side. This variant requires the agent to reach a goal while avoiding hazards and satisfying operational limits. Both versions are initialized from infeasible states with simultaneous constraint violations under high-dimensional observations, making these tasks suitable for assessing feasibility recovery and compatibility with alternative critic types. For locomotion tasks, all methods use the same quantile-based cost critic.

\begin{figure*}[t]
  \centering
  \setlength{\abovecaptionskip}{2pt}
  \setlength{\belowcaptionskip}{0pt}
  \includegraphics[width=0.70\textwidth]{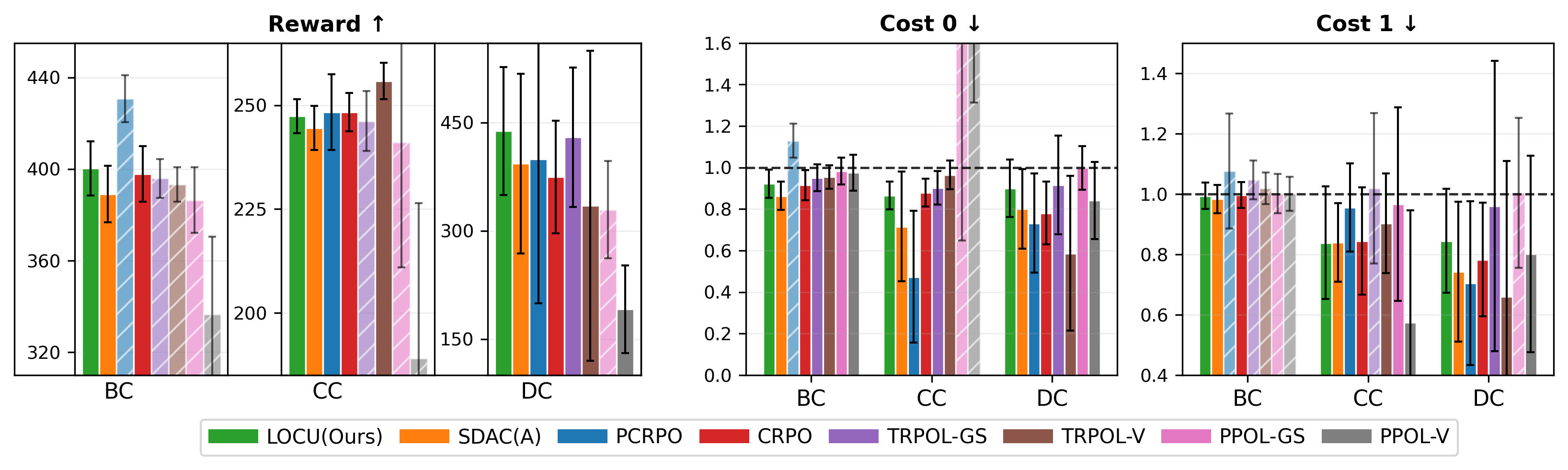}
  \caption{
    Final evaluation on the v2 Circle tasks, with high-velocity and boundary constraints.
  }
  \label{fig:results:v2}
\end{figure*}

\begin{figure*}[t]
  \centering
  \setlength{\abovecaptionskip}{2pt}
  \setlength{\belowcaptionskip}{0pt}
  \includegraphics[width=0.95\textwidth]{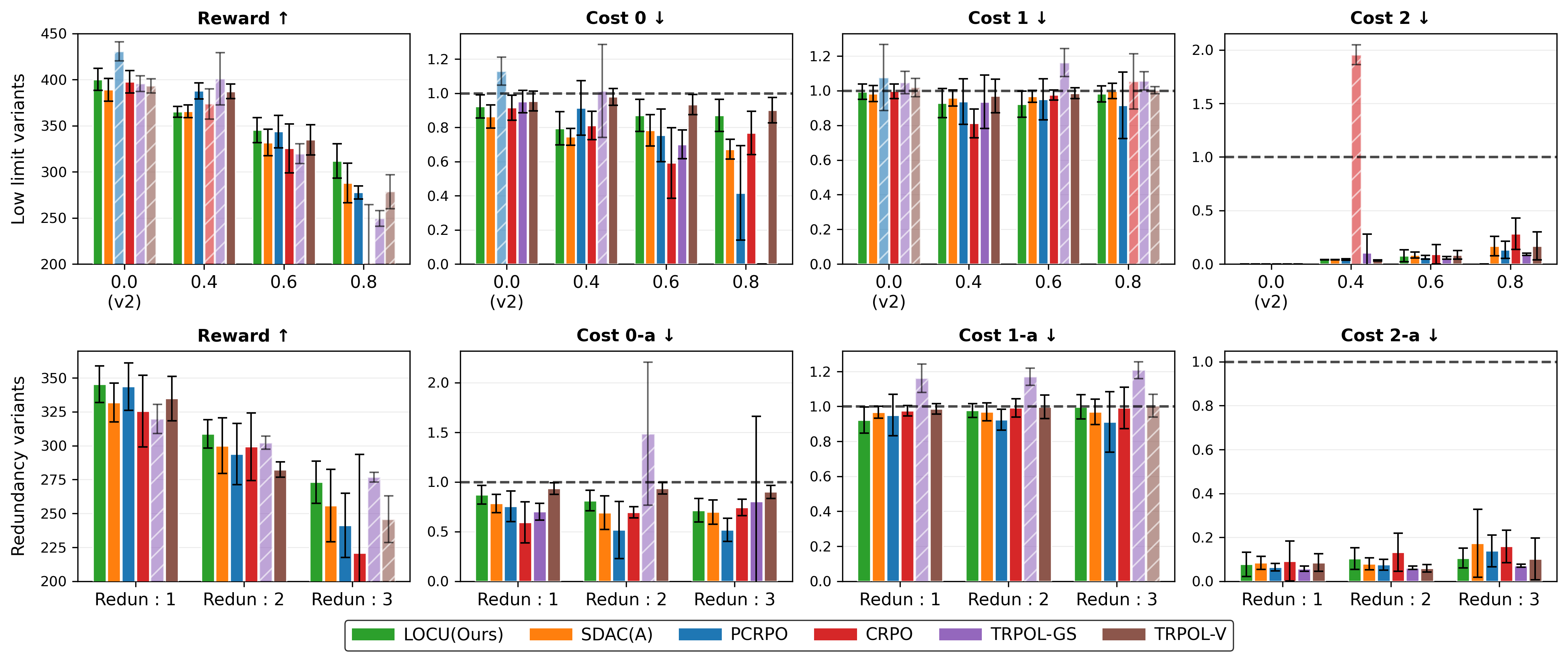}
  \caption{
    Final evaluation results on BC-v3 variants.
    Top: the low-velocity requirement $\kappa_{\mathrm{lo}}\in\{0.0,0.4,0.6,0.8\}$.
    Bottom: redundancy variants with increasing numbers of same-type boundary and velocity constraints.
    The suffix ``--a'' marks the tightest constraint among the same type.
  }
  \label{fig:results:v3}
\end{figure*}

\begin{figure*}[t]
  \centering
  \setlength{\abovecaptionskip}{2pt}
  \setlength{\belowcaptionskip}{0pt}
  \includegraphics[width=0.99\textwidth]{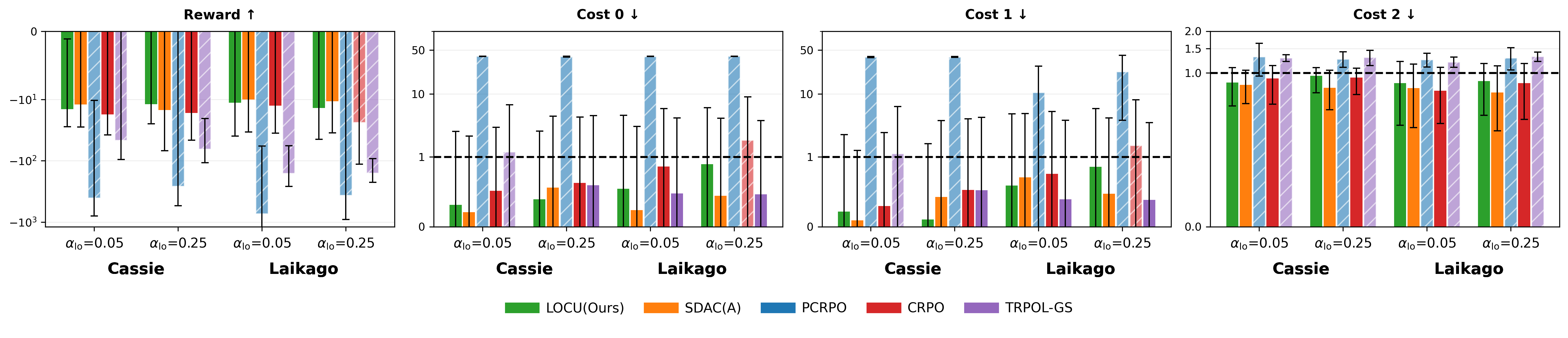}
  \caption{
    Final evaluation on the locomotion tasks Cassie and Laikago with three simultaneous constraints.
    Costs 0, 1, and 2 denote the balance, height, and foot timing constraints. Mean--std constraints use $\alpha_{\mathrm{lo}}\in\{0.05,0.25\}$.
    Cost axes are shown on a log scale.
  }
  \label{fig:results:locomotion-v0}
\end{figure*}

\begin{figure*}[t]
  \centering
  \setlength{\abovecaptionskip}{2pt}
  \setlength{\belowcaptionskip}{0pt}
  \includegraphics[width=0.90\textwidth]{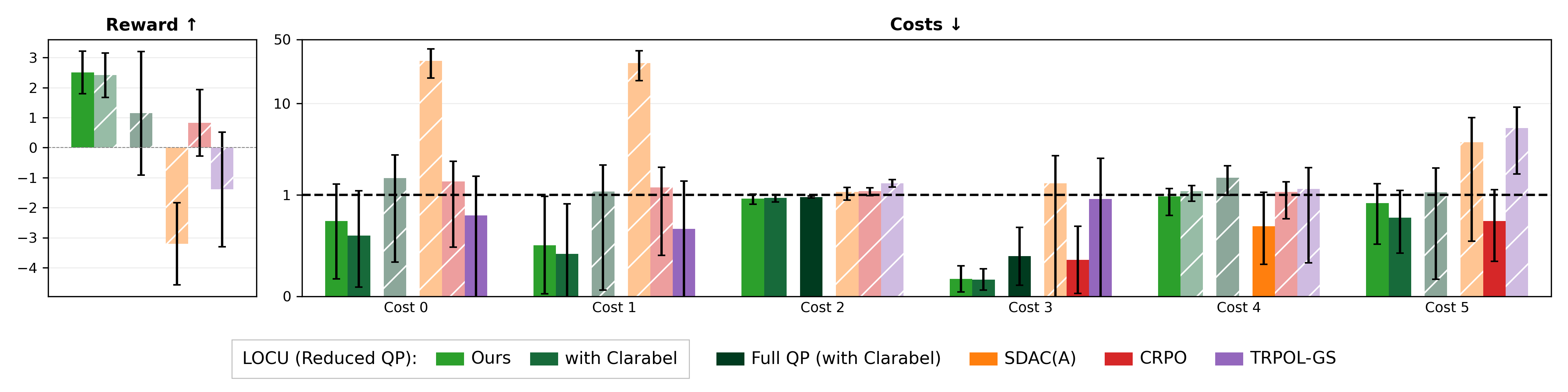}
  \caption{
    Final evaluation on the Cassie goal task with six constraints.
    Costs 0, 1, and 2 denote the balance, height, and foot timing constraints. And costs 3, 4 and 5 denote speed, hazard contact, energy constraints. Cost axes are shown on a log scale and 0.25 is used for $\alpha_{\mathrm{lo}}$.}
  \label{fig:results:cassie-goal-v0}
\end{figure*}

\paragraph{Baselines}
The baselines include the actor update of SDAC~\cite{kim2023trust}, denoted as SDAC(A), which uses a second-order trust-region QP update with gradient integration for recovery, while all other components are aligned with our setup. We also evaluate two primal methods, PCRPO~\cite{gu2024balance} and CRPO~\cite{xu2021crpo}, alongside PPO and TRPO Lagrangian with PID-style multiplier updates~\cite{stooke2020responsive}, each with and without gradient shaping (GS)~\cite{yao2024gradient}, denoted as PPOL-GS, PPOL-V, TRPOL-GS, and TRPOL-V, respectively. For the final task, we further apply Clarabel, a modern interior-point conic solver~\cite{goulart2405clarabel}, to both LOCU's reduced QP and the full-dimensional QP. 

Detailed descriptions for both task and baselines are provided in Appendix~\ref{sec:appendix:experimental-details}.

\subsection{Results}
Results are averaged over five training seeds and 100 evaluation episodes with 10 rollout seeds. For the Circle tasks, we additionally reduce evaluation bias by averaging over the five checkpoints at the end of each training run. All costs are normalized by their corresponding limits; values above 1 indicate constraint violations. In the bar plots, violating cost bars and the corresponding reward bar are rendered with reduced opacity and hatching. Error bars indicate standard deviation. Additional
results of ablation studies and full learning curves are provided in Appendix~\ref{app:additional experiments} and ~\ref{app:learning_curves}.

Fig.~\ref{fig:results:v2} summarizes v2 results on BC, CC, and DC. LOCU consistently satisfies all constraint limits across agents while achieving the highest returns on BC and DC among methods satisfying all constraints, and remaining competitive on CC. The PPOL variants exhibit instability in this multi-constraint setting, where the trained policy fails to reliably track the circular objective. Therefore, we omit PPOL-GS and -V from subsequent experiments. 

Fig.~\ref{fig:results:v3} summarizes the BC-v3 results under feasibility-tightening and redundancy variants. Feasibility tightening includes low-velocity constraints, which shrink the feasible set and reshape early learning by forcing the agent to increase speed before stable circle tracking, thereby delaying reward improvement. This effect is stronger when train starts from infeasible policies: trust-region and primal methods prioritize feasibility recovery for longer as thresholds tighten, while Lagrangian methods may preserve higher return through multiplier adaptation but show less stable constraint satisfaction when aggressively growing multipliers destabilize optimization. The redundancy variants build on the base v3 setting with a single low-velocity coefficient of \(0.6\), simultaneously shrinking the feasible set and inducing near-collinear constraint directions. Across both tightening and redundancy variants, LOCU maintains stable constraint satisfaction while limiting return degradation relative to the baselines.

Figure~\ref{fig:results:locomotion-v0} summarizes the results on Cassie and Laikago with three simultaneous constraints. LOCU and SDAC(A) achieve the best returns while consistently satisfying all constraint limits, whereas PCRPO does not reliably converge in our multi-constraint locomotion setup. Since PCRPO reduces oscillations through soft-switching, we interpret this result as indicating that additional stabilization may be required in coupled multi-constraint regimes with intermittent violations and cross-constraint scale mismatch. 

Additional results on the goal-navigation task, including two variants using Clarabel, are summarized in Figure~\ref{fig:results:cassie-goal-v0}. Here, SDAC(A) misallocates recovery weights as the QP becomes severely ill-conditioned. Unlike the original locomotion task, where the diagonal terms are comparable, body tilt and height constraints violate their limits by roughly 30 times, resulting in much larger diagonal terms than weak-gradient constraints such as energy or obstacle contact. The dual update aims to reduce all active constraints without considering the violation levels, leading to large recovery weights with little correction and negligible weights on the constraints that require correction most. Full QP variants which using the same set of natural gradients exhibit similar behavior, although solving with Clarabel partially improves recovery. Notably, solver flagged only 2 of approximately 24,900 updates across five seeds as inaccurate, indicating that the remaining degradation stems from severe ill-conditioning of the QP itself rather than from frequent solver failures. LOCU avoids this by screening out inactive or degenerate constraints, and constructing an orthogonal basis of the retained gradients. Despite some constraint violations, the Reduced QP variants which use LOCU’s orthogonal basis perform closest to LOCU in both return and constraint satisfaction. This suggests that LOCU’s orthogonal basis construction can be integrated into existing algorithms to enhance numerical stability, while its proposed active-set solver can be replaced with alternative solvers depending on computational and real-time requirements. We provide the additional robustness test on goal-object task with redundancy in Appendix~\ref{sec:appendix:robustness-test}.

\section{Limitations and future works}
LOCU provides local feasibility control but no global convergence guarantee as other second order trust region methods~\cite{achiam2017constrained}. A possible approach is to reinterpret KL-induced Fisher--Rao geometry through a Bregman-proximal mirror-descent perspective over the convex occupancy-measure CMDP, which may provide a route to primal-dual convergence analysis. Also, computing natural-gradient directions and the Fisher Gram matrix for all constraints may raise computational-efficiency concerns. However, as discussed in Appendix~\ref{app:computational cost}, parallelized constraint processing can mitigate this issue by better utilizing GPU throughput. It may also be useful to extend natural gradient calculation with curvature approximations to further reduce computation of CG iters and Fisher-related functions.

\section{Conclusion}
We introduced LOCU, a second-order framework for multi-constraint policy optimization that mitigates constraint coupling in a geometry consistent manner.
By applying symmetric L\"owdin orthogonalization to natural gradients under the Fisher--Rao metric, LOCU yields an orthonormal reduced coordinate system in which the KL trust region becomes a Euclidean ball and constraint interactions are handled in a numerically stable way.
Across multi-constraint benchmarks with tight feasible regions, redundant constraints, and infeasible starts, LOCU achieves stable constraint handling with competitive returns across the starting feasibility and critic architectures.

\newpage
\begin{ack}
This work was supported by the Institute of Information \& Communications Technology Planning \& Evaluation (IITP) grants funded by the Korea government (MSIT) through the Information Technology Research Center program (IITP-2026-RS-2023-00260098) and the Artificial Intelligence Convergence Innovation Human Resources Development program (IITP-2026-RS-2023-00254177). 

We thank Google TPU Research Cloud (TRC) for granting us access to TPUs. Additionally, J.L. thanks Nixon Son for the discussion that inspired the initial idea behind this work.
\end{ack}

\medskip

{\small
\bibliographystyle{plainnat}
\bibliography{ref}
}


\newpage
\appendix
\onecolumn

\theoremstyle{plain}
\newtheorem*{lemSolutionSubspace}{Lemma~\ref{lem:solution-subspace}}
\newtheorem*{lemLowdinOrth}{Lemma~\ref{lem:lowdin-orth}}
\newtheorem*{propCoeffGram}{Proposition~\ref{prop:coeff-gram}}
\newtheorem*{thmReducedEquivalence}{Theorem~\ref{thm:reduced-equivalence}}
\newtheorem*{lemBoundGram}{Lemma~\ref{lem:bound-gram}}
\newtheorem*{propNonbindingGram}{Proposition~\ref{prop:nonbinding-gram}}
\newtheorem*{lemNearCollinear}{Lemma~\ref{lem:near-collinear}}
\newtheorem*{lemParetoDescent}{Lemma~\ref{lem:pareto-descent}}
\section{Supplementary proofs and implementation details}
\label{sec:appendix}

This appendix provides (i) proofs omitted in the main text and (ii) additional derivations and implementation notes.
To keep the section hierarchy consistent, \emph{all omitted proofs} are grouped under
Section~\ref{sec:appendix:omitted-proofs}.
The remaining sections provide supporting material (reduced-space KKT details, screening notes,
recovery mechanics, empirical KL control, and projection-pathology analysis).

\subsection{Notation and shapes} 
\label{sec:appendix:notation}

Let $\mathcal{K}$ be the enforced constraint index set after screening, and write $\bar K:=|\mathcal{K}|$.
Let $\{\mathcal{G}_1,\ldots,\mathcal{G}_m\}$ be the near-collinear clusters and $\{i_1,\ldots,i_m\}$ their representatives.
Define
\[
  S\in\mathbb{R}^{n\times(m+1)},\quad
  O := S^\top F S\in\mathbb{R}^{(m+1)\times(m+1)},\quad
  E := S O^{-1/2}\in\mathbb{R}^{n\times(m+1)}.
\]

Throughout the appendix we work with the generic reduced program
\[
  \max_{w\in\mathbb{R}^{m+1}} \; u^\top w
  \quad \text{s.t.}\quad
  \|w\|_2 \le R,\quad Cw\le \Delta,
\]
where $u\in\mathbb{R}^{m+1}$ is the objective coefficient vector in reduced coordinates.
In LOCU, $u$ is set depending on the update mode:
\[
  u :=
  \begin{cases}
    E^\top g_r & \text{(feasible case)},\\
    \bar g_p   & \text{(infeasible case; Section~\ref{sec:method:recovery})}.
  \end{cases}
\]
For each enforced constraint index $k\in\mathcal{K}$, define the reduced normal
\[
  C_k^\top := E^\top g_{c_k}\in\mathbb{R}^{m+1}.
\]
After fixing an ordering of $\mathcal{K}$, we assemble
\[
  C\in\mathbb{R}^{\bar K\times(m+1)},
  \qquad
  \Delta\in\mathbb{R}^{\bar K},
  \qquad
  R=\sqrt{2\delta}.
\]
We use the Fisher inner product $\langle x,y\rangle_F:=x^\top Fy$ and norm $\|x\|_F:=\sqrt{x^\top Fx}$.

\subsection{Proofs omitted in the main text}
\label{sec:appendix:omitted-proofs}

\subsubsection{Lemma~\ref{lem:solution-subspace}}
\label{sec:appendix:proof-solution-subspace}

\begin{lemSolutionSubspace}
  Problem \eqref{eq:local-trust-region-cmdp} admits an optimal solution
  $\Delta\theta^\star$ such that
  $\Delta\theta^\star \in \mathrm{span}\{\tilde g_r,\tilde g_{c_1},\dots,\tilde g_{c_K}\}$.
\end{lemSolutionSubspace}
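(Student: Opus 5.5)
The plan is to split an optimal solution into its Fisher-orthogonal projection onto the natural-gradient span plus a remainder, and show the remainder can be discarded.

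\emph{Existence.} The feasible set of \eqref{eq:local-trust-region-cmdp} is the intersection of closed halfspaces with the ellipsoid $\tfrac12\|\Delta\theta\|_F^2\le\delta$. Since $F\succ 0$, the ellipsoid is compact, so the feasible set is compact. Whenever it is nonempty, the linear objective attains its maximum at some $\Delta\theta^\circ$. If the feasible set is empty, the statement is vacuous. In the strictly feasible case, $\Delta\theta=0$ is feasible.

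\emph{Decomposition.} Let $V:=\mathrm{span}\{\tilde g_r,\tilde g_{c_1},\dots,\tilde g_{c_K}\}$, and let $P$ denote the $F$-orthogonal projection onto $V$. This projection is well defined because $\langle\cdot,\cdot\rangle_F$ is an inner product. Write $\Delta\theta^\circ = v + z$ with $v:=P\Delta\theta^\circ\in V$ and $\langle z,x\rangle_F=0$ for all $x\in V$. The central identity is that, for any gradient $g\in\{g_r,g_{c_1},\dots,g_{c_K}\}$,
\[
g^\top z = (F\tilde g)^\top z = \langle \tilde g, z\rangle_F = 0 .
\]
This uses $\tilde g = F^{-1}g$ together with the symmetry of $F$. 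Consequently $g_r^\top v = g_r^\top\Delta\theta^\circ$ and $g_{c_k}^\top v = g_{c_k}^\top\Delta\theta^\circ\le\Delta_k$ for every $k$, so $v$ has the same objective value and satisfies every linear constraint.

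\emph{Trust region.} By the Pythagorean identity in the Fisher inner product,
\[
\|\Delta\theta^\circ\|_F^2=\|v\|_F^2+\|z\|_F^2\ge\|v\|_F^2 ,
\]
so $v$ also satisfies $\tfrac12\|v\|_F^2\le\delta$. Hence $v$ is feasible, attains the optimal value, and lies in $V$. Taking $\Delta\theta^\star:=v$ proves the claim. I would add a remark that when $g_r\ne0$ the ball constraint is typically active at the optimum, which forces $z=0$, so every optimizer lies in $V$. This is not required for the stated existence claim.

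\emph{Main obstacle.} There is no real difficulty here. The only points needing care are using the Fisher inner product rather than the Euclidean one for the orthogonal decomposition, which is exactly what makes $g^\top z$ vanish, and justifying attainment of the maximum through compactness, which relies on $F\succ0$.
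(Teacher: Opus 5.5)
Your proposal is correct and is essentially the paper's proof. Both use a Fisher-orthogonal split of a feasible point into a component in the span and a component in its $F$-orthogonal complement, the identity $g^\top\Delta\theta=\langle \tilde g,\Delta\theta\rangle_F$ to show that the objective and constraints see only the in-span component, and the Pythagorean inequality for the trust region. Your compactness argument for attainment (via $F\succ 0$) is a welcome addition, since the paper passes from ``every feasible point can be replaced by one in the span'' to ``an optimum exists in the span'' without saying why an optimum exists. One small correction: if the feasible set were empty, the existence claim would fail rather than be vacuous, so the lemma should be read under feasibility, e.g.\ $\Delta_k\ge 0$ for all $k$.
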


\begin{proof}
  Let
  \[
    \mathcal{U}:=\mathrm{span}\{\tilde g_r,\tilde g_{c_1},\dots,\tilde g_{c_K}\}.
  \]
  Take any $\Delta\theta\in\mathbb{R}^n$ and decompose it as
  \[
    \Delta\theta=\Delta\theta_{\parallel}+\Delta\theta_{\perp},
    \qquad
    \Delta\theta_{\parallel}\in\mathcal{U},
    \qquad
    \Delta\theta_{\perp}\in\mathcal{U}^{\perp_F},
  \]
  where $\mathcal{U}^{\perp_F}:=\{v:\langle u,v\rangle_F=0,\ \forall u\in\mathcal{U}\}$.

  First, the linearized reward objective depends only on $\Delta\theta_{\parallel}$:
  \begin{align*}
    g_r^\top \Delta\theta
    &=
    (F\tilde g_r)^\top \Delta\theta
    =
    \tilde g_r^\top F \Delta\theta
    =
    \langle \tilde g_r,\Delta\theta_{\parallel}+\Delta\theta_{\perp}\rangle_F
    =
    \langle \tilde g_r,\Delta\theta_{\parallel}\rangle_F,
  \end{align*}
  because $\tilde g_r\in\mathcal{U}$ and $\Delta\theta_\perp\in\mathcal{U}^{\perp_F}$.

  Similarly, each constraint depends only on $\Delta\theta_{\parallel}$:
  for any $k$,
  \begin{align*}
    g_{c_k}^\top \Delta\theta
    &=
    (F\tilde g_{c_k})^\top \Delta\theta
    =
    \tilde g_{c_k}^\top F \Delta\theta
    =
    \langle \tilde g_{c_k},\Delta\theta_{\parallel}+\Delta\theta_{\perp}\rangle_F
    =
    \langle \tilde g_{c_k},\Delta\theta_{\parallel}\rangle_F,
  \end{align*}
  since $\tilde g_{c_k}\in\mathcal{U}$.

  Finally, the trust-region quadratic term satisfies the Pythagorean decomposition:
  \begin{align*}
    \|\Delta\theta\|_F^2
    &=
    \langle \Delta\theta_{\parallel}+\Delta\theta_{\perp},\Delta\theta_{\parallel}+\Delta\theta_{\perp}\rangle_F
    =
    \|\Delta\theta_{\parallel}\|_F^2+\|\Delta\theta_{\perp}\|_F^2
    \ge
    \|\Delta\theta_{\parallel}\|_F^2.
  \end{align*}
  Therefore, if $\Delta\theta$ is feasible for \eqref{eq:local-trust-region-cmdp}, then $\Delta\theta_{\parallel}$ is also feasible
  (it does not increase the KL quadratic term and keeps all linear terms unchanged) and achieves the same objective value.
  Hence an optimal solution exists in $\mathcal{U}$.
\end{proof}

\subsubsection{Lemma~\ref{lem:lowdin-orth}}
\label{sec:appendix:lowdin-proof}

\begin{lemLowdinOrth}
  If $O\succ 0$ (or if $O$ is regularized to be positive definite), then
  \[
    E^\top F E = I.
  \]
\end{lemLowdinOrth}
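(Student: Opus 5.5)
The plan is a direct algebraic verification, mirroring the computation in \eqref{eq:prelim:lowdin-basis} but with the Fisher inner product in place of the Euclidean one. The first step is to substitute the definition \eqref{eq:method:E}, $E=SO^{-1/2}$, into $E^\top F E$. This gives
\[
E^\top F E=(O^{-1/2})^\top S^\top F S\,O^{-1/2}=(O^{-1/2})^\top O\,O^{-1/2},
\]
using the definition $O=S^\top F S$ from \eqref{eq:method:O}.

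The second step is to justify the properties of $O^{-1/2}$ that the hypothesis $O\succ 0$ provides. Since $F$ is symmetric, $O$ is symmetric. Positive definiteness gives a spectral decomposition $O=Q\Lambda Q^\top$ with $Q$ orthogonal and $\Lambda$ diagonal with strictly positive entries. The principal inverse square root is therefore $O^{-1/2}:=Q\Lambda^{-1/2}Q^\top$. From this form three facts follow at once: $O^{-1/2}$ is well-defined, it is symmetric, and it commutes with $O$. Consequently
\[
O^{-1/2}\,O\,O^{-1/2}=Q\Lambda^{-1/2}\Lambda\Lambda^{-1/2}Q^\top=QQ^\top=I.
\]

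The third step handles the regularized case. There $O$ is replaced by a symmetric positive definite matrix (for instance $O+\epsilon I$), and $E$ is built from the same regularized matrix. The identity then holds exactly with that matrix in place of $O$, provided the regularization is understood to be applied to the Gram matrix of the directions actually used. Otherwise the result is only $E^\top F E = O_\epsilon^{-1/2} O O_\epsilon^{-1/2}\approx I$. I will state the convention explicitly so that the lemma is exact under it.

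Finally, I expect no real obstacle here. The only point needing care is noting symmetry of $O^{-1/2}$, so that the transpose in $(O^{-1/2})^\top$ drops, and stating that the regularization enters consistently.
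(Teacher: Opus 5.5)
Your proof is correct and is essentially the paper's argument: substitute $E=SO^{-1/2}$, collapse $S^\top F S$ to $O$, and use $O^{-1/2}OO^{-1/2}=I$; your spectral-decomposition step supplies the symmetry of $O^{-1/2}$ that the paper uses without comment when it drops the transpose. Your regularization caveat is also correct and goes beyond the paper's proof: if $E$ is built from $O_\epsilon=O+\epsilon I$ while $S^\top F S=O$ is unchanged, one only gets $E^\top F E=O_\epsilon^{-1/2}OO_\epsilon^{-1/2}=I-\epsilon O_\epsilon^{-1}$, so the identity is exact only under a consistent-regularization convention like the one you propose.
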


\begin{proof}
  Recall the definitions $E=SO^{-1/2}$ and $O=S^\top F S$.
  Then
  \begin{align*}
    E^\top F E
    &=
    \bigl(SO^{-1/2}\bigr)^\top F \bigl(SO^{-1/2}\bigr)
    =
    O^{-1/2}\,S^\top F S\,O^{-1/2}
    =
    O^{-1/2}\,O\,O^{-1/2}
    =
    I.
  \end{align*}
\end{proof}

\subsubsection{Proposition~\ref{prop:coeff-gram}}
\label{sec:appendix:proof-coeff-gram}

\begin{propCoeffGram}
  Let $e_j$ be the $j$th canonical basis vector in $\mathbb{R}^{m+1}$, and let the columns of $S$ be ordered as in
  \eqref{eq:method:S}. Then
  \[
    E^\top g_r = O^{1/2} e_1,
    \qquad
    E^\top g_{c_{i_j}} = O^{1/2} e_{j+1},
    \quad j=1,\dots,m.
  \]
\end{propCoeffGram}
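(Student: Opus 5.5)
The plan is to compute $E^\top g$ for each original gradient directly from the definitions, using the identity $g = F\tilde g$ that links Euclidean and natural gradients. The whole argument is a chain of algebraic identities with no estimates.

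First, stack the reward and representative-constraint gradients as $G_S := [\,g_r\;\, g_{c_{i_1}}\;\cdots\; g_{c_{i_m}}\,]\in\mathbb{R}^{n\times(m+1)}$. Since $\tilde g = F^{-1}g$ column by column, we have $G_S = FS$ with $S$ as in \eqref{eq:method:S}. Hence
\[
E^\top G_S = \bigl(SO^{-1/2}\bigr)^\top F S = O^{-1/2}\, S^\top F S = O^{-1/2}\, O .
\]
This step uses the symmetry of $O^{-1/2}$, which holds because $O$ is symmetric positive definite by the standing assumption $O\succ 0$, or its regularized version.

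Next, because $O^{-1/2}$ and $O$ are functions of the same symmetric matrix, they commute and share an eigenbasis. Via the spectral decomposition $O = Q\Lambda Q^\top$ we obtain $O^{-1/2}O = Q\Lambda^{1/2}Q^\top = O^{1/2}$, so $E^\top G_S = O^{1/2}$.

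Finally, read off columns. The $j$th column of $G_S$ is $G_S e_j$, so $E^\top g_r = E^\top G_S e_1 = O^{1/2}e_1$, and $E^\top g_{c_{i_j}} = E^\top G_S e_{j+1} = O^{1/2}e_{j+1}$ for $j=1,\dots,m$, as claimed.

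No step is a real obstacle. The only points needing care are, first, using principal (symmetric) square roots so that transposes and commutation behave as stated. Second, if $O$ is regularized, say $O_\epsilon = O + \epsilon I$ with $E = SO_\epsilon^{-1/2}$, then the identity holds exactly only with $O$ replaced by $O_\epsilon^{-1/2}O$. I would state the result for unregularized $O\succ 0$ and note that the regularized case holds up to an $O(\epsilon)$ perturbation.
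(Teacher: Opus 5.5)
Your proof is correct and matches the paper's argument: the paper computes $S^\top g_r = S^\top F S e_1 = O e_1$ column by column and then applies $O^{-1/2}$, which is your stacked identity $E^\top G_S = O^{-1/2}O = O^{1/2}$ read off one column at a time. Your caveat is also accurate, and the paper glosses over it: with a regularized $O_\epsilon$ used in $E$, the identity becomes $E^\top G_S = O_\epsilon^{-1/2}O$ rather than $O_\epsilon^{1/2}$.
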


\begin{proof}
  Let $e_1$ be the first canonical vector in $\mathbb{R}^{m+1}$.
  By construction of $S$ in \eqref{eq:method:S}, we have $Se_1=\tilde g_r$ and $g_r=F\tilde g_r$.
  Compute
  \begin{align*}
    S^\top g_r
    &=
    S^\top F \tilde g_r
    =
    S^\top F (Se_1)
    =
    (S^\top F S)e_1
    =
    Oe_1.
  \end{align*}
  Therefore,
  \begin{align*}
    E^\top g_r
    &=
    (SO^{-1/2})^\top g_r
    =
    O^{-1/2} S^\top g_r
    =
    O^{-1/2} (Oe_1)
    =
    O^{1/2} e_1.
  \end{align*}

  For the representative constraint $i_j$, the $(j+1)$th column of $S$ equals $\tilde g_{c_{i_j}}$, i.e.,
  $Se_{j+1}=\tilde g_{c_{i_j}}$, and $g_{c_{i_j}}=F\tilde g_{c_{i_j}}$.
  Repeating the same steps gives
  \[
    E^\top g_{c_{i_j}} = O^{1/2} e_{j+1}.
  \]
\end{proof}

\subsubsection{Theorem~\ref{thm:reduced-equivalence}}
\label{sec:appendix:proof-reduced-equivalence}

\begin{thmReducedEquivalence}
  Assume $O\succ 0$ (or $O$ is regularized so that $O\succ 0$).
  Fix any enforced constraint index set $\mathcal{K}$ and any representative set $\{i_1,\ldots,i_m\}$ used to form
  $S,O,E$ as in \eqref{eq:method:S}--\eqref{eq:method:E}.
  Then the change of variables $\Delta\theta=Ew$ gives a one-to-one correspondence between
  $\mathrm{span}(S)$ and $\mathbb{R}^{m+1}$ and transforms the trust-region CMDP subproblem
  \eqref{eq:local-trust-region-cmdp} restricted to $\Delta\theta\in\mathrm{span}(S)$ and constraints $k\in\mathcal{K}$
  into the reduced program \eqref{eq:method:reduced-qp}.
  In particular, any optimizer $w^\star$ of \eqref{eq:method:reduced-qp} yields a valid update
  $\Delta\theta^\star=Ew^\star$ in the original parameter space.
\end{thmReducedEquivalence}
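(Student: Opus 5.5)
The plan is to verify the three ingredients of the correspondence in turn: bijectivity of the linear map, transformation of the trust region, and transformation of the objective and the enforced constraints. Together these show that the two feasible sets and objective values match under $\Delta\theta=Ew$.

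\emph{Bijectivity.} Since $O=S^\top F S\succ 0$ and $F\succ 0$, the columns of $S$ are linearly independent: if $Sx=0$ then $x^\top O x=0$, so $x=0$. Hence $S:\mathbb{R}^{m+1}\to\mathrm{span}(S)$ is a linear bijection. The matrix $O^{-1/2}$ is invertible, so $E=SO^{-1/2}$ has the same column space as $S$, and $w\mapsto Ew$ is a bijection onto $\mathrm{span}(S)$. The inverse can be written explicitly as $w=E^\top F\Delta\theta$, which follows from Lemma~\ref{lem:lowdin-orth}: if $\Delta\theta=Ew$, then $E^\top F\Delta\theta=E^\top FEw=w$.

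\emph{Trust region.} By Lemma~\ref{lem:lowdin-orth} and \eqref{eq:method:trustball}, we have $\tfrac12\Delta\theta^\top F\Delta\theta=\tfrac12\|w\|_2^2$. Therefore $\tfrac12\|\Delta\theta\|_F^2\le\delta$ holds iff $\|w\|_2\le R=\sqrt{2\delta}$.

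\emph{Objective and constraints.} Substituting $\Delta\theta=Ew$ gives $g_r^\top\Delta\theta=(E^\top g_r)^\top w=u^\top w$. Likewise, for each $k\in\mathcal{K}$, $g_{c_k}^\top\Delta\theta=(E^\top g_{c_k})^\top w=C_k w$ by \eqref{eq:method:reduced-u}. So $g_{c_k}^\top\Delta\theta\le\Delta_k$ for all $k\in\mathcal{K}$ iff $Cw\le\Delta$. This step does not require $g_{c_k}$ to be a representative direction; Proposition~\ref{prop:coeff-gram} only gives the special form of $C_k$ for representatives.

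Combining these, the map $w\mapsto Ew$ sends the feasible set of \eqref{eq:method:reduced-qp} bijectively onto the feasible set of the restricted problem, and it preserves objective values. Optimizers therefore correspond, and any $w^\star$ yields an optimal, feasible $\Delta\theta^\star=Ew^\star$ for the restricted subproblem.

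The main point of care is the meaning of ``valid update''. If $\mathcal{K}$ omits constraints, validity in the full problem needs an extra argument. For screened non-binding constraints it follows from Proposition~\ref{prop:nonbinding-gram}. For degenerate constraints with $\nu_k\le\varepsilon_{\mathrm{nz}}$, Lemma~\ref{lem:bound-gram} bounds the violation only approximately. I would state the theorem's conclusion relative to the restricted problem and note these two screening cases separately.
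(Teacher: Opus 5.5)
Your proposal is correct and follows the paper's proof essentially step for step: you establish bijectivity of $w\mapsto Ew$ from the invertibility of $O^{-1/2}$, then substitute $\Delta\theta=Ew$ directly into the objective, the enforced constraints, and, via Lemma~\ref{lem:lowdin-orth}, the trust region; your check that $O=S^\top FS\succ 0$ forces $S$ to have full column rank fills in a step the paper only asserts. Your argument, like the paper's, uses $O=S^\top FS$ exactly, so neither covers the ``regularized'' parenthetical: with $O=S^\top FS+\varepsilon I$ one gets $E^\top FE=I-\varepsilon O^{-1}\neq I$, and $w\mapsto Ew$ is not injective when $S$ is rank-deficient.
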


\begin{proof}
  Since $O\succ 0$, the matrix $O^{-1/2}$ exists and is invertible.
  Hence $E=SO^{-1/2}$ has full column rank, and the map $w\mapsto \Delta\theta=Ew$ is a bijection between
  $\mathbb{R}^{m+1}$ and $\mathrm{span}(E)=\mathrm{span}(S)$.

  We now rewrite each term in \eqref{eq:local-trust-region-cmdp} under the substitution $\Delta\theta=Ew$.

  \paragraph{Objective.}
  We have
  \[
    g_r^\top \Delta\theta
    =
    g_r^\top Ew
    =
    (E^\top g_r)^\top w
    =
    u^\top w,
  \]
  with $u=E^\top g_r$ in the feasible mode.

  \paragraph{Constraints.}
  For any enforced constraint $k\in\mathcal{K}$,
  \[
    g_{c_k}^\top \Delta\theta
    =
    g_{c_k}^\top Ew
    =
    (E^\top g_{c_k})^\top w
    =
    C_k^\top w.
  \]
  Collecting all such constraints yields $Cw\le \Delta$.

  \paragraph{Trust region.}
  Using Lemma~\ref{lem:lowdin-orth},
  \begin{align*}
    \tfrac12 \Delta\theta^\top F \Delta\theta
    &=
    \tfrac12 (Ew)^\top F (Ew)
    =
    \tfrac12 w^\top (E^\top F E) w
    =
    \tfrac12 \|w\|_2^2.
  \end{align*}
  Thus $\tfrac12\Delta\theta^\top F\Delta\theta\le \delta$ is equivalent to $\|w\|_2\le R$.

  Combining these identities shows that \eqref{eq:local-trust-region-cmdp} restricted to $\Delta\theta\in\mathrm{span}(S)$
  is exactly \eqref{eq:method:reduced-qp} in the variable $w$.
  Optimal solutions correspond under the bijection $\Delta\theta=Ew$.
\end{proof}

\subsubsection{Lemma~\ref{lem:bound-gram}}
\label{sec:appendix:proof-bound-gram}

\begin{lemBoundGram}
  For any $\Delta\theta$ satisfying $\|\Delta\theta\|_F \le R$,
  \[
    g_{c_k}^\top \Delta\theta \le \nu_k R.
  \]
\end{lemBoundGram}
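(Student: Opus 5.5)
The plan is to rewrite the Euclidean pairing $g_{c_k}^\top\Delta\theta$ as a Fisher inner product and then apply Cauchy--Schwarz in the Fisher geometry. Since $F\succ 0$, the bilinear form $\langle\cdot,\cdot\rangle_F$ is a genuine inner product. Using $g_{c_k}=F\tilde g_{c_k}$ and the symmetry of $F$, we write
\[
g_{c_k}^\top \Delta\theta = \tilde g_{c_k}^\top F\,\Delta\theta = \langle \tilde g_{c_k},\Delta\theta\rangle_F .
\]
This is the same manipulation used in the proof of Lemma~\ref{lem:solution-subspace}.

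Next we apply Cauchy--Schwarz for $\langle\cdot,\cdot\rangle_F$:
\[
\langle \tilde g_{c_k},\Delta\theta\rangle_F \le \|\tilde g_{c_k}\|_F\,\|\Delta\theta\|_F \le \|\tilde g_{c_k}\|_F\,R ,
\]
where the second inequality uses the hypothesis $\|\Delta\theta\|_F\le R$.

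The last step identifies $\|\tilde g_{c_k}\|_F$ with $\nu_k$. Column $k+1$ of $S_{\mathrm{full}}$ is $\tilde g_{c_k}$, so
\[
(O_{\mathrm{full}})_{k+1,k+1}=e_{k+1}^\top S_{\mathrm{full}}^\top F S_{\mathrm{full}} e_{k+1}=\tilde g_{c_k}^\top F\tilde g_{c_k}=\|\tilde g_{c_k}\|_F^2 .
\]
By the definition in \eqref{eq:method:screen-metrics}, this gives $\nu_k=\|\tilde g_{c_k}\|_F$, and the claimed bound $g_{c_k}^\top\Delta\theta\le\nu_k R$ follows.

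No step is a real obstacle; the only care needed is bookkeeping. One must index the diagonal entry of $O_{\mathrm{full}}$ correctly, since column $1$ is the reward direction. One must also note that $R=\sqrt{2\delta}$ makes $\|\Delta\theta\|_F\le R$ equivalent to the trust region $\tfrac12\Delta\theta^\top F\Delta\theta\le\delta$, which Proposition~\ref{prop:nonbinding-gram} will rely on.
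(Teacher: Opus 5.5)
Your proposal is correct and follows essentially the same route as the paper's proof. You rewrite $g_{c_k}^\top\Delta\theta$ as $\langle \tilde g_{c_k},\Delta\theta\rangle_F$ via $g_{c_k}=F\tilde g_{c_k}$, apply Fisher Cauchy--Schwarz with $\|\Delta\theta\|_F\le R$, and identify $\|\tilde g_{c_k}\|_F^2=(O_{\mathrm{full}})_{k+1,k+1}=\nu_k^2$.
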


\begin{proof}
  Using $g_{c_k}=F\tilde g_{c_k}$,
  \begin{align*}
    g_{c_k}^\top \Delta\theta
    &=
    (F\tilde g_{c_k})^\top \Delta\theta
    =
    \tilde g_{c_k}^\top F\Delta\theta
    =
    \langle \tilde g_{c_k},\Delta\theta\rangle_F.
  \end{align*}
  By Cauchy--Schwarz in the Fisher inner product,
  \[
    \langle \tilde g_{c_k},\Delta\theta\rangle_F
    \le
    \|\tilde g_{c_k}\|_F \, \|\Delta\theta\|_F.
  \]
  If $\|\Delta\theta\|_F\le R$, then
  \[
    g_{c_k}^\top \Delta\theta
    \le
    \|\tilde g_{c_k}\|_F \, R.
  \]
  Finally,
  \[
    \|\tilde g_{c_k}\|_F^2
    =
    \tilde g_{c_k}^\top F\tilde g_{c_k}
    =
    (O_{\mathrm{full}})_{k+1,k+1}
    =
    \nu_k^2,
  \]
  so $\|\tilde g_{c_k}\|_F=\nu_k$.
\end{proof}

\subsubsection{Proposition~\ref{prop:nonbinding-gram}}
\label{sec:appendix:proof-nonbinding-gram}

\begin{propNonbindingGram}
  If $\Delta_k \ge \nu_k R$, then constraint $k$ cannot be active inside the trust region
  $\tfrac12\Delta\theta^\top F\Delta\theta\le\delta$ and can be excluded from $\mathcal{K}$ without changing feasibility
  within that trust region.
\end{propNonbindingGram}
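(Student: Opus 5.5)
The plan is to reduce the statement directly to Lemma~\ref{lem:bound-gram}. First we identify the trust region: $\tfrac12\Delta\theta^\top F\Delta\theta\le\delta$ is the same set as $\|\Delta\theta\|_F\le R$ with $R=\sqrt{2\delta}$, because $\Delta\theta^\top F\Delta\theta=\|\Delta\theta\|_F^2$. Lemma~\ref{lem:bound-gram} then applies to every point of the trust region and gives $g_{c_k}^\top\Delta\theta\le\nu_k R$. Combining this with the hypothesis $\nu_k R\le\Delta_k$, we get $g_{c_k}^\top\Delta\theta\le\Delta_k$ for every $\Delta\theta$ in the trust region. So the $k$th linearized constraint in \eqref{eq:local-trust-region-cmdp} is automatically satisfied on the whole trust region.

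Next we establish the two claims separately. For \emph{feasibility}: let $\mathcal{F}_{\mathcal{K}}$ be the feasible set defined by the trust region and the constraints indexed by $\mathcal{K}$. Adding or removing constraint $k$ intersects this set with a halfspace that already contains the whole trust region, so $\mathcal{F}_{\mathcal{K}}=\mathcal{F}_{\mathcal{K}\setminus\{k\}}$. The optimization problem, and hence its set of optimizers, is therefore unchanged when $k$ is excluded.

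For \emph{non-activeness}: the only delicate point is the boundary case $\Delta_k=\nu_kR$ together with equality in Cauchy--Schwarz. Equality can hold only when $\Delta\theta=R\,\tilde g_{c_k}/\nu_k$, where the constraint can be tight. I would handle this case as follows. When $\Delta_k>\nu_k R$ the inequality is strict, so the constraint is never active. When $\Delta_k=\nu_k R$ the constraint can be tight only at that single boundary point, and there the halfspace is supporting, so it does not restrict the feasible set. Consequently any KKT multiplier for it can be taken to be zero; equivalently, the ball constraint alone already enforces the bound.

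I expect this interpretation of ``cannot be active'' in the equality case to be the main obstacle. Everything else is a one-line consequence of Lemma~\ref{lem:bound-gram}. Provided the statement is read as redundancy of the constraint, the set-equality argument above suffices.
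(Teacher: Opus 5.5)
Your proposal is correct and follows the paper's proof essentially step for step: identify the trust region with $\|\Delta\theta\|_F\le R$, invoke Lemma~\ref{lem:bound-gram}, and conclude that the $k$th halfspace contains the whole trust region, so excluding it leaves the feasible set unchanged. Your extra treatment of the boundary case $\Delta_k=\nu_kR$ is a harmless refinement, since the paper simply reads ``cannot be active'' as ``cannot cut the feasible set''; note only that your claim that equality holds at the single point $R\tilde g_{c_k}/\nu_k$ tacitly assumes $\nu_k>0$.
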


\begin{proof}
  Inside $\tfrac12\Delta\theta^\top F\Delta\theta\le \delta$, we have $\|\Delta\theta\|_F\le \sqrt{2\delta}=R$.
  Lemma~\ref{lem:bound-gram} gives, for all such $\Delta\theta$,
  \[
    g_{c_k}^\top \Delta\theta
    \le
    \nu_k R.
  \]
  If $\Delta_k\ge \nu_k R$, then $g_{c_k}^\top \Delta\theta\le \Delta_k$ holds automatically for every point in the trust region.
  Hence constraint $k$ cannot cut the feasible set within the trust region and can be removed without changing feasibility there.
\end{proof}

\subsubsection{Lemma~\ref{lem:near-collinear}}
\label{sec:appendix:proof-near-collinear}

\begin{lemNearCollinear}
  Let $u_i,u_j$ be Fisher-unit vectors with $\langle u_i,u_j\rangle_F=\rho\in[-1,1]$.
  Then for any $\Delta\theta$ with $\|\Delta\theta\|_F\le R$,
  \[
    \min_{s\in\{+1,-1\}}
    \big|\langle u_i,\Delta\theta\rangle_F - s\,\langle u_j,\Delta\theta\rangle_F\big|
    \le
    \sqrt{2(1-|\rho|)}\,R.
  \]
  In particular, if $|\rho|\ge 1-\varepsilon_{\mathrm{dup}}$, the discrepancy is at most $\sqrt{2\varepsilon_{\mathrm{dup}}}\,R$.
  One may take $s=+1$ when $\rho\ge 0$ and $s=-1$ when $\rho<0$.
\end{lemNearCollinear}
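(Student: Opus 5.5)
The plan is to use linearity of the Fisher inner product to write the discrepancy as a single inner product, and then apply Cauchy--Schwarz in the Fisher geometry, as in the proof of Lemma~\ref{lem:bound-gram}.

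Fix $\Delta\theta$ with $\|\Delta\theta\|_F\le R$ and choose the sign $s:=\mathrm{sign}(\rho)$, taking $s=+1$ when $\rho\ge 0$. Since the minimum over $s\in\{+1,-1\}$ is bounded by the value at this particular $s$, it suffices to bound
\[
  \big|\langle u_i,\Delta\theta\rangle_F - s\,\langle u_j,\Delta\theta\rangle_F\big|
  =
  \big|\langle u_i - s\,u_j,\Delta\theta\rangle_F\big|
  \le
  \|u_i - s\,u_j\|_F\,\|\Delta\theta\|_F
  \le
  \|u_i - s\,u_j\|_F\,R .
\]
This uses only bilinearity of $\langle\cdot,\cdot\rangle_F$, which holds because $F\succ 0$.

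The only real computation is the norm of the difference vector. Expanding with $\|u_i\|_F=\|u_j\|_F=1$ and $s^2=1$ gives
\[
  \|u_i - s\,u_j\|_F^2
  = 1 - 2s\rho + 1
  = 2(1-s\rho)
  = 2(1-|\rho|).
\]
The last equality is exactly where the sign choice $s\rho=|\rho|$ enters, and it also justifies the final sentence of the lemma. Taking square roots gives the bound $\sqrt{2(1-|\rho|)}\,R$.

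For the particular case, $|\rho|\ge 1-\varepsilon_{\mathrm{dup}}$ implies $1-|\rho|\le\varepsilon_{\mathrm{dup}}$. The map $x\mapsto\sqrt{2x}$ is monotone, so the discrepancy is at most $\sqrt{2\varepsilon_{\mathrm{dup}}}\,R$.

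No step is a genuine obstacle. The only point needing care is to apply the sign choice before expanding the norm, so that the cross term becomes $-2|\rho|$ rather than $\pm 2\rho$.
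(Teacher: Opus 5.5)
Your proof is correct and matches the paper's argument step for step: fix $s$ with $s\rho=|\rho|$, write the discrepancy as $\langle u_i-s\,u_j,\Delta\theta\rangle_F$, apply Cauchy--Schwarz in the Fisher inner product, and compute $\|u_i-s\,u_j\|_F^2=2(1-|\rho|)$. One small correction: bilinearity of $\langle\cdot,\cdot\rangle_F$ holds for any $F$, and it is the Cauchy--Schwarz step that uses $F\succ 0$ (positive semidefiniteness would already suffice).
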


\begin{proof}
  Let $u_i,u_j$ satisfy $\|u_i\|_F=\|u_j\|_F=1$ and $\langle u_i,u_j\rangle_F=\rho\in[-1,1]$.
  For any $s\in\{+1,-1\}$,
  \begin{align*}
    \|u_i - s u_j\|_F^2
    &=
    \langle u_i - s u_j,\, u_i - s u_j\rangle_F \\
    &=
    \|u_i\|_F^2 + \|u_j\|_F^2 - 2s\langle u_i,u_j\rangle_F \\
    &=
    2(1 - s\rho).
  \end{align*}
  Choose $s=+1$ when $\rho\ge 0$ and $s=-1$ when $\rho<0$, so that $s\rho=|\rho|$ and hence
  \[
    \|u_i - s u_j\|_F = \sqrt{2(1-|\rho|)}.
  \]
  Then for any $\Delta\theta$,
  \begin{align*}
    \big|\langle u_i,\Delta\theta\rangle_F - s\,\langle u_j,\Delta\theta\rangle_F\big|
    &=
    \big|\langle u_i - s u_j,\Delta\theta\rangle_F\big| \\
    &\le
    \|u_i - s u_j\|_F \, \|\Delta\theta\|_F
    \le
    \sqrt{2(1-|\rho|)}\,R,
  \end{align*}
  whenever $\|\Delta\theta\|_F\le R$.
  Taking the minimum over $s\in\{+1,-1\}$ yields the claim.
\end{proof}

\subsubsection{Lemma~\ref{lem:pareto-descent}}
\label{sec:appendix:proof-pareto-descent}

\begin{lemParetoDescent}
  If $\bar g_p \neq 0$ defined by \eqref{eq:method:pareto-qp}--\eqref{eq:method:pareto-dir}, then for every $i\in\mathcal{V}$,
  \[
    C_i^\top \bar g_p < 0.
  \]
\end{lemParetoDescent}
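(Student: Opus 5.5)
The plan is to read the claim as the first-order optimality condition of the min-norm-point problem \eqref{eq:method:pareto-qp} over the simplex, in the style of the MGDA argument. Write $v^\star := C_{\mathcal{V}}^\top\alpha^\star = -\bar g_p$, where $\alpha^\star$ is an optimal solution of \eqref{eq:method:pareto-qp}. Such a solution exists because the simplex is compact and the objective is continuous. The set $\mathcal{P} := \{C_{\mathcal{V}}^\top\alpha : \alpha\ge 0,\ \mathbf{1}^\top\alpha = 1\}$ is the convex hull of the rows $C_i^\top$, $i\in\mathcal{V}$. Therefore $v^\star$ is the minimum-Euclidean-norm point of the compact convex set $\mathcal{P}$, and it is unique by strict convexity of $\|\cdot\|_2^2$, even if $\alpha^\star$ is not unique.

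The key step is the variational inequality for projecting the origin onto a convex set:
\[
\langle v^\star, p - v^\star\rangle \ge 0 \qquad \text{for all } p\in\mathcal{P}.
\]
To prove it, take $p\in\mathcal{P}$ and $t\in[0,1]$. The point $v^\star + t(p-v^\star)$ lies in $\mathcal{P}$ by convexity, so
\[
\|v^\star + t(p-v^\star)\|_2^2 \ge \|v^\star\|_2^2 .
\]
Expanding gives $2t\langle v^\star,p-v^\star\rangle + t^2\|p-v^\star\|_2^2 \ge 0$. Dividing by $t>0$ and letting $t\downarrow 0$ yields the inequality. Applying it to the vertices $p = C_i^\top$ for each $i\in\mathcal{V}$ gives
\[
C_i v^\star \ge \|v^\star\|_2^2 .
\]

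To conclude, use $\bar g_p = -v^\star$. Then
\[
C_i\bar g_p = -C_i v^\star \le -\|v^\star\|_2^2 = -\|\bar g_p\|_2^2 < 0,
\]
where the strict inequality uses the hypothesis $\bar g_p\neq 0$. The notation $C_i^\top \bar g_p$ in the statement means the inner product of the $i$-th reduced normal with $\bar g_p$, that is, $C_i\bar g_p$ with $C_i$ the $i$-th row of $C$. The argument gives the stronger quantitative bound $C_i\bar g_p \le -\|\bar g_p\|_2^2$ uniformly over $i\in\mathcal{V}$.

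I do not expect a real obstacle. The only point needing care is the first-order optimality step. I would present it through the explicit line-segment argument above rather than the KKT multipliers of \eqref{eq:method:pareto-qp}. This route avoids having to handle components with $\alpha_i = 0$ separately, and it avoids any dependence on whether $\alpha^\star$ is unique.
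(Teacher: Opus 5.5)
Your proposal is correct and matches the paper's proof step for step. The paper likewise identifies $-\bar g_p$ as the minimum-norm point of $\mathrm{conv}\{C_i : i\in\mathcal{V}\}$, applies the projection optimality condition at each vertex to get $C_i^\top g^\star \ge \|g^\star\|_2^2$, and concludes $C_i^\top \bar g_p \le -\|g^\star\|_2^2 < 0$; the only difference is that you derive the variational inequality from the line-segment expansion instead of citing it.
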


\begin{proof}
  Let the violated reduced normals be $\{C_i\}_{i\in\mathcal V}$ (rows of $C$ indexed by $\mathcal{V}$).
  Define the convex hull
  \[
    \mathcal{Q}
    :=
    \mathrm{conv}\{C_i: i\in\mathcal{V}\}
    =
    \left\{\sum_{i\in\mathcal{V}}\alpha_i C_i:\ \alpha\ge 0,\ \mathbf{1}^\top\alpha=1\right\}.
  \]
  Problem \eqref{eq:method:pareto-qp} computes
  \[
    g^\star
    =
    \arg\min_{g\in\mathcal{Q}} \|g\|_2^2,
    \qquad
    g^\star = C_{\mathcal{V}}^\top \alpha^\star,
    \qquad
    \bar g_p = -g^\star.
  \]

  Since $g^\star$ is the Euclidean projection of $0$ onto the closed convex set $\mathcal{Q}$,
  the projection optimality condition implies that for all $g\in\mathcal{Q}$,
  \[
    (g-g^\star)^\top(0-g^\star)\le 0.
  \]
  Rearranging gives
  \[
    g^\top g^\star \ge \|g^\star\|_2^2.
  \]
  Taking $g=C_i$ (note that each $C_i\in\mathcal{Q}$) yields, for every $i\in\mathcal{V}$,
  \[
    C_i^\top g^\star \ge \|g^\star\|_2^2.
  \]
  If $\bar g_p\neq 0$, then $g^\star\neq 0$ and $\|g^\star\|_2^2>0$. Therefore,
  \[
    C_i^\top \bar g_p
    =
    -C_i^\top g^\star
    \le
    -\|g^\star\|_2^2
    <
    0.
  \]
\end{proof}
\subsection{Reduced QP derivation and KKT solver}
\label{sec:appendix:kkt}

Here, we provide a detailed KKT-based solution procedure for the reduced program
\eqref{eq:method:reduced-qp}. The presentation parallels the style of the analytical LQCLP derivation in
CPO (\cite{achiam2017constrained}, Thm.~2), but here we treat the \emph{multi-constraint} case and exploit
the fact that LOCU's L\"owdin basis makes the trust region \emph{Euclidean} (i.e., the quadratic form is $I$ in reduced
coordinates). In particular, when $|\mathcal{J}|=1$, the formulas below reduce exactly to the CPO single-constraint
LQCLP solution specialized to $H=I$.

\paragraph{Problem statement.}
Recall the reduced step subproblem (with $p:=m+1$):
\begin{equation}
  \label{eq:app:reduced-prob}
  \begin{aligned}
    \max_{w\in\mathbb{R}^{p}}\quad
    & u^\top w\\
    \text{s.t.}\quad
    & \|w\|_2 \le R,\\
    & Cw \le \Delta,
  \end{aligned}
\end{equation}
where $C\in\mathbb{R}^{\bar K\times p}$ collects the reduced constraint normals and $\Delta\in\mathbb{R}^{\bar K}$
collects the corresponding slacks. In LOCU, $u$ is either $E^\top g_r$ (feasible mode) or the recovery
direction $\bar g_p$ (feasibility-recovery mode), but the solver below is identical.

\paragraph{Convexity and dual attainability.}
\eqref{eq:app:reduced-prob} is a linear maximization over a closed convex set (intersection of a Euclidean ball and
halfspaces), hence an optimum exists whenever the feasible set is nonempty. In LOCU, nonemptiness is ensured in the
recovery mode because we only constrain already-satisfied constraints and thus $w=0$ is feasible; in the reward mode,
$w=0$ is feasible whenever $\Delta\ge 0$.

\subsubsection{KKT conditions}
\label{sec:appendix:kkt-conditions}
\paragraph{Strong duality and KKT sufficiency.}
\eqref{eq:app:reduced-prob} is a convex SOCP (linear objective with a Euclidean ball and halfspace constraints).
Whenever the reduced feasible set has a nonempty relative interior (e.g., there exists $w$ such that
$\|w\|_2 < R$ and $Cw < \Delta$ for the effective inequality set), strong duality holds and KKT conditions are necessary and sufficient for optimality~\cite{boyd2004convex}. If the problem is (nearly) degenerate in practice, our implementation
handles it using regularization/tolerances in the active-set routine described below.

Write \eqref{eq:app:reduced-prob} as a minimization
\[
  \min_{w}\; -u^\top w
  \quad \text{s.t.}\quad
  Cw-\Delta \le 0,
  \qquad
  \|w\|_2^2-R^2 \le 0.
\]
The Lagrangian is
\[
  \mathcal{L}(w,\lambda,\beta)
  =
  - u^\top w
  +\lambda^\top(Cw-\Delta)
  +\beta(\|w\|_2^2-R^2),
\]
where $\lambda\in\mathbb{R}^{\bar K}$ and $\beta\in\mathbb{R}$ satisfy $\lambda\ge 0$, $\beta\ge 0$.

Stationarity:
\begin{align}
  \label{eq:app:kkt-stat}
  -u + C^\top\lambda + 2\beta w = 0.
\end{align}
Primal feasibility:
\begin{align}
  \label{eq:app:kkt-feas}
  Cw \le \Delta,
  \qquad
  \|w\|_2 \le R.
\end{align}
Dual feasibility:
\begin{align}
  \label{eq:app:kkt-dual}
  \lambda\ge 0,
  \qquad
  \beta\ge 0.
\end{align}
Complementary slackness:
\begin{align}
  \label{eq:app:kkt-comp}
  \lambda_i\bigl(C_i^\top w-\Delta_i\bigr)=0\ \ \forall i,
  \qquad
  \beta\bigl(\|w\|_2^2-R^2\bigr)=0.
\end{align}

\paragraph{Basic structure.}
If $u\neq 0$, the optimum typically lies on the ball boundary, i.e., $\|w^\star\|_2=R$ and thus $\beta^\star>0$.
(If the halfspaces alone already form a bounded polytope that contains the maximizer strictly inside the ball, then
$\beta^\star=0$ is possible; we include this case below for completeness.)

\begin{algorithm}[t]
  \caption{Reduced-space active-set KKT solver for \eqref{eq:app:reduced-prob}}
  \label{alg:app:reduced-active-set}
  \begin{algorithmic}[1]
    \REQUIRE $u\in\mathbb{R}^{p}$, $C\in\mathbb{R}^{\bar K\times p}$, $\Delta\in\mathbb{R}^{\bar K}$, radius $R>0$,
    tolerances $\tau_{\mathrm{feas}},\tau_{\mathrm{dual}}$, regularization $\varepsilon>0$
    \ENSURE $w^\star$
    \STATE If $\|u\|_2\le \tau_{\mathrm{dual}}$, return $w^\star\leftarrow 0$
    \STATE $w \leftarrow R\,u/\|u\|_2$ \hfill (unconstrained ball maximizer)
    \IF{$Cw \le \Delta + \tau_{\mathrm{feas}}\mathbf{1}$}
    \STATE return $w^\star\leftarrow w$
    \ENDIF
    \STATE Initialize active set $\mathcal{J}\leftarrow \{\arg\max_i (C_i^\top w-\Delta_i)\}$ \hfill (most violated)
    \FOR{$t=1,2,\dots$ until convergence}
    \STATE Form $C_{\mathcal{J}},\Delta_{\mathcal{J}}$, compute $M_{\mathcal{J}}=C_{\mathcal{J}}C_{\mathcal{J}}^\top$
    \STATE Solve $(M_{\mathcal{J}}+\varepsilon I)\,A_{\mathcal{J}}^\top = I$ (i.e., $A_{\mathcal{J}}=(M_{\mathcal{J}}+\varepsilon I)^{-1}$)
    \STATE $w_p \leftarrow C_{\mathcal{J}}^\top A_{\mathcal{J}}\Delta_{\mathcal{J}}$
    \STATE $d \leftarrow (I - C_{\mathcal{J}}^\top A_{\mathcal{J}} C_{\mathcal{J}})u$
    \IF{$\|d\|_2 \le \tau_{\mathrm{dual}}$}
    \STATE $w \leftarrow w_p$,\quad $\beta \leftarrow 0$
    \ELSE
    \STATE Compute $(a,b,c)$ via \eqref{eq:app:t-poly} and set $t^\star$ by \eqref{eq:app:t-star}
    \STATE $w \leftarrow w_p + t^\star d$,\quad $\beta \leftarrow 1/(2t^\star)$
    \ENDIF
    \STATE $\lambda_{\mathcal{J}} \leftarrow A_{\mathcal{J}}\big(C_{\mathcal{J}}u - 2\beta\Delta_{\mathcal{J}}\big)$,\quad $\lambda_{\bar{\mathcal{J}}}\leftarrow 0$
    \IF{$\min_{j\in\mathcal{J}}\lambda_j < -\tau_{\mathrm{dual}}$}
    \STATE Remove $j^\star\in\arg\min_{j\in\mathcal{J}}\lambda_j$ from $\mathcal{J}$ \hfill (restore dual feasibility)
    \STATE \textbf{continue}
    \ENDIF
    \IF{$\max_i (C_i^\top w-\Delta_i) > \tau_{\mathrm{feas}}$}
    \STATE Add $i^\star\in\arg\max_i (C_i^\top w-\Delta_i)$ to $\mathcal{J}$ \hfill (restore primal feasibility)
    \STATE \textbf{continue}
    \ENDIF
    \STATE \textbf{return} $w^\star\leftarrow w$
    \ENDFOR
  \end{algorithmic}
\end{algorithm}
\subsubsection{Closed form for a fixed active set}
\label{sec:appendix:kkt-fixed-active}

Let $\mathcal{J}\subseteq\{1,\dots,\bar K\}$ denote a candidate active set (constraints that are tight at optimum),
and write $C_{\mathcal{J}}\in\mathbb{R}^{|\mathcal{J}|\times p}$ and $\Delta_{\mathcal{J}}\in\mathbb{R}^{|\mathcal{J}|}$
for the corresponding rows/entries. In KKT form,
\[
  C_{\mathcal{J}} w = \Delta_{\mathcal{J}},
  \qquad
  \lambda_{\mathcal{J}}\ge 0,
  \qquad
  \lambda_{\bar{\mathcal{J}}}=0.
\]

\paragraph{Eliminating $w$ in terms of $(\beta,\lambda_{\mathcal{J}})$}
Assume first that $\beta>0$. From \eqref{eq:app:kkt-stat},
\begin{equation}
  \label{eq:app:w-beta}
  w(\beta,\lambda_\mathcal{J})
  =
  \frac{1}{2\beta}\Big(u - C_{\mathcal{J}}^\top \lambda_\mathcal{J}\Big).
\end{equation}
Imposing the active equalities $C_{\mathcal{J}}w=\Delta_{\mathcal{J}}$ yields the reduced linear system
\begin{equation}
  \label{eq:app:lambda-beta}
  \big(C_{\mathcal{J}} C_{\mathcal{J}}^\top\big)\lambda_\mathcal{J}
  =
  C_{\mathcal{J}} u - 2\beta\,\Delta_{\mathcal{J}}.
\end{equation}
This is exactly the multi-constraint analogue of the scalar equations in CPO's single-constraint LQCLP derivation:
there, $|\mathcal{J}|=1$ and $C_{\mathcal{J}}C_{\mathcal{J}}^\top$ is a scalar, leading to a fully analytical solution.

\paragraph{Rank and regularization.}
If $C_{\mathcal{J}}C_{\mathcal{J}}^\top$ is ill-conditioned, we solve
\[
  \big(C_{\mathcal{J}} C_{\mathcal{J}}^\top + \varepsilon I\big)\lambda_\mathcal{J}
  =
  C_{\mathcal{J}} u - 2\beta\,\Delta_{\mathcal{J}},
\]
with a small $\varepsilon>0$ (or use a pseudoinverse). In LOCU, near-collinear compression makes this case rare.

\paragraph{Explicit decomposition using a projector}

Define
\[
  M_{\mathcal{J}} := C_{\mathcal{J}} C_{\mathcal{J}}^\top \in \mathbb{R}^{|\mathcal{J}|\times|\mathcal{J}|},
  \qquad
  A_{\mathcal{J}} := M_{\mathcal{J}}^{-1},
  \qquad
  P_{\mathcal{J}} := I - C_{\mathcal{J}}^\top A_{\mathcal{J}} C_{\mathcal{J}} \in \mathbb{R}^{p\times p}.
\]
$P_{\mathcal{J}}$ is the (Euclidean) orthogonal projector onto the nullspace of $C_{\mathcal{J}}$.
Also define the minimum-norm particular solution of the active equalities:
\[
  w_p := C_{\mathcal{J}}^\top A_{\mathcal{J}}\,\Delta_{\mathcal{J}}
  \quad \text{so that}\quad
  C_{\mathcal{J}} w_p = \Delta_{\mathcal{J}}.
\]
Solving \eqref{eq:app:lambda-beta} and substituting into \eqref{eq:app:w-beta} yields the closed form
\begin{equation}
  \label{eq:app:w-projector-form}
  w(\beta)
  =
  w_p + \frac{1}{2\beta}\,P_{\mathcal{J}}u.
\end{equation}
Thus the solution for a fixed $\mathcal{J}$ lies on the affine space $C_{\mathcal{J}}w=\Delta_{\mathcal{J}}$ and moves
only along the nullspace direction $P_{\mathcal{J}}u$ as $\beta$ varies.

\paragraph{Determining $\beta$ from the ball constraint}

If $\beta>0$, complementary slackness enforces $\|w(\beta)\|_2 = R$. Let
\[
  d := P_{\mathcal{J}}u,\qquad t := \frac{1}{2\beta}\ (>0).
\]
Then \eqref{eq:app:w-projector-form} becomes
\begin{equation}
  \label{eq:app:w-affine}
  w(t)=w_p + t\,d,
\end{equation}
and the ball boundary condition is
\begin{equation}
  \label{eq:app:t-quadratic}
  \|w_p + t\,d\|_2^2 = R^2.
\end{equation}
Expanding gives a quadratic equation in $t$:
\begin{equation}
  \label{eq:app:t-poly}
  a\,t^2 + 2b\,t + c = 0,
  \qquad
  a:=\|d\|_2^2,\ \ b:=w_p^\top d,\ \ c:=\|w_p\|_2^2-R^2.
\end{equation}

\paragraph{Case 1: $d\neq 0$ (generic).}
Then $a>0$. Feasibility of the active-equality set inside the ball implies the discriminant is nonnegative:
\[
  b^2-a c \ge 0.
\]
Among the two roots,
\[
  t_{\pm} = \frac{-b \pm \sqrt{b^2-a c}}{a},
\]
the objective increases with $t$ because
\[
  u^\top w(t) = u^\top w_p + t\,u^\top d = u^\top w_p + t\,\|d\|_2^2,
\]
using $u^\top d = u^\top P_{\mathcal{J}}u = \|P_{\mathcal{J}}u\|_2^2=\|d\|_2^2$.
Hence the maximizer for the fixed active set uses the \emph{larger} root:
\begin{equation}
  \label{eq:app:t-star}
  t^\star
  =
  \frac{-b + \sqrt{b^2-a c}}{a},
  \qquad
  \beta^\star = \frac{1}{2t^\star},
  \qquad
  w^\star = w_p + t^\star d.
\end{equation}

\paragraph{Case 2: $d=0$ (degenerate/projection case).}
Then $P_{\mathcal{J}}u=0$, meaning $u\in\mathrm{range}(C_{\mathcal{J}}^\top)$ and the objective is constant over the
affine set $C_{\mathcal{J}}w=\Delta_{\mathcal{J}}$:
if $u=C_{\mathcal{J}}^\top y$, then $u^\top w = y^\top C_{\mathcal{J}}w = y^\top \Delta_{\mathcal{J}}$.
In this case, any feasible $w$ satisfying the active equalities is optimal; we pick the minimum-norm choice $w^\star=w_p$.
If $\|w_p\|_2<R$, one can set $\beta^\star=0$; if $\|w_p\|_2=R$, any $\beta^\star\ge 0$ satisfies the ball slackness.

\paragraph{Recovering multipliers $\lambda_{\mathcal{J}}$}

Once $t^\star$ (equivalently $\beta^\star$) is determined, the multipliers on the active constraints follow from
\eqref{eq:app:lambda-beta}:
\begin{equation}
  \label{eq:app:lambda-from-t}
  \lambda_{\mathcal{J}}^\star
  =
  A_{\mathcal{J}}\Big(C_{\mathcal{J}}u - 2\beta^\star \Delta_{\mathcal{J}}\Big)
  =
  A_{\mathcal{J}}\Big(C_{\mathcal{J}}u - \frac{1}{t^\star}\Delta_{\mathcal{J}}\Big),
  \qquad
  \lambda_{\bar{\mathcal{J}}}^\star = 0.
\end{equation}
By construction, $(w^\star,\lambda^\star,\beta^\star)$ satisfies stationarity, the active equalities, and the ball boundary.
The remaining KKT requirements are:
(i) $\lambda_{\mathcal{J}}^\star\ge 0$ and (ii) all inactive constraints are satisfied strictly (or at least not violated),
i.e., $C w^\star \le \Delta$.

\subsubsection{Active-set search over $\mathcal{J}$}
\label{sec:appendix:kkt-active-set}

The closed form above turns the multi-constraint problem into a small-dimensional active-set search:
for a candidate $\mathcal{J}$, we can compute a \emph{unique} KKT candidate $(w^\star,\lambda^\star,\beta^\star)$ (up to
degenerate cases) via \eqref{eq:app:t-star} and \eqref{eq:app:lambda-from-t}, and then verify feasibility and sign
conditions. Because the reduced dimension $p=m+1$ is small, this procedure is inexpensive.

We use a primal active-set loop:
start from the unconstrained ball maximizer, add violated constraints until feasibility is reached, and drop constraints with
negative multipliers until dual feasibility is restored.

\paragraph{Remarks.}
(i) The loop terminates when both primal feasibility ($Cw\le\Delta$) and dual feasibility
($\lambda\ge 0$) are satisfied; then KKT holds because stationarity and complementarity are enforced by construction.
(ii) The only matrix inverse is $(C_{\mathcal{J}}C_{\mathcal{J}}^\top + \varepsilon I)^{-1}$ with size $|\mathcal{J}|\le p$.
(iii) When $|\mathcal{J}|=1$, the computation of $t^\star$ and $\lambda_{\mathcal{J}}^\star$ reduces to the familiar
two-constraint (linear + quadratic) LQCLP algebra in CPO, with the additional simplification that $H=I$ in LOCU's
reduced coordinates.

\paragraph{Optional: one-dimensional search for $\beta$ (implementation alternative)}

Instead of solving \eqref{eq:app:t-poly} in closed form, one may determine $\beta$ by a monotone 1D search.
For fixed $\mathcal{J}$, \eqref{eq:app:w-projector-form} implies $\|w(\beta)\|_2$ is nonincreasing in $\beta$ whenever
$d=P_{\mathcal{J}}u\neq 0$. Therefore, one can bracket $\beta$ and apply bisection to solve $\|w(\beta)\|_2=R$.
This produces the same $w^\star$ as \eqref{eq:app:t-star} up to numerical tolerance.

\paragraph{Local approximation error.}
The preceding LQCLP derivation relies on the local linearization used in CPO~\cite[Section~6.1, Eq.~(11)]{achiam2017constrained}. To bound the resulting approximation error, hold the current Fisher matrix $F$ fixed and assume each reward or cost objective $J_i$, $i\in\{R,C_1,\ldots,C_K\}$, is twice continuously differentiable near the update segment and satisfies
\begin{equation}
\label{eq:app:local-smoothness}
\sup_{\tau\in[0,1]}\left\|F^{-1/2}\nabla_\theta^2J_i(\pi_{\theta+\tau\Delta\theta})F^{-1/2}\right\|_2\le L_i,\qquad L_i\ge0.
\end{equation}
For any update within the trust region of \eqref{eq:local-trust-region-cmdp}, Taylor's theorem and the fundamental theorem of calculus give
\begin{align}
\label{eq:app:local-objective-error}
\left|J_i(\pi_{\theta+\Delta\theta})-J_i(\pi_\theta)-\nabla_\theta J_i(\pi_\theta)^\top\Delta\theta\right|&\le L_i\delta,\\
\label{eq:app:local-gradient-error}
\left\|F^{-1/2}\bigl(\nabla_\theta J_i(\pi_{\theta+\Delta\theta})-\nabla_\theta J_i(\pi_\theta)\bigr)\right\|_2&\le L_i\sqrt{2\delta}.
\end{align}
Consequently, any constraint whose linearized inequality is satisfied obeys
\begin{equation}
\label{eq:app:local-cost-error}
g_{c_k}^\top\Delta\theta\le\Delta_k\quad\Longrightarrow\quad J_{C_k}(\pi_{\theta+\Delta\theta})\le b_k+L_{C_k}\delta.
\end{equation}
Let $\phi_k$ be the angle between $F^{-1/2}g_{c_k}$ and $F^{-1/2}\nabla_\theta J_{C_k}(\pi_{\theta+\Delta\theta})$, with $\nu_k$ defined in \eqref{eq:method:screen-metrics}. The gradient bound implies
\begin{equation}
\label{eq:app:local-angle-error}
\phi_k\le\arcsin\!\left(\frac{L_{C_k}\sqrt{2\delta}}{\nu_k}\right)\quad\text{if }L_{C_k}\sqrt{2\delta}<\nu_k.
\end{equation}
Higher local curvature or a smaller original gradient norm weakens this directional bound.
\subsection{Practical implementation}
\label{sec:appendix:practical_impl}
\paragraph{Fisher-vector products and natural gradients.}
Fisher-vector products are computed using Hessian--vector products of the batch-average KL divergence between the cached old policy and the current policy,
\begin{equation}
  \bar D_{\mathrm{KL}}(\pi_{\theta_{\mathrm{old}}}\,\|\,\pi_{\theta})
  :=
  \mathbb{E}_{s\sim\mathcal{B}}
  \Big[
    \mathrm{KL}\big(\pi_{\theta_{\mathrm{old}}}(\cdot\mid s)\,\|\,\pi_{\theta}(\cdot\mid s)\big)
  \Big],
  \qquad
  Fv \approx \nabla^2_{\theta}\,\bar D_{\mathrm{KL}}(\pi_{\theta_{\mathrm{old}}}\,\|\,\pi_{\theta})\,v .
\end{equation}
Natural-gradient directions for the reward and constraints are obtained by conjugate-gradient (CG) solves with Tikhonov damping.
We use a fixed number of CG iterations and add a diagonal damping term to the implicit Fisher operator to stabilize the solves under finite-sample noise.

\paragraph{Fisher Gram matrix and L\"owdin basis construction.}
Let $S$ stack the reward direction and the representative constraint directions used for basis construction.
We form the Fisher Gram matrix
\[
  O := S^\top F S \in \mathbb{R}^{(m+1)\times(m+1)} .
\]
Before computing the inverse square root, we symmetrize and regularize $O$ as
\[
  O \leftarrow \tfrac12 (O+O^\top) + \varepsilon_{\mathrm{reg}} I .
\]
We compute $O^{-1/2}$ via eigendecomposition,
\[
  O = U \Lambda U^\top,
  \qquad
  O^{-1/2} = U \Lambda^{-1/2} U^\top ,
\]
where eigenvalue flooring is applied by replacing each $\lambda_j$ with $\max\{\lambda_j,\lambda_{\min}\}$ prior to inversion.
This prevents numerical instability when the Gram matrix is ill-conditioned.
The L\"owdin basis is then constructed as
\[
  E := S\,O^{-1/2}.
\]

\paragraph{Reduced quantities and parameter update.}
Reduced-space quantities are obtained by projection onto the L\"owdin basis:
\[
  u := E^\top g_r,
  \qquad
  C_k^\top := E^\top g_{c_k}, \quad k\in\mathcal{K}.
\]
The trust-region radius is set to $R=\sqrt{2\delta}$.
After solving the reduced program for $w^\star$, the actor update in parameter space is given by
\[
  \Delta\theta^\star = E w^\star .
\]

\paragraph{Reward-aligned constraint filtering (optional).}
During basis construction, constraints whose Fisher cosine similarity with the reward direction exceeds a fixed threshold are optionally excluded from basis candidates.
This heuristic avoids introducing nearly collinear reward--constraint directions into the reduced basis.
In our benchmark environments, strong alignment between reward and constraint gradients is rarely observed in practice, and we found that disabling this filter does not measurably affect performance.
Accordingly, we retain this step as an optional numerical safeguard rather than a core assumption of the method.

\paragraph{Numerical tolerances.}
All linear solves in the reduced space use small ridge regularization.
Feasibility and dual checks in the active-set routine are evaluated with fixed tolerances.
If the reduced objective norm falls below a small threshold, the update defaults to the zero step.

\begin{algorithm}[t]
  \caption{LOCU one actor update}
  \label{alg:locu}
  \begin{algorithmic}[1]
    \REQUIRE policy $\pi_\theta$, trust region $\delta$, limits $\{b_k\}_{k=1}^K$,
    thresholds $\varepsilon_{\mathrm{nz}},\varepsilon_{\mathrm{dup}}$, regularization $\varepsilon_{\mathrm{reg}}$
    \STATE Estimate reward and cost gradients $g_r$, $\{g_{c_k}\}_{k=1}^K$
    \STATE Estimate Fisher-vector products for the FIM $F$
    \STATE Compute slacks $\Delta_k=b_k-J_{C_k}(\pi_\theta)$ and set $R\leftarrow \sqrt{2\delta}$
    \STATE Compute natural gradients $\tilde g_r=F^{-1}g_r$ and $\tilde g_{c_k}=F^{-1}g_{c_k}$ (with damping if needed)
    \STATE Form $S_{\mathrm{full}}=[\tilde g_r,\tilde g_{c_1},\dots,\tilde g_{c_K}]$ and $O_{\mathrm{full}}=S_{\mathrm{full}}^\top F S_{\mathrm{full}}$
    \STATE Compute $\nu_k,\mu_k,\varrho_{ij}$ via \eqref{eq:method:screen-metrics}--\eqref{eq:method:screen-cos}
    \STATE Build enforced set $\mathcal{K}$ by excluding
    (i) non-binding constraints (Prop.~\ref{prop:nonbinding-gram}) and
    (ii) near-zero constraints $\nu_k\le \varepsilon_{\mathrm{nz}}$
    \STATE Build near-collinear groups on $\mathcal{K}$ using edges $|\varrho_{ij}|\ge 1-\varepsilon_{\mathrm{dup}}$
    \STATE Set $i_\ell \in \arg\min_{k\in\mathcal{G}_\ell} \mu_k$ for each group $\mathcal{G}_\ell$
    \STATE Form $S=[\tilde g_r,\tilde g_{c_{i_1}},\dots,\tilde g_{c_{i_m}}]$ and $O=S^\top F S$; regularize $O$ if needed
    \STATE Compute L\"owdin basis $E=S O^{-1/2}$
    \STATE Build reduced quantities $u=E^\top g_r$ and $C_k^\top=E^\top g_{c_k}$ for all $k\in\mathcal{K}$
    \IF{all retained slacks satisfy $\Delta_k \ge 0$}
      \STATE Solve the reduced program \eqref{eq:method:reduced-qp} for $w^\star$
    \ELSE
      \STATE Compute Pareto descent direction $\bar g_p$ via \eqref{eq:method:pareto-qp}--\eqref{eq:method:pareto-dir}
      \STATE Set $u\leftarrow \bar g_p$ and solve the recovery program \eqref{eq:method:recovery-qp} for $w^\star$
    \ENDIF
    \STATE Set $\Delta\theta^\star \leftarrow E w^\star$
    \STATE Update $\theta \leftarrow \theta + \Delta\theta^\star$
  \end{algorithmic}
\end{algorithm}


\section{Experimental Details}
\label{sec:appendix:experimental-details}
\subsection{Task details}\label{sec:appendix:Task details}
\subsubsection{Circle Task from Bullet--safety--gym}
\label{sec:appendix:experimental-details:bullet-circle}

We use the Circle tasks from Bullet-Safety-Gym~\cite{Gronauer2022BulletSafetyGym} with Ball, Car, and Drone agents and the multi-constraint extension from prior work~\cite{yao2024gradient}. The reward of Circle task is defined as
\begin{equation}
  \label{eq:circle_reward}
  r_t
  =
  0.1\,\frac{p_t^\top (J v_t)}{1+\big|\|p_t\|_2-r_{\mathrm{circ}}\big|}
  +0.01\,r_t^{\mathrm{spec}},
  \qquad
  J=
  \begin{bmatrix}0&-1\\1&0
  \end{bmatrix}.
\end{equation}
This term captures the signed tangential velocity around the origin, yielding positive values for clockwise motion along the circular trajectory.
The normalization factor attenuates the reward as the agent deviates from the target radius $r_{\mathrm{circ}}$, so the reward is maximized near $\|p_t\|_2 \approx r_{\mathrm{circ}}$ under sustained clockwise rotation.

The agent-specific reward component $r_t^{\mathrm{spec}}$ follows
\begin{equation}
  \label{eq:circle_agent_reward}
  r_t^{\mathrm{spec}}=
  \begin{cases}
    -0.5\,\|a_t\|_2, & \text{Ball},\\
    0, & \text{Car},\\
    r_t^{\mathrm{alive}}-0.01\,\|\omega_t\|_2^2 - c_t^{\mathrm{ground}}, & \text{Drone}.
  \end{cases}
\end{equation}
Here $a_t$ denotes the clipped action and $\omega_t$ the angular velocity.
The term $c_t^{\mathrm{ground}}$ denotes the ground-contact penalty used by the environment.
This shaping term provides mild agent-dependent regularization without altering the task objective.
For the Ball agent, the $\ell_2$ action penalty regularizes control effort.
The Car agent uses no additional shaping beyond the task reward.
For the Drone agent, $r_t^{\mathrm{alive}}$ provides a positive reward when the agent remains in a valid flying state and a negative reward otherwise, while the remaining terms discourage aggressive rotations and unsafe ground contacts.

\begin{figure*}[t]
  \centering
  \begin{minipage}[t]{0.18\textwidth}
    \centering
    \includegraphics[width=\textwidth, trim=80 40 80 40, clip]{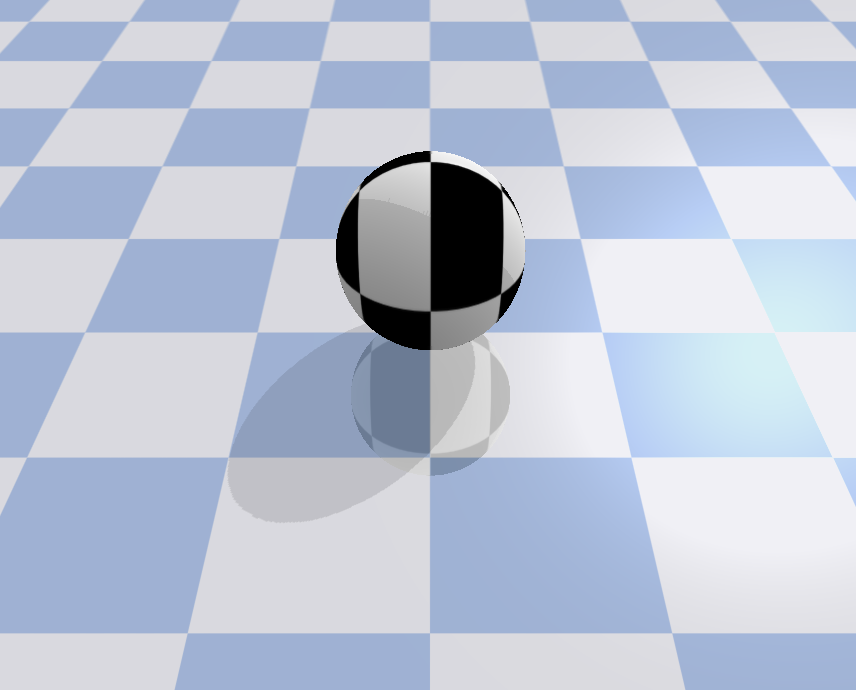}
    \\[-0.3em] Ball
  \end{minipage}
  \hfill
  \begin{minipage}[t]{0.18\textwidth}
    \centering
    \includegraphics[width=\textwidth, trim=80 40 80 40, clip]{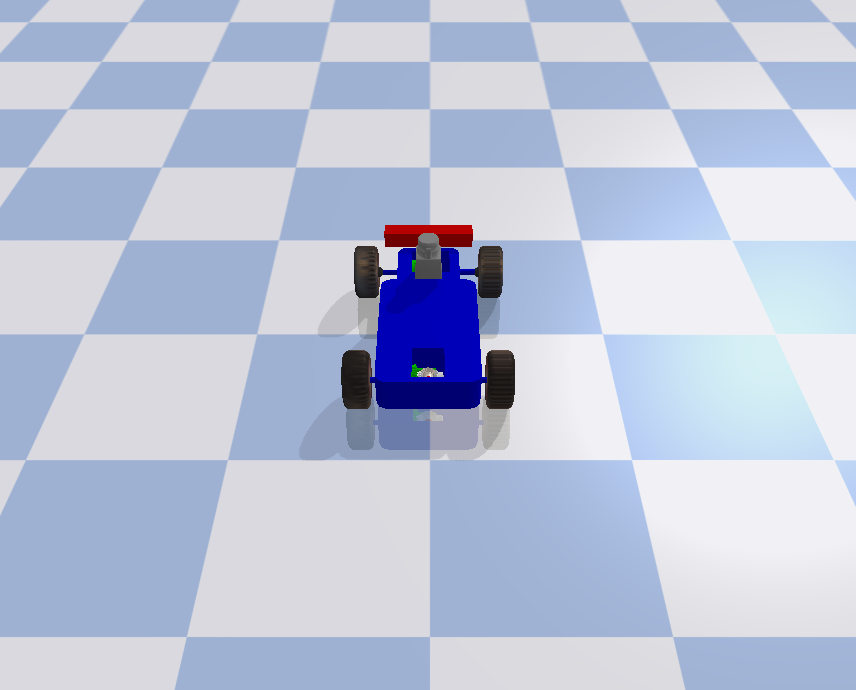}
    \\[-0.3em] Car
  \end{minipage}
  \hfill
  \begin{minipage}[t]{0.18\textwidth}
    \centering
    \includegraphics[width=\textwidth, trim=80 40 80 40, clip]{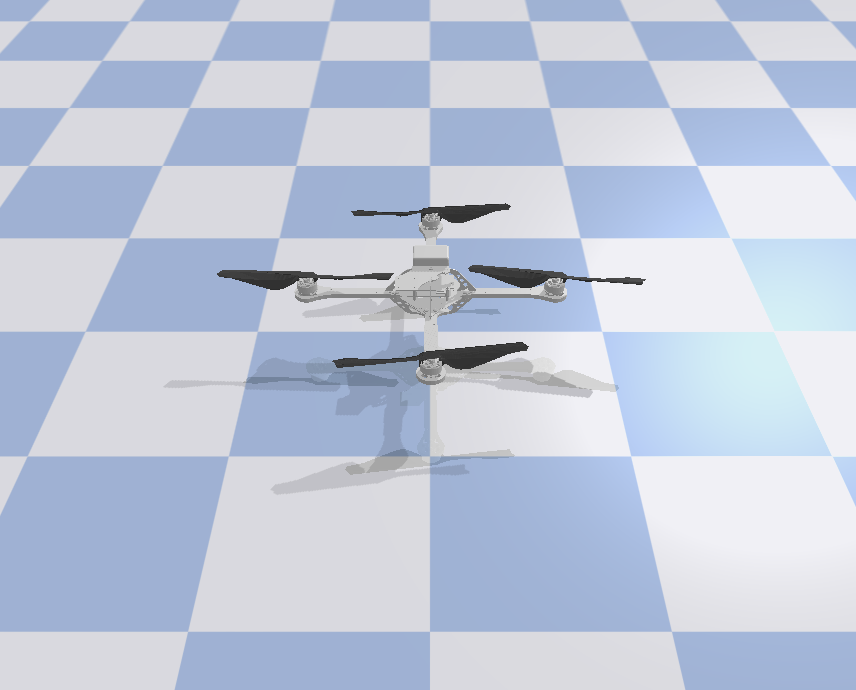}
    \\[-0.3em] Drone
  \end{minipage}
  \hfill
  \begin{minipage}[t]{0.18\textwidth}
    \centering
    \includegraphics[width=\textwidth]{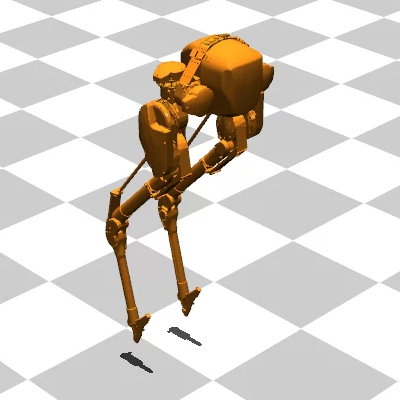}
    \\[-0.3em] Cassie
  \end{minipage}
  \hfill
  \begin{minipage}[t]{0.18\textwidth}
    \centering
    \includegraphics[width=\textwidth]{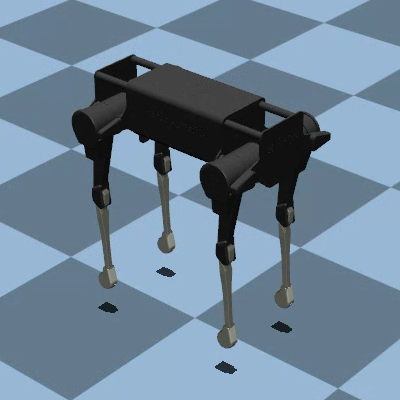}
    \\[-0.3em] Laikago
  \end{minipage}

  \caption{
    Agents used in the multi-constraint evaluation.
  Ball, Car, and Drone are BulletSafetyGym agents, while Cassie and Laikago are bipedal and quadrupedal robots, respectively.}
  \label{fig:benchmark_agents}
\end{figure*}

\paragraph{Constraints.}
We use three per-step binary costs: a boundary cost, a high-velocity cost, and a low-velocity cost.
Let $x_t$ denote the agent position along the $x$-axis and $v_t$ its planar speed.
With boundary threshold $x_{\mathrm{bnd}}>0$, base velocity limit $v_{\max}$, and scaling coefficients
$\kappa_{\mathrm{hi}}\in(0,1]$ and $\kappa_{\mathrm{lo}}\in(0,1)$, we define
\begin{align}
  c_{\mathrm{bdry},t} &= \mathbb{I}\{|x_t| > x_{\mathrm{bnd}}\}, \\
  c_{\mathrm{hi},t}   &= \mathbb{I}\{v_t > \kappa_{\mathrm{hi}}\,v_{\max}\}, \\
  c_{\mathrm{lo},t}   &= \mathbb{I}\{v_t < \kappa_{\mathrm{lo}}\,v_{\max}\}.
\end{align}
The two-constraint setting $\{c_{\mathrm{bdry}},c_{\mathrm{hi}}\}$ is referred to as \textbf{v2}, and the three-constraint setting $\{c_{\mathrm{bdry}},c_{\mathrm{hi}},c_{\mathrm{lo}}\}$ as \textbf{v3}.
Unless otherwise stated, the base v3 configuration uses $x_{\mathrm{bnd}}=6.0$, $\kappa_{\mathrm{hi}}=1.0$, and $\kappa_{\mathrm{lo}}=0.6$.
\begin{table}[t]
  \centering
  \caption{BC-v3 stress-test configurations used in Figure~\ref{fig:results:v3}.}
  \label{tab:bcv3_redun_thresholds}
  \setlength{\tabcolsep}{4pt}
  \renewcommand{\arraystretch}{1.08}
  \small
  \begin{fittable}
  \begin{tabular}{lccc}
    \toprule
    Setting
    & \makecell{Low-velocity\\coefficients $\kappa_{\mathrm{lo}}$}
    & \makecell{Boundary\\thresholds $x_{\mathrm{bnd}}$}
    & \makecell{High-velocity\\coefficients $\kappa_{\mathrm{hi}}$} \\
    \midrule
    Base (v3)
    & $\{\mathbf{0.6}^{(-a)}\}$
    & $\{\mathbf{6.0}^{(-a)}\}$
    & $\{\mathbf{1.0}^{(-a)}\}$ \\
    Redundant-2
    & $\{\mathbf{0.6}^{(-a)},\,0.65\}$
    & $\{\mathbf{6.0}^{(-a)},\,6.5\}$
    & $\{\mathbf{1.0}^{(-a)},\,0.95\}$ \\
    Redundant-3
    & $\{\mathbf{0.6}^{(-a)},\,0.65,\,0.7\}$
    & $\{\mathbf{6.0}^{(-a)},\,6.5,\,7.0\}$
    & $\{\mathbf{1.0}^{(-a)},\,0.95,\,0.9\}$ \\
    \bottomrule
  \end{tabular}
  \end{fittable}
\end{table}

\paragraph{BC-v3 Variants.}
Figure~\ref{fig:results:v3} reports results on several BC-v3 variants obtained by tightening and duplicating constraints on top of the base v3 setting.
All variants share the same task dynamics and reward, and only differ in the constraint coefficients summarized in Table~\ref{tab:bcv3_redun_thresholds}.
Boldfaced coefficients (marked with $^{(-a)}$) correspond to the constraints denoted as ``--a'' in the results and learning-curve figures.

As redundancy increases, the low-velocity coefficient $\kappa_{\mathrm{lo}}$ increases, raising the minimum admissible speed.
This tightens the low-velocity constraint and prolongs the recovery phase early in training, since the agent must first escape low-speed states before achieving sustained motion along the circle.
In parallel, the high-velocity coefficient $\kappa_{\mathrm{hi}}$ decreases, tightening the upper speed limit.
Because the task reward is proportional to tangential velocity, this directly reduces the achievable return by limiting the speed at which the agent can traverse the trajectory.

The tightest boundary coefficient remains fixed across all variants.
Therefore, additional boundary constraints do not further restrict the feasible set, but only introduce near-collinear constraint directions, isolating the effect of constraint redundancy on optimization behavior.

\subsubsection{Locomotion tasks}
\label{sec:appendix:experimental-details:locomotion}

In locomotion tasks the agent tracks velocity commands on flat terrain.
At the beginning of each episode, a command vector
\[
  (v^{\text{cmd}}_x, v^{\text{cmd}}_y, \omega^{\text{cmd}}_z)
\]
is sampled, where $v^{\text{cmd}}_y$ is fixed to $0$ to simplify the task.
In the implementation, $(v^{\text{cmd}}_x,\omega^{\text{cmd}}_z)$ are sampled uniformly from robot-specific ranges:
Cassie uses $v^{\text{cmd}}_x\!\sim\!\mathcal{U}[-1,1]$, while Laikago uses $v^{\text{cmd}}_x\!\sim\!\mathcal{U}[-1,2]$,
and both use $\omega^{\text{cmd}}_z\!\sim\!\mathcal{U}[-0.5,0.5]$.
The reward penalizes squared tracking error between commanded and realized base velocities and adds a small power regularization term.

\paragraph{Constraint Setting}
At time step $t$, let $\theta_t$ denote the angle between the robot base $z$-axis and the world $z$-axis, and let $a>0$ be a fixed balance threshold.
Let $h_t$ denote the center-of-mass (CoM) height with a robot-specific minimum threshold $h_{\min}>0$.
Let $\hat{\xi}_t \in \{-1,1\}^{n_{\mathrm{legs}}}$ be the measured foot contact vector and $\xi_t \in \{-1,1\}^{n_{\mathrm{legs}}}$ the desired periodic contact pattern derived from phase variables.

We define the following per-step costs:
\begin{align}
  c_{\mathrm{bal},t}  &= \mathbb{I}\{\theta_t > a\}, \\
  c_{\mathrm{hgt},t}  &= \mathbb{I}\{h_t < h_{\min}\}, \\
  c_{\mathrm{foot},t} &= \frac{1}{n_{\mathrm{legs}}} \sum_{i=1}^{n_{\mathrm{legs}}}
  \mathbb{I}\{\hat{\xi}_{t,i} \neq \xi_{t,i}\}.
\end{align}

For the balance constraint, we use $a = 15^\circ$.
For the height constraint, we set $h_{\min} = 0.35$ for Laikago and $h_{\min} = 0.7$ for Cassie.
The desired contact pattern $\xi_t$ is defined using per-leg phase variables with a fixed frequency, following the benchmark specification.

Constraint limits are computed from target per-step cost rates by scaling with $1/(1-\gamma)$, following the standard rule for per-step costs bounded in $[0,1]$.
The foot timing constraint is assigned a looser limit than the balance and height constraints, as it primarily serves as a motion regularizer rather than a strict safety constraint.

\paragraph{Episode length and termination handling.}
Episodes run for a fixed horizon of $T$ steps and are not terminated early upon falling.
When a fall is detected, the state and reward are held fixed until the horizon ends, preventing termination effects from confounding constraint evaluation.

\subsubsection{Bipedal Navigation task}
\label{sec:appendix:experimental-details:bipedal-navigation}

\paragraph{Objective.}
To increase task diversity beyond velocity-tracking locomotion, we construct a Cassie goal-navigation task with obstacle avoidance, denoted as CassieGoalHazard-v0. The task extends the Cassie locomotion environment by replacing the commanded-velocity reward with a planar goal-reaching objective and by adding hazard-avoidance constraints adapted from Safety-Gymnasium~\cite{ji2023safety}. This setting is designed to couple legged locomotion stability with state-dependent navigation constraints while preserving a feasible but nontrivial multi-constraint control problem.

At the beginning of each episode, eight circular hazards and one goal are sampled in a square map with extents $[-3.5,3.5]\times[-3.5,3.5]$. The goal radius is $r_{\mathrm{goal}}=0.5$, and each hazard has radius $r_{\mathrm{haz}}=0.35$. The observation augments the Cassie proprioceptive state with 16-bin pseudo-LiDAR measurements for the goal and hazards, together with the goal distance.

Let $d_t$ denote the planar distance between the robot base and the current goal at time step $t$. The reward is the distance reduction toward the goal plus a sparse goal bonus:
\begin{equation}
  r_t
  =
  d_{t-1}-d_t
  +0.2\,\mathbb{I}\{d_t\le r_{\mathrm{goal}}\}.
\end{equation}
When the goal is reached, a new goal is sampled while the hazard layout remains fixed. Thus, the task rewards repeated navigation progress rather than a single terminal success event.

\paragraph{Constraint setting.}
The task retains the three Cassie locomotion-stability costs from Section~\ref{sec:appendix:experimental-details:locomotion} and adds three navigation-specific costs. The full cost vector is ordered as$(c_{\mathrm{bal}},c_{\mathrm{hgt}},c_{\mathrm{foot}},c_{\mathrm{spd}},c_{\mathrm{haz}},c_{\mathrm{eng}}).$ The first three costs are the balance, height, and foot-timing costs used in the locomotion task, with the Cassie height threshold set to $h_{\min}=0.7$.

For the additional costs, let $v_{x,t}$ denote the forward base velocity, let $\mathcal{P}_t$ be the set of floor-contact points of the robot at time $t$, and let $\mathcal{H}$ be the set of hazard centers. We define
\begin{align}
  c_{\mathrm{spd},t}
  &=
  \mathbb{I}\{|v_{x,t}|>1.0\}, \\
  c_{\mathrm{haz},t}
  &=
  \mathbb{I}\{\exists p\in\mathcal{P}_t,\exists h\in\mathcal{H}:\|p-h\|_2\le r_{\mathrm{haz}}\}, \\
  c_{\mathrm{eng},t}
  &=
  \mathbb{I}\{e_t>13.0\},
\end{align}
where $e_t$ is the average actuator-power proxy computed from the absolute actuator force times joint velocity over simulation substeps.

Constraint limits are converted by the same $1/(1-\gamma)$ scaling used for the locomotion tasks. These thresholds were selected through pilot calibration to keep simultaneous satisfaction feasible while avoiding overly loose constraints.

This task introduces qualitatively heterogeneous constraints. Goal-reaching encourages faster progress, while hazard avoidance can require detours, speed regulation can restrict aggressive motion, and energy control penalizes high-actuation strategies. The resulting environment therefore tests whether an algorithm can recover feasibility from infeasible initialization while still incorporating reward-improving directions once the active constraints are controlled. All experiments use the same fixed horizon of $T=500$ steps.

\begin{table}[t]
  \centering
  \caption{Actor/critic architectures and training hyperparameters (shared across methods unless noted).}
  \label{tab:algo_hparams}
  \setlength{\tabcolsep}{3pt}
  \renewcommand{\arraystretch}{1.05}
  \small
  \begin{fittable}
  \begin{tabular}{lccccc}
    \toprule
    Parameter
    & LOCU
    & SDAC(A)
    & PCRPO / CRPO
    & TRPOL / GS
    & PPOL / GS \\
    \midrule
    Actor MLP (hidden, act.)
    & \multicolumn{4}{c}{(128, 128) ReLU / (512, 512) LeakyReLU}
    & (128, 128) ReLU / -- \\
    Policy distribution
    & \multicolumn{4}{c}{Diag.\ Gaussian / squashed Gaussian}
    & Diag.\ Gaussian / -- \\
    Action bounding
    & \multicolumn{4}{c}{clip / tanh-squash}
    & clip / -- \\
    Critic architecture
    & \multicolumn{4}{c}{scalar $V$-critic / distributional $Q$-critic}
    & scalar $V$-critic / -- \\
    Distributional atoms $(M/M')$
    & \multicolumn{4}{c}{-- / (25/50)}
    & -- / -- \\
    \midrule
    Target KL
    & \multicolumn{4}{c}{0.001}
    & 0.02 / -- \\
    Critic optimization iters
    & \multicolumn{4}{c}{20 / 40}
    & -- \\
    Critic learning rate
    & \multicolumn{4}{c}{1e-4 / 3e-4}
    & 1e-4 / -- \\
    L2 regularization
    & \multicolumn{4}{c}{0.001}
    & -- \\
    Damping coefficient
    & \multicolumn{4}{c}{0.1 / 0.01}
    & -- \\
    CG iters
    & \multicolumn{4}{c}{10}
    & -- \\
    Batch size (rollout)
    & \multicolumn{5}{c}{10000} \\
    Minibatch size
    & \multicolumn{4}{c}{--}
    & 256 / -- \\
    Discount factor $\gamma$
    & \multicolumn{5}{c}{0.99} \\
    GAE / trace-decay $\lambda$
    & \multicolumn{5}{c}{0.95 / 0.97} \\
    Advantage normalization
    & \multicolumn{5}{c}{True} \\
    PID gains $(k_p,k_i,k_d)$
    & --
    & --
    & --
    & (0.2, 0.002, 0.1)
    & (0.15, 0.005, 0.1) \\
    \midrule
    \multicolumn{6}{l}{\footnotesize Entries $a/b$ denote Bullet-Safety-Gym / Locomotion, respectively.} \\
    \multicolumn{6}{l}{\footnotesize ``--'' indicates not applicable or not evaluated; PPOL/GS was not run on locomotion tasks.} \\
    \bottomrule
  \end{tabular}
  \end{fittable}
\end{table}

\subsection{Implementation details}
\label{sec:appendix:experimental-details:architecture}

\subsubsection{Baseline Implementation}

\paragraph{SDAC, PID Lagrangian, and Gradient Shaping (GS).}
For SDAC, we use the official implementation released by the original authors.
For PID Lagrangian and Gradient Shaping (GS), we adopt the implementation built on top of the FSRL framework~\cite{liu2024offlinesaferl}, as released by the authors of GS.
Across these baselines, we align common experimental settings and remaining algorithm-specific components, including multiplier dynamics, and shaping rules, are kept identical to the respective authors' implementations without modification.

\paragraph{CRPO.}
We build CRPO on top of the CRPO implementation provided by OmniSafe~\cite{ji2024omnisafe} and extend it to the multi-constraint setting.
When multiple constraints are violated simultaneously, we sample a single constraint to update the policy at each iteration.
The sampling probability is proportional to the magnitude of constraint violation, so that more severely violated constraints are prioritized during training.

\paragraph{PCRPO.}
To the best of our knowledge, no official public implementation of PCRPO is available, although the original paper reports that PCRPO is implemented and evaluated using the OmniSafe framework.
We therefore implement PCRPO directly following the algorithmic description and the configuration settings reported in the paper.
In particular, the appendix provides a configuration table that lists shared hyperparameter settings for PCRPO, including the use of CG iterations.
Based on this specification, we implement PCRPO as a second-order primal method that computes natural-gradient style updates using a CG with its soft-switching method.

PCRPO uses two slack thresholds per constraint, denoted as an upper slack and a lower slack, to determine which objective is optimized at each update.
We set these thresholds by scaling the corresponding constraint limit $b_k$.
In Bullet-Safety-Gym, we use an upper slack of $1.25\,b_k$ and a lower slack of $0.75\,b_k$.
In locomotion tasks, we use an upper slack of $1.0\,b_k$ and a lower slack of $0.75\,b_k$, applied per constraint.

At each iteration, constraints are categorized into three groups based on their current costs relative to the slack thresholds:
(i) constraints with cost above the upper slack,
(ii) constraints with cost between the upper and lower slacks, and
(iii) constraints with cost below the lower slack.
If any constraint falls into group (i), one violated constraint is sampled from this group and a constraint-reduction update is performed.
Otherwise, if group (ii) is non-empty, one constraint is sampled and a reward update is applied together with the gradient-manipulation rule specified by PCRPO to mitigate interference between reward and constraint gradients.
If all constraints belong to group (iii), a feasible update is performed.

\paragraph{Clarabel.}
We use Clarabel to solve the local constrained policy-update problem in two forms: full QP and reduced QP. The full QP optimizes the coefficients of the natural reward and cost gradients, while the reduced QP optimizes the update in LOCU's L\"owdin coordinates, including the orthogonal reward direction. For constraint recovery, the full QP minimizes total constraint slack when the reward-maximization problem is infeasible. The reduced QP minimizes slack for currently violated constraints while enforcing the other constraints.

\subsubsection{Mean--std constraint and distributional cost critics}
\label{sec:appendix:constraint_critics}

SDAC adopts the mean--std safety measure
$F(Z;\alpha_{\mathrm{lo}})
=\mathbb{E}[Z]
+\frac{\phi(\Phi^{-1}(\alpha_{\mathrm{lo}}))}{\alpha_{\mathrm{lo}}}\mathrm{Std}[Z]$
motivated as a practical surrogate via the Gaussian--CVaR connection.
Following the discussion in SDAC, we treat mean--std as an uncertainty-weighted surrogate rather than an exact CVaR constraint \cite{kim2023trust}.

For locomotion tasks, we use a distributional cost critic and denote the predicted quantile samples for constraint $k$ at $(s,a)$ by $\{z_k^{(m)}(s,a)\}_{m=1}^M$.
In our locomotion experiments, we adopt the same task definitions and training protocol as SRCPO \cite{kim2024spectral}.
We keep the distributional critic training unchanged, including the quantile regression objective and the construction of bootstrapped target quantiles, and modify only the aggregation used to compute the mean--std surrogate from predicted quantiles.

The original SDAC aggregation estimates the mean--std surrogate by mixing quantile samples across the batch and then taking a single global standard deviation:
\begin{equation}
\label{eq:mixed-mean-std}
\begin{aligned}
  \widehat{F}_k^{\mathrm{mixed}}
  &=
  \mathrm{Mean}\!\Big(\{z_k^{(m)}(s,a)\}_{(s,a)\in\mathcal{B},\,m}\Big)
  + \sigma(\alpha_{\mathrm{lo}})
  \mathrm{Std}\!\Big(\{z_k^{(m)}(s,a)\}_{(s,a)\in\mathcal{B},\,m}\Big), \\
  \sigma(\alpha_{\mathrm{lo}})
  &=
  \frac{\phi(\Phi^{-1}(\alpha_{\mathrm{lo}}))}{\alpha_{\mathrm{lo}}}.
\end{aligned}
\end{equation}

In our locomotion experiments, this batch mixed aggregation can yield unstable updates and, in some runs, prevent convergence.
We therefore compute the statistics per state--action pair first and then average across the batch:
\begin{equation}
  \widehat{F}_k^{\text{cond}}
  =
  \mathrm{Mean}_{(s,a)\in\mathcal{B}}
  \Big(
    \mathrm{Mean}_m[z_k^{(m)}(s,a)]
    +
\sigma(\alpha_{\mathrm{lo}})\,\mathrm{Std}_m[z_k^{(m)}(s,a)]
  \Big).
\end{equation}
This modification reduces overly conservative behavior caused by batch heterogeneity by ensuring that the standard deviation term reflects within-sample distributional uncertainty rather than variation across states within the batch.

\section{Supplementary Analysis and Results}
\label{sec:appendix:supp-results}

\subsection{Multi-constraint update behaviors of existing Primal-dual algorithms}
\label{sec:appendix:projection-cases}

We summarize multi-constraint update behaviors in representative safe policy optimization algorithms by grouping methods by how they incorporate multiple constraints into a single actor step:
(i) \emph{sequential constraint projection} onto linearized halfspaces and
(ii) \emph{scalarized constraint handling} via a weighted aggregation of constraint signals. We additionally discuss practical \emph{hyperparameter tuning difficulties} that arise in multi-constraint settings.

\subsubsection{Sequential constraint projection}

A common projection-based approach is to first compute a reward-improving step and then enforce constraint satisfaction by projecting it onto a linearized constraint halfspace under a chosen metric. A representative instance is PCPO~\cite{yang2020projection}, which in the single-constraint case solves the projection
\begin{equation}
  \label{eq:app:pcpo-proj-prob}
  \Delta\theta^{+}
  =
  \arg\min_{\Delta\theta\in\mathbb{R}^n}
  \frac12 (\Delta\theta-\Delta\theta^{(0)})^\top L\,(\Delta\theta-\Delta\theta^{(0)})
  \quad\text{s.t.}\quad
  g_c^\top \Delta\theta \le \Delta,
\end{equation}
where $L$ is the projection metric.
PCPO considers $L=I$ for $\ell_2$ projection and $L=F$ for Fisher-metric projection.
The solution is the closed-form correction along $L^{-1}g_c$:
\begin{equation}
  \label{eq:app:pcpo-closed-form}
  \Delta\theta^{+}
  =
  \Delta\theta^{(0)} - \alpha\,L^{-1}g_c,
  \qquad
  \alpha
  =
  \max\!\left\{0,\;
    \frac{g_c^\top \Delta\theta^{(0)}-\Delta}{g_c^\top L^{-1}g_c}
  \right\}.
\end{equation}

For $L=F$, define $u_k=\tilde g_{c_k}/\normF{\tilde g_{c_k}}$ and $\mu_k=\Delta_k/\normF{\tilde g_{c_k}}$.
The Fisher-metric projection onto the halfspace $\ipF{u_k}{\Delta\theta}\le\mu_k$ is
\begin{equation}
  \label{eq:app:fisher-proj-operator}
  P_k(x)
  := \arg\min_{y:\,\ipF{u_k}{y}\le\mu_k}\frac12\normF{y-x}^2
  = x - \big[\ipF{u_k}{x}-\mu_k\big]_+\,u_k,
\end{equation}
where $[t]_+:=\max\{t,0\}$.

As discussed in PCPO~\cite{yang2020projection}, in the multi-constraint setting PCPO can apply these halfspace projections sequentially across constraints. However, this sequential projection procedure can exhibit order dependence and step shrink when constraint normals are correlated.
\Cref{fig:locu_vs_pcpo} visualizes the local model and contrasts LOCU with sequential-projection updates.
The left panel corresponds to LOCU's single reduced-space solve.
The right panel depicts sequential halfspace projections, where order dependence is shown in
\eqref{eq:app:order-ij}--\eqref{eq:app:order-ji} and step shrink is quantified by \eqref{eq:app:norm-shrink}.

\begin{figure*}[t]
  \centering
  \includegraphics[width=\textwidth]{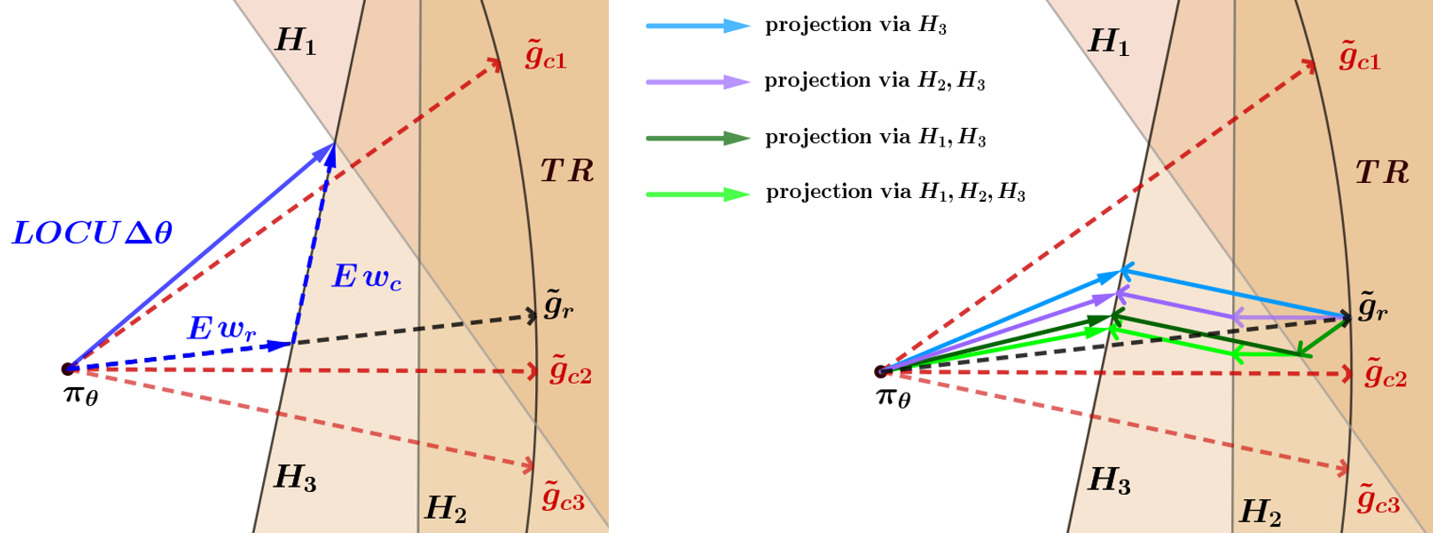}
  \caption{Geometric comparison between LOCU and PCPO updates.
    LOCU solves a reduced QP and produces an order-invariant update under the local model.
  PCPO applies sequential projections onto constraint halfspaces, which can produce order-dependent update directions.}
  \label{fig:locu_vs_pcpo}
\end{figure*}

Consider Fisher halfspaces $\mathcal{H}_k := \{\Delta\theta : \ipF{u_k}{\Delta\theta} \le \mu_k\}$.
Starting from the reward-only step $\Delta\theta^{(0)}$, a sequential projection update applies
\begin{equation}
  \label{eq:app:seq-proj}
  \Delta\theta^{(t)}
  =
  P_{\sigma_t}\big(\Delta\theta^{(t-1)}\big),
  \qquad
  t=1,2,\dots,m,
\end{equation}
for some ordering $\sigma$.

For two constraints $i$ and $j$, define
\[
  s_i := \ipF{u_i}{x},\quad s_j := \ipF{u_j}{x},\quad
  t_i := [s_i-\mu_i]_+,\quad t_j := [s_j-\mu_j]_+,
  \quad
  \rho_{ij}:=\ipF{u_i}{u_j}.
\]
Then
\begin{align}
  \label{eq:app:order-ij}
  P_j(P_i(x))
  &=
  x - t_i u_i - \Big[\,s_j - \rho_{ij}t_i - \mu_j\,\Big]_+ u_j,
  \\
  \label{eq:app:order-ji}
  P_i(P_j(x))
  &=
  x - t_j u_j - \Big[\,s_i - \rho_{ij}t_j - \mu_i\,\Big]_+ u_i.
\end{align}
If both constraints are active and $\rho_{ij}\neq 0$, the correction magnitudes differ and the final update depends on the order.

Sequential projections also reduce the Fisher step length.
Let $t_k := [\ipF{u_k}{x}-\mu_k]_+$.
Using $\normF{u_k}=1$,
\begin{equation}
  \label{eq:app:norm-shrink}
  \normF{P_k(x)}^2
  =
  \normF{x}^2 - t_k\,(t_k+2\mu_k).
\end{equation}
When $\mu_k \ge 0$, any active projection $t_k>0$ strictly decreases $\normF{\cdot}$, so $\normF{\Delta\theta^{(t)}}$ is non-increasing along \eqref{eq:app:seq-proj}.

The reward improvement in the local model is attenuated in proportion to Fisher correlations.
Let $u_r=\tilde g_r/\normF{\tilde g_r}$.
Then
\begin{equation}
  \label{eq:app:reward-shrink}
  \ipF{u_r}{P_k(x)}
  =
  \ipF{u_r}{x} - t_k\,\ipF{u_r}{u_k}.
\end{equation}
If $\ipF{u_r}{u_k}$ is not small across multiple active constraints, sequential projections repeatedly subtract reward-aligned components and can produce overly conservative updates.

\subsubsection{Scalarized constraint handling}
Another prevalent design is to reduce a vector-valued constraint problem to a \emph{single scalar} surrogate by aggregating multiple constraint signals through a weighted sum (typically via dual variables).
This avoids explicit geometric reasoning about how constraint normals interact; instead, interactions appear only through the aggregate scalar and the resulting single update direction.

\paragraph{Constrained Update Projection (CUP)} CUP~\cite{yang2022constrained} explicitly separates an intermediate performance-improvement
iterate $\pi_{\theta_{k+\frac12}}$ from a \emph{projection-to-safety} step.
In the projection stage, CUP solves a constrained KL projection in policy space:
\begin{equation}
  \label{eq:app:cup-projection}
  \pi_{\theta_{k+1}}
  =
  \arg\min_{\pi_\theta\in\Pi_\theta}\;
 \bar D_{\mathrm{KL}}\!\left(\pi_{\theta_{k+\frac12}},\pi_\theta\right)
  \quad\text{s.t.}\quad
  \hat C_{\pi_{\theta_k}}(\pi_\theta,\beta_k)\le b,
\end{equation}
where $\hat C$ denotes a surrogate cost and $\beta_k$ is a penalty coefficient ensuring the theoretical safety bound.
The optimization is implemented via a primal--dual routine with Lagrange multipliers associated with constraints. In the multi-constraint case, multipliers are updated per-constraint based on their individual violations, and the resulting policy update is driven by the aggregated penalty direction induced by the current multiplier vector.

\paragraph{First Order Constraint Optimization on Policy Space (FOCOPS)}

FOCOPS first solves a constrained optimization in policy space with a KL trust region, obtaining a non-parametric optimal policy (shown in~\cite{zhang2020first}, Thm.~1):
\begin{equation}
  \label{eq:app:focops-nonparam}
  \pi^*(a\mid s)
  \propto
  \pi_{\theta_k}(a\mid s)\,
  \exp\!\left(\frac{1}{\lambda}\Big(A_R^{\pi_{\theta_k}}(s,a)-\sum_{k} \nu_k A_{C_k}^{\pi_{\theta_k}}(s,a)\Big)\right),
\end{equation}
where $\lambda>0$ and $\{\nu_k\ge 0\}$ are dual variables.
FOCOPS then projects back to the parametric class by minimizing KL:
\begin{equation}
  \label{eq:app:focops-param-proj}
  \theta_{k+1}
  =
  \arg\min_{\theta}\;
  \mathbb{E}_{s\sim d^{\pi_{\theta_k}}}\!\left[
    \mathrm{KL}\!\left(\pi^*(\cdot\mid s)\,\|\,\pi_\theta(\cdot\mid s)\right)
  \right].
\end{equation}

\paragraph{Overlap and cancellation under scalarization.}
Both CUP and FOCOPS rely on a scalarized aggregate of multiple constraints, e.g.,
the aggregated advantage in \eqref{eq:app:focops-nonparam}
\[
  \hat A(s,a)
  := A_R(s,a) - \sum_{k=1}^K \nu_k A_{C_k}(s,a),
\]
or, in parameter space, an aggregated penalty direction of the form
\[
  \tilde g_{\mathrm{eff}}
  =
  \tilde g_r - \sum_{k=1}^{K}\lambda_k \tilde g_{c_k}.
\]
In multi-constraint regimes, the behavior of the aggregate is governed by pairwise correlations among constraint
directions. In Fisher geometry, the penalty direction satisfies
\begin{equation}
  \label{eq:app:scalarized-interference}
  \normF{\sum_{k=1}^K \lambda_k \tilde g_{c_k}}^2
  =
  \sum_{i=1}^K \lambda_i^2 \normF{\tilde g_{c_i}}^2
  \;+\;
  2\sum_{1\le i<j\le K}\lambda_i\lambda_j\,\ipF{\tilde g_{c_i}}{\tilde g_{c_j}}.
\end{equation}
The cross terms encode \emph{interference}:
if $\ipF{\tilde g_{c_i}}{\tilde g_{c_j}}<0$ (opposing directions), the aggregate can exhibit partial cancellation,
weakening constraint correction even when individual constraints are violated;
if $\ipF{\tilde g_{c_i}}{\tilde g_{c_j}}>0$ (overlapping directions), the aggregate can over-emphasize a redundant
direction, effectively reducing the rank of meaningful constraint information and making the update sensitive to the
instantaneous multiplier vector.
Since scalarization does not explicitly model the constraint geometry, trade-offs among conflicting constraints are resolved
implicitly through multiplier weights rather than through an explicit reduced-space analysis of active constraints.

\subsubsection{Hyperparameter tuning difficulties} \label{sec:appendix:lagrangian}
\begin{table}[t]
  \centering
  \caption{PID coefficient candidate values used in our experiments.}
  \label{tab:pid_coeffs}
  \begin{tabular}{l l}
    \toprule
    Coefficient & Candidate values \\
    \midrule
    $K_P$ & $\{0.25,\,1\}$ \\
    $K_I$ & $\{10^{-4},\,10^{-3},\,10^{-2},\,10^{-1},\,1\}$ \\
    $K_D$ & $\{0.01,\,0.1,\,1\}$ \\
    \bottomrule
  \end{tabular}
\end{table}
Primal--dual Lagrangian methods handle constraints by introducing multipliers and optimizing
\begin{equation}
  \label{eq:app:lagrangian_form}
  \max_{\theta}\;\min_{\lambda\ge 0}\;
  J_R(\pi_\theta)\;-\;\sum_{k=1}^{K}\lambda_k\big(J_{C_k}(\pi_\theta)-b_k\big),
\end{equation}
so the actor update direction is $g_r-\sum_k \lambda_k g_{c_k}$.
In practice, the training behavior can be dominated by the dual feedback loop and becomes sensitive to the
initialization of $\lambda$ and the relative step sizes of primal and dual updates, which often leads to substantial hyperparameter tuning and unstable learning~\cite{paternain2019constrained,Kamran2022modern}.

PID Lagrangian replaces simple dual ascent with a PID-style controller over the constraint error
$e_k := J_{C_k}(\pi_\theta)-b_k$~\cite{stooke2020responsive},
\begin{equation}
  \label{eq:app:pid_update}
  \lambda_k \leftarrow
  \Big[\lambda_k + K_P e_k + K_I \sum_{\tau \le t} e_{k,\tau} + K_D (e_k - e_{k-1})\Big]_+,
\end{equation}
However, selecting  $K_P$, $K_I$, and $K_D$ remains a practical bottleneck.

\begin{figure}[t]
  \centering
  \includegraphics[width=\linewidth]{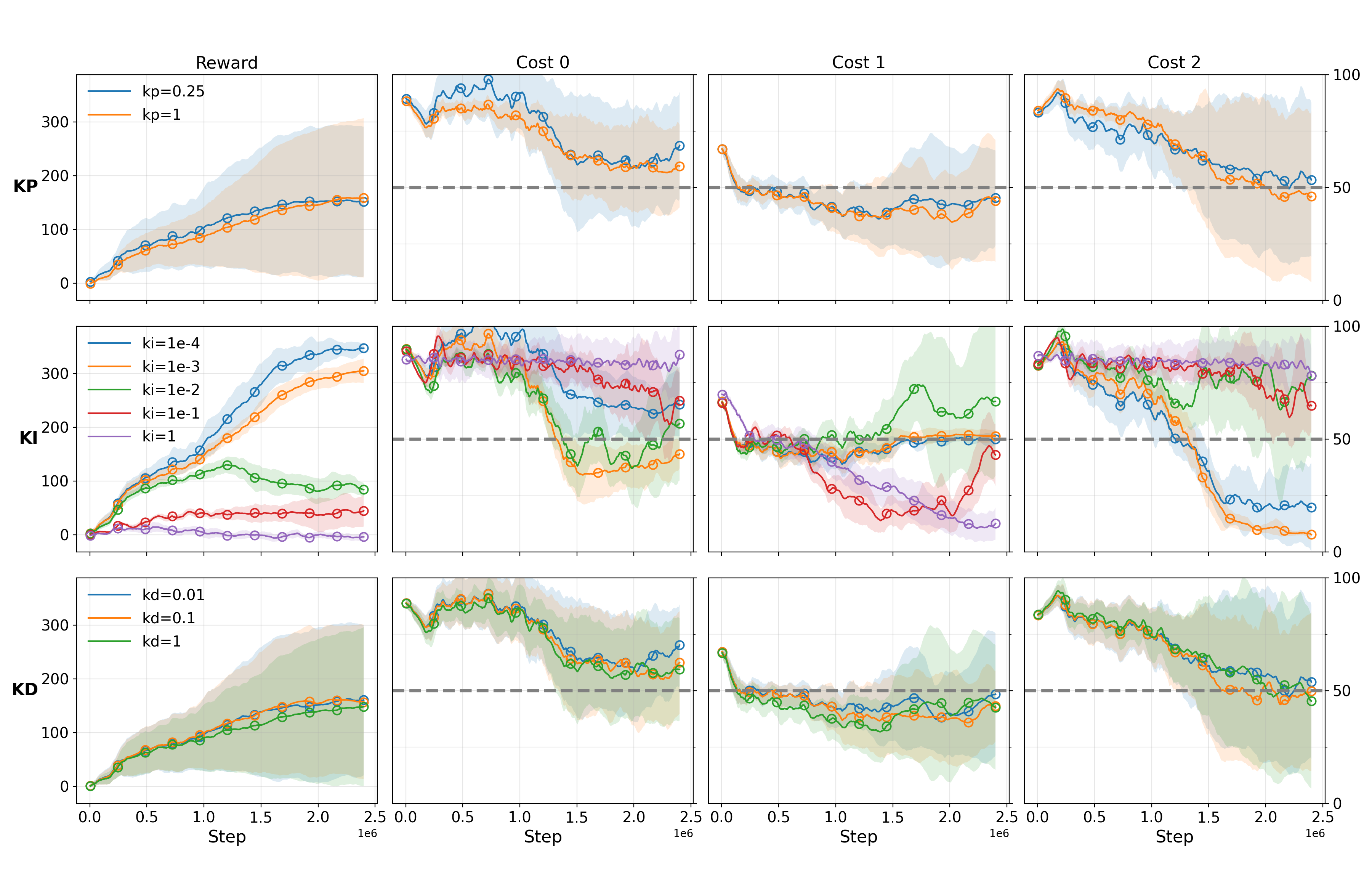}
  \caption{Learning-curve comparison of PID PPO-Lagrangian under P I D coefficient variations on the Bullet Safety Gym BC-v3 task. Dashed lines in the cost plots indicate the corresponding limits.}
  \label{fig:pid_gain_sweep}
\end{figure}

We evaluated PID PPO-Lagrangian with gradient shaping~\cite{yao2024gradient} under different PID coefficient choices. We used similar PID coefficient sets used for the coefficient learning-curve comparisons in Appendix~C of the PID-Lagrangian paper~\cite{stooke2020responsive}, summarized in Table~\ref{tab:pid_coeffs}. All combinations were evaluated with five random seeds. In Figure~\ref{fig:pid_gain_sweep}, each curve was obtained by averaging all runs that shared the coefficient value of interest over the remaining two PID coefficients and seeds, and the shaded region denotes the standard deviation over the same set of runs.

Gradient shaping mitigates cancellation and overlap among constraint gradients in the actor update~\cite{yao2024gradient}, yet sensitivity of the multiplier dynamics from \eqref{eq:app:pid_update} still remains a challenge. Across the tested settings, only a narrow regime such as $K_I=10^{-3}$ yields relatively stable boundary tracking in this task, suggesting that the robustness reported in single-constraint PID Lagrangian
analyses~\cite{stooke2020responsive} can be weakened when multiple constraints are coupled through a policy update.

\subsection{Perturbation sensitivity of Löwdin and Gram-Schmidt coefficients}
We compare the perturbation sensitivity of L\"owdin and Gram--Schmidt orthogonalization through theoretical analysis and empirical evaluation, focusing on how stochastic noise propagates through their respective orthogonalization procedures.
\subsubsection{Theoretical analysis}
\label{sec:appendix:lowdin-gs-noise}

Using the notation in Sections~\ref{sec:policy-gradients}, \ref{sec:fisher-geometry}, and
\ref{sec:method:lowdin}, let
$S=[\tilde g_r,\tilde g_{c_{i_1}},\ldots,\tilde g_{c_{i_m}}]$ and $O=S^\top F S$.
Suppose $O$ can have small eigenvalues due to near-collinear directions. Under stochastic gradient noise $\Xi$,
where $\widehat S:=S+\Xi$, the perturbed Fisher Gram matrix is $\widehat O:=\widehat S^\top F\widehat S$. Then
\begin{equation}
  \label{eq:app:deltaO}
  \Delta O
  :=
  \widehat O-O
  =
  S^\top F\Xi+\Xi^\top FS+\Xi^\top F\Xi .
\end{equation}
Its norm is bounded by
\begin{equation}
  \label{eq:app:deltaO-bound}
  \|\Delta O\|_2
  \le
  2\|F^{1/2}S\|_2\|F^{1/2}\Xi\|_2
  +
  \|F^{1/2}\Xi\|_2^2 .
\end{equation}
For LOCU, since Proposition~\ref{prop:coeff-gram} gives
$E^\top g_r=O^{1/2}e_1$ and $E^\top g_{c_{i_j}}=O^{1/2}e_{j+1}$, the sensitivity depends on the noise of the square root.

\paragraph{For L\"owdin Orthogonalization.}
Let $O=Q\Lambda Q^\top$, with $Q$ the eigenvector matrix so that $Q^\top(\cdot)Q$ represents a change to the
eigenbasis, and $\Lambda=\mathrm{diag}(\lambda_1,\ldots,\lambda_{m+1})$. Then, defining
$\Delta_L:=\widehat O^{1/2}-O^{1/2}$, the first-order term satisfies the Sylvester equation:
\begin{equation}
  \label{eq:app:lowdin-sylvester}
  O^{1/2}\Delta_L+\Delta_LO^{1/2}
  =
  \Delta O+\mathcal{O}(\|\Delta O\|_2^2).
\end{equation}
So in the eigenbasis of $O$,
\begin{equation}
  \label{eq:app:lowdin-eigenbasis}
  Q^\top\Delta_LQ
  =
  \left[
  \frac{(Q^\top\Delta OQ)_{ab}}{\sqrt{\lambda_a}+\sqrt{\lambda_b}}
  \right]_{a,b}
  +
  \mathcal{O}(\|\Delta O\|_2^2),
  \qquad
  a,b\in\{1,\ldots,m+1\}.
\end{equation}
where $a,b\in\{1,\ldots,m+1\}$ index the coordinates in that eigenbasis.

This shows the symmetric structure of L\"owdin. The sensitivity to near-collinearity depends on the denominator in
\eqref{eq:app:lowdin-eigenbasis}, so the effect of the noise is distributed across the spectrum of the whole retained block.

\paragraph{For Gram--Schmidt Orthogonalization.}
Let $O=T^\top T$ and $T=\mathrm{chol}(O)$, the Cholesky factorization of $O$; this is equivalent to Fisher
Gram--Schmidt applied to the columns of $S$, where $T$ collects the resulting projection and normalization
coefficients, so the corresponding reduced coefficients are encoded by the upper-triangular factor $T$.
If $o_{ab}$ and $t_{ab}$ denote the entries of $O$ and $T$, then the Cholesky/Gram--Schmidt recursion is
\begin{equation}
  \label{eq:app:gs-recursion}
  t_{kk}^2
  =
  o_{kk}
  -
  \sum_{s<k}t_{sk}^2,
  \qquad
  t_{kj}
  =
  \frac{
    o_{kj}
    -
    \sum_{s<k}t_{sk}t_{sj}
  }{
    t_{kk}
  },
  \qquad
  j>k .
\end{equation}
Using the first-order Taylor expansion of these relations gives
\begin{equation}
  \label{eq:app:gs-pert-diag}
  \Delta t_{kk}
  =
  \frac{
    \Delta o_{kk}
    -
    2\sum_{s<k}t_{sk}\Delta t_{sk}
  }{
    2t_{kk}
  }
  +
  \mathcal{O}(\|\Delta O\|_2^2),
\end{equation}
and
\begin{equation}
  \label{eq:app:gs-pert-offdiag}
  \Delta t_{kj}
  =
  \frac{
    \Delta o_{kj}
    -
    \sum_{s<k}
    \left(
      \Delta t_{sk}t_{sj}
      +
      t_{sk}\Delta t_{sj}
    \right)
    -
    t_{kj}\Delta t_{kk}
  }{
    t_{kk}
  }
  +
  \mathcal{O}(\|\Delta O\|_2^2),
  \qquad
  j>k .
\end{equation}
This shows the sequential noise structure of Gram--Schmidt: the noise at stage $k$ is divided by the current pivot
$t_{kk}$, and later quantities inherit earlier noise through the recursion.

\paragraph{Comparison in update.}
Fundamentally, both orthogonalization methods suffer under severe collinearity. LOCU addresses this via pairwise
screening and compression, $|\varrho_{ij}|\ge 1-\varepsilon_{\mathrm{dup}}$. However, this cannot fully prevent
span-level near-collinearity, where a retained vector can align with the span of others much more tightly than the
threshold $\varepsilon_{\mathrm{dup}}$.

In worst-case scenarios, the Gram matrix $O$ becomes near-singular, destabilizing both methods. For L\"owdin, this
forces the minimum eigenvalue $\lambda_{\min}(O)\approx0$, driving the denominator in
\eqref{eq:app:lowdin-eigenbasis} to approach zero. For Gram--Schmidt, the instability manifests in the sequential
pivots. Rewriting \eqref{eq:app:gs-recursion} in Schur-complement form yields
\begin{equation}
  \label{eq:app:gs-schur-pivot}
  t_{kk}^2
  =
  o_{kk}
  -
  O_{1:k-1,k}^\top
  O_{1:k-1,1:k-1}^{-1}
  O_{1:k-1,k}
  =
  o_{kk}\bigl(1-\rho_{k\mid 1:k-1}^2\bigr).
\end{equation}
Thus, span-level near-collinearity, $\rho_{k\mid 1:k-1}^2\approx1$, drives the normalized residual
$t_{kk}^2/o_{kk}$ to near zero.

In such cases, both methods require regularization/damping to avoid division by near zero. However, their noise and
bias propagation differ fundamentally as shown earlier. In Gram--Schmidt, the sequential structure causes both
stochastic noise and the damping bias introduced at step $k$ to be inherited and compounded by all subsequent steps
$j>k$, leading to cumulative distortion in the resulting basis geometry. Conversely, L\"owdin applies symmetric
matrix-level regularization, e.g., $O+\varepsilon I$, which distributes both perturbation and damping effects across
the entire eigenbasis without order-dependent accumulation. By treating all vectors simultaneously, L\"owdin preserves
the mutual symmetry and geometric integrity of the basis far more effectively than the inherently hierarchical
Gram--Schmidt approach.

\subsubsection{Empirical comparison}

We evaluated LOCU on the BC-v3 Redun : 3 task by replacing Löwdin orthogonalization with Gram--Schmidt orthogonalization under three gradient orderings (original, random, and reverse), while injecting observation noise into each training batch following \cite{liu2024offlinesaferl}.

As shown in Figure~\ref{fig:lowdin-gs-noise}, L\"owdin orthogonalization generally performs below Gram--Schmidt index but above random and reverse, attributed to its symmetric distribution of perturbation-induced errors across gradient directions. In this task, the last three constraints in the indexed order are the loosest low-velocity constraints, and the differences among the Gram--Schmidt variants reflect their inherent ordering dependence. Under the indexed order, accumulated errors primarily propagate to these loose constraints, whereas under the reverse order, they affect the reward gradient and the tighter high-velocity constraints. The noise implementation of~\cite{liu2024offlinesaferl} adds perturbations of a fixed magnitude without accounting the relative scale of each observation. At the highest noise level of 0.25, all methods degrade substantially, while the pronounced failure of Gram--Schmidt-random suggests that Gram--Schmidt orthogonalization can be more vulnerable than L\"owdin when large stochastic perturbations distort the gradient geometry.

\begin{figure}[!t]
    \centering
    \includegraphics[width=\linewidth]{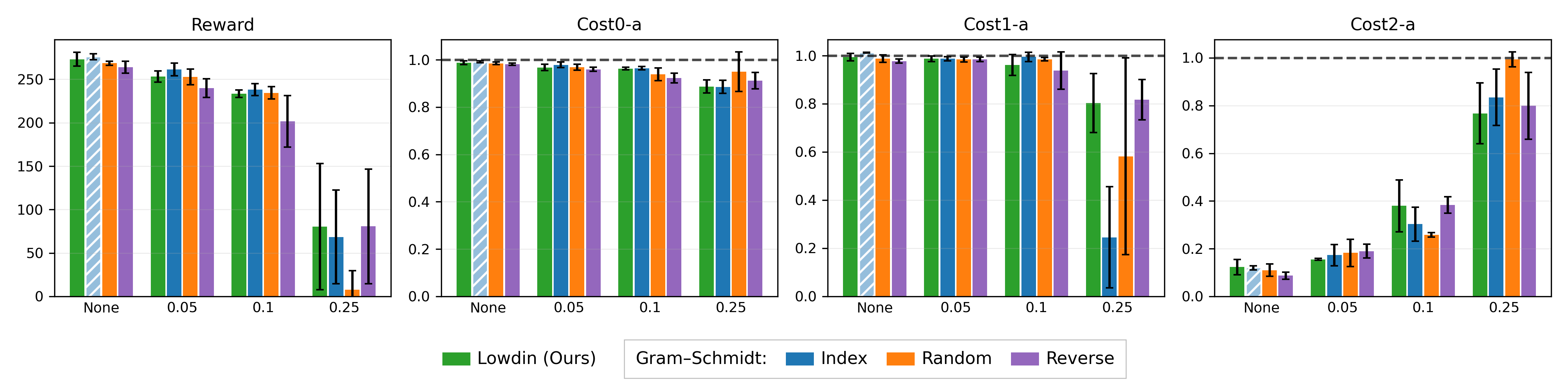}
    \caption{Performance comparison of L\"owdin and Gram--Schmidt orthogonalization under observation noise on BC-v3 Redun 3.}
    \label{fig:lowdin-gs-noise}
\end{figure}

\subsection{Computational Complexity}\label{app:computational}

\subsubsection{Wall clock time results}\label{app:computational cost}
Table~\ref{tab:wallclock} reports wall clock training time on BC-v2 and redun 3 variants of BC-v3, measured on a single machine with an NVIDIA RTX 4070 GPU, an Intel i7-13700K CPU, and 64GB RAM.
PPOL-V and PPOL-GS are excluded from the redundancy results in the main text due to unstable learning in earlier stages, but we additionally trained them solely to compare wall clock time and include their times in the table. Wall clock time is logged with Weights and Biases~\cite{wandb}.

\begin{table}[H]
  \centering
  \caption{Wall-clock time on the main circle task.}
  \label{tab:wallclock}
  \setlength{\tabcolsep}{4pt}
  \renewcommand{\arraystretch}{1.08}
  \small
  \begin{tabularx}{\textwidth}{lYYYYYYYY}
    \toprule
    Task
    & LOCU
    & SDAC(A)
    & PCRPO
    & CRPO
    & \makecell{TRPOL\\-GS}
    & \makecell{TRPOL\\-V}
    & \makecell{PPOL\\-V}
    & \makecell{PPOL\\-GS} \\
    \midrule
    BC v2
    & 17:08 & 18:20 & 17:30 & 17:23 & 16:08 & 16:27 & 17:11 & 18:29 \\
    BC v3 Redun 3
    & 25:02 & 25:40 & 23:00 & 22:53 & 17:02 & 17:05 & 22:28 & 23:40 \\
    \bottomrule
  \end{tabularx}
\end{table}

\begin{table}[H]
  \centering
  \caption{Wall-clock time on Circle tasks with parallel processing.}
  \label{tab:wallclock_circle2}
  \setlength{\tabcolsep}{4pt}
  \renewcommand{\arraystretch}{1.08}
  \small
  \begin{tabularx}{\textwidth}{llYYYYYY}
    \toprule
    Task
    & Redun
    & LOCU
    & SDAC(A)
    & CRPO
    & TRPOL
    & \makecell{PPOL\\(bs=64)}
    & \makecell{PPOL\\(bs=128)} \\
    \midrule
    \multirow{3}{*}{PointCircle}
    & 2 & 10:19 & 10:55 & 10:18 & 10:22 & 21:08 & 15:37 \\
    & 3 & 10:31 & 11:08 & 10:19 & 10:25 & 21:13 & 15:40 \\
    & 5 & 10:44 & 11:19 & 10:22 & 10:29 & 21:20 & 15:45 \\
    \bottomrule
  \end{tabularx}
\end{table}

With implementation for main text experiment, LOCU computes natural-gradient directions for all constraints through CG then solves a QP, so its cost can grow with the number of constraints. However, this overhead is not always dominant. In BC-v2, where only a small number of constraints are used, PPO-style baselines often spend more time on repeated mini-batch actor optimization and multiple constraint-critic updates, which can outweigh CG solves. 

In BC-v3 Redun 3, where the number of constraints increases to nine, wall-clock time increases for most methods, with the largest increases observed for LOCU and SDAC(A). This reflects the cumulative cost of computing multiple constraint natural-gradient directions before the constrained update. The comparison should also be interpreted with implementation details: SDAC(A) uses OSQP and SciPy backends for its QP~\cite{osqp,2020SciPy-NMeth}, whereas LOCU implementation are in Python-level.

\subsubsection{Parallel processing}
To examine whether this bottleneck can be reduced, we construct a MuJoCo-JAX (MJX)~\cite{todorov2012mujoco} version of the CircleRun-v3 Redun task and implement LOCU, SDAC(A), CRPO, PPOLag, and TRPOLag with vectorized constraint processing and end-to-end XLA compilation. This targets a common limitation of Python-based safe RL implementations, where most computations are executed sequentially and therefore use only part of the available hardware throughput. Table~\ref{tab:wallclock_circle2} reports the wall-clock time of the MJX implementation on PointCircle with redundancy levels 2, 3, and 5. The results show that full-batch second-order actor updates can be faster than minibatch-based first-order updates with parallel processing. LOCU, SDAC(A), CRPO, and TRPOL complete each run in roughly 10--11 minutes, whereas PPOLag takes 15--21 minutes depending on the minibatch size, denoted as bs in the table. 

As expected, LOCU and SDAC(A), which compute natural-gradient directions for all constraints, show a larger redundancy-dependent increase than CRPO and TRPOL, which only compute single natural-gradient. Under the MJX implementation, however, this increase is small: from redundancy level 2 to 5, LOCU increases from 10:19 to 10:44 and SDAC(A) from 10:55 to 11:19, i.e., less than 30 seconds while the number of constraints grows from 6 to 15. By contrast, in the Python-based implementation in Table~\ref{tab:wallclock}, increasing from BC-v2 with two constraints to BC-v3 Redun 3 with nine constraints increases LOCU and SDAC(A) by roughly 7--8 minutes. Compared with the substantially larger cost of PPOLag, the remaining difference between LOCU and the single-natural-gradient methods can be considered minor.

Overall, the results indicate that parallelized constraint processing substantially reduces wall-clock sensitivity to the number of constraints compared with sequential Python-based implementations. The same implementation strategy may therefore be useful for other multi-constraint safe RL methods or multi-objective settings. More broadly, the results weaken the view that second-order safe RL methods are impractical solely due to computational burden. When multiple constraints must be handled and constraint satisfaction is central to the task, LOCU or other trust-region-based second-order methods can remain practical options under parallel processing.

\subsubsection{Inverse fisher preconditioner}
\label{app:naturalgradient cost}

Natural gradients are typically computed using conjugate gradient (CG), which needs to be applied separately to each gradient, the computational cost increases linearly with the number of constraints.
An alternative option can be using a shared approximate Fisher inverse or preconditioner across all gradients. Since the FIM is defined solely by the outer product of policy score functions, independent of advantage estimates, layer-wise curvature approximations such as K-FAC and its variants can provide a single approximate inverse applicable to all reward and constraint gradients. Although these approaches can be more computationally efficient than using CG iterations depending on the network architecture and the number of constraints, further investigation is required to assess its approximation accuracy, numerical stability, and overall effectiveness in multi-constraint policy optimization.

\subsection{Additional Experiments}\label{app:additional experiments}
We conduct additional experiments to evaluate LOCU from three perspectives. First, we examine the contribution of its algorithmic components through ablation studies. Second, we investigate hyperparameter sensitivity by varying the clustering threshold and KL limit. Finally, we evaluate robustness to redundant constraints by increasing the number of constraints while preserving the feasible region.
\subsubsection{Component Analysis}\label{app:ablation}
We compare LOCU against two ablated variants and a CPO baseline.
The two ablated variants in Figure~\ref{fig:ablation:components} are labeled as \texttt{w/o Lowdin} and \texttt{w/o Lowdin, Screening \& compression}.
\texttt{w/o Lowdin} removes L\"owdin orthogonalization and only normalizes each direction by its Fisher norm before applying the same reduced update.
\texttt{w/o Lowdin, Screening \& compression} further removes screening and clustering and uses all normalized constraint directions in the same update procedure.
For the CPO baseline, we follow the original implementation where infeasibility and degenerate cases use a recovery direction formed by summing the gradients of currently violated constraints, and the resulting QP is solved with OSQP~\cite{osqp}.
Removing L\"owdin slows convergence relative to CPO and only recovers return late in training.
When both L\"owdin and screening with compression are removed, the early stage becomes highly seed dependent under simultaneous violations, and some runs finish with less than half of LOCU's final return.
This is consistent with redundant and correlated constraint directions producing ill conditioned reduced systems and noisy recovery when they are treated as distinct correction directions without orthogonalization or reduction.

\begin{figure*}[!t]
  \centering
  \includegraphics[width=\textwidth]{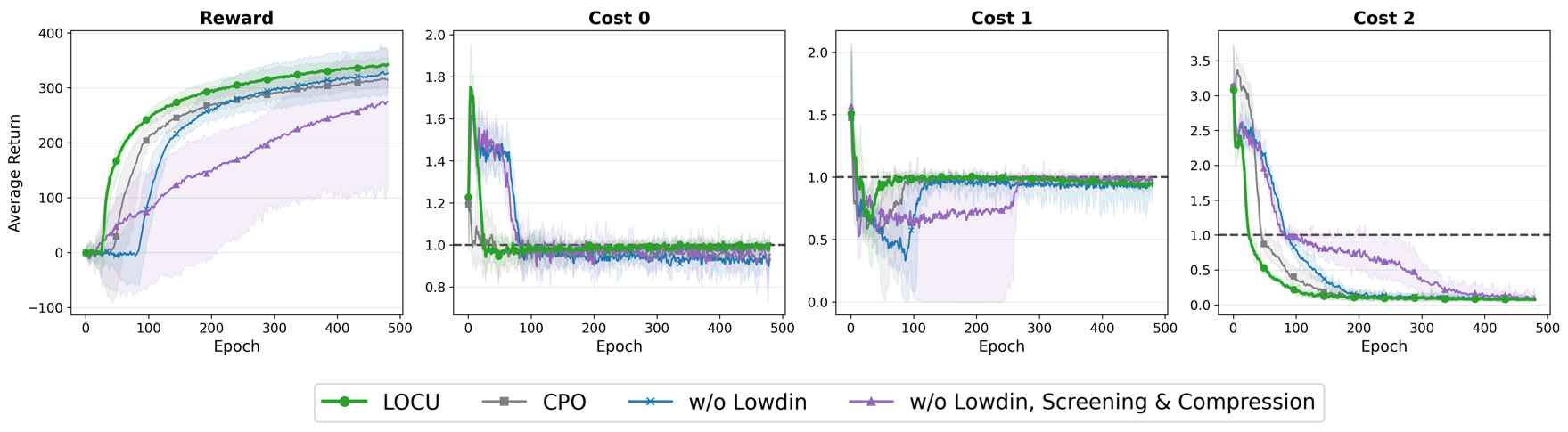}
  \caption{
    Learning curves for \texttt{w/o Lowdin} and \texttt{w/o Lowdin, Screening \& compression}, compared with the CPO baseline.
  }
  \label{fig:ablation:components}
\end{figure*}

\subsubsection{Hyperparameter Sensitivity}
We test the cosine similarity threshold $\varepsilon_{\mathrm{dup}}$ that controls near-collinear clustering in Section~\ref{sec:method:screening}.
Since $\varepsilon_{\mathrm{dup}}$ is used to decide whether constraint gradients are treated as redundant, we evaluate it on BC-v3 Redun : 3, where redundancy is strongest. Figure~\ref{fig:ablation:epsdup} compares threshold values 0.05, 0.1, and 0.2. Differences appear mainly in the early recovery stage, while final returns remain very similar across all tested values. These gaps are small relative to typical across-seed variation and may narrow with additional seeds. Normalized costs converge below the limits in all cases, indicating that LOCU is not particularly sensitive to the exact choice of $\varepsilon_{\mathrm{dup}}$ within this range and does not require careful angle tuning to avoid ill-conditioned updates.
\begin{figure*}[!t]
  \centering
  \includegraphics[width=\textwidth]{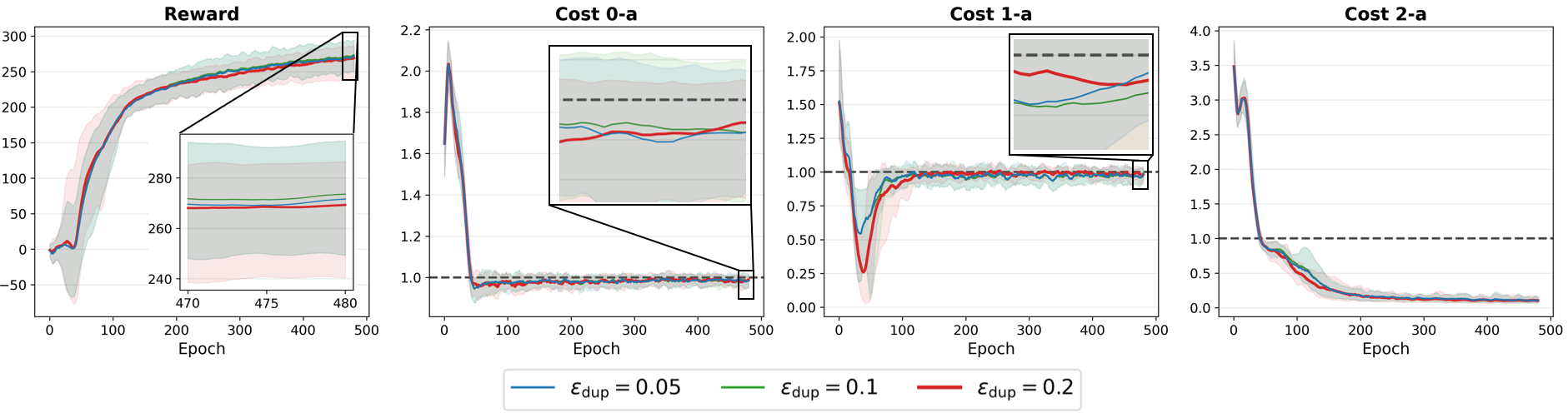}
  \caption{
    Ablation on the clustering threshold $\varepsilon_{\mathrm{dup}}$ for near-collinear constraint grouping on BC-v3 Redun : 3.
  }
  \label{fig:ablation:epsdup}
\end{figure*}

Next, we evaluate sensitivity to the trust region radius against SDAC(A) on the BC-v3 task. As the limit size decreases, a smaller KL limit generally reduces approximation error but yields less policy improvement per update. Since our task starts from an infeasible state, a smaller radius requires more updates to reach the feasible region, leaving fewer remaining updates for reward optimization. Both algorithms exhibit similar performance degradation in reward, while showing no further sensitivity in cost satisfaction, as shown in Figure~\ref{fig:kl:sensitivity}.

\begin{figure*}[!t]
\centering
\includegraphics[width=\textwidth]{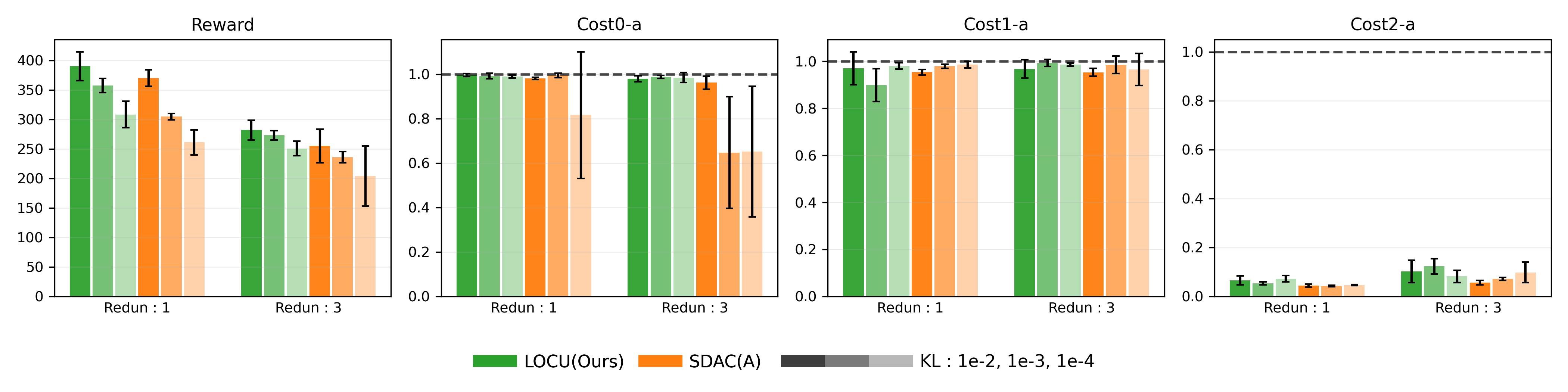}
\caption{
Sensitivity analysis on the KL divergence limit compared against SDAC(A) on BC-v3.
}
\label{fig:kl:sensitivity}
\end{figure*}

Finally, we examine numerical hyperparameter sensitivity on the BC-v3 Redun : 3 task by simultaneously assigning $10^{-4}$ and $10^{-6}$ to the near-zero threshold, Fisher damping, and CG tolerance. The baseline setup uses default values $10^{-2}$ for the near-zero threshold, $10^{-1}$ for Fisher damping, and $10^{-8}$ for CG tolerance. As illustrated in Figure~\ref{fig:numerical:sensitivity}, smaller sweep values yield a small increase in reward while causing larger variance across runs. This behavior is attributed to reduced Fisher damping, which produces a closer approximation to the true natural gradient but increases sensitivity to near-zero eigenvalues of the Fisher matrix, making individual updates more accurate yet overall training less stable.

\begin{figure*}[!t]
\centering
\includegraphics[width=\textwidth]{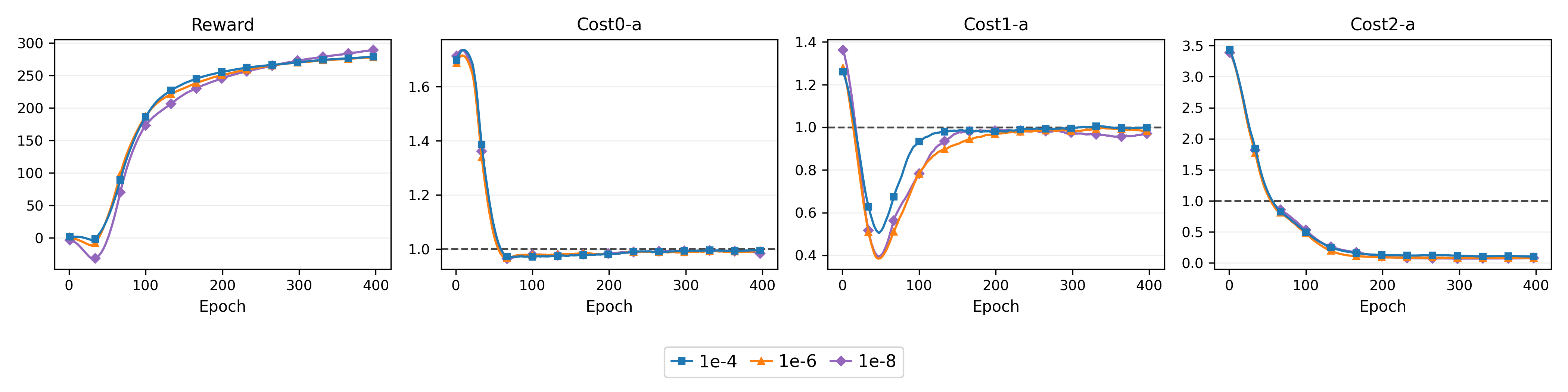}
\caption{
Ablation on threshold, damping, and CG tolerance values on BC-v3 Redun : 3.
}
\label{fig:numerical:sensitivity}
\end{figure*}

\subsubsection{Robustness to Redundant Constraints}
\label{sec:appendix:robustness-test}

To evaluate robustness to a larger number of constraints without changing the feasible region, we introduce redundant constraint variants to the goal-object Cassie task.
For each base constraint, we keep the original tight threshold and add two looser variants, increasing the total number of constraints from 6 to 18.
For example, we use thresholds of \(15^\circ/20^\circ/25^\circ\) for body tilt, \(13/15/18\,\mathrm{W}\) for energy, and \(0.0/0.05/0.1\,\mathrm{m}\) for the hazard contact margin.
This creates a setting with substantial redundancy and heterogeneity across constraint gradients.

Figure~\ref{fig:robustness-redun-vs-base-eval} compares the final evaluation results between the original 6-constraint Cassie task and the redundant 18-constraint version.
LOCU preserves reward under \(3\times\) constraint scaling while satisfying all constraints at convergence.
Figure~\ref{fig:robustness-redun-vs-base-learning} further shows that the learning dynamics remain stable after adding redundant constraints, indicating that the additional constraints do not substantially degrade optimization.

\begin{figure}[!t]
  \centering
  \includegraphics[width=0.92\textwidth]{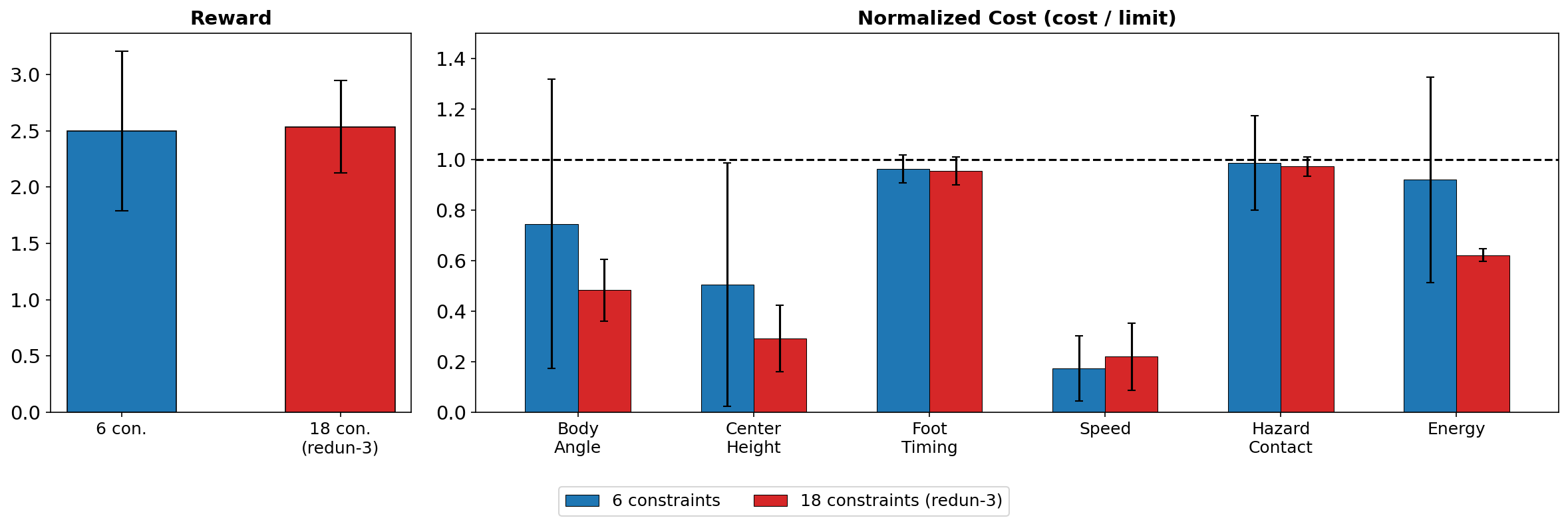}
  \caption{Final evaluation comparison between the original 6-constraint Cassie task and the redundant 18-constraint version.}
  \label{fig:robustness-redun-vs-base-eval}
\end{figure}

Table~\ref{tab:robustness-screening-statistics} reports the screening statistics during training in the 18-constraint setting.
Even with 18 nominal constraints, only a small subset is active at each update, and redundancy compression further reduces the basis dimension.
By the end of training, the average active set size is 4.9 and the average basis dimension is 2.8, so the practical optimization remains compact.

\begin{table}[!t]
  \centering
  \caption{Screening statistics during training in the 18-constraint redundant Cassie task.}
  \label{tab:robustness-screening-statistics}
  \setlength{\tabcolsep}{4pt}
  \renewcommand{\arraystretch}{1.08}
  \small
  \begin{tabularx}{\textwidth}{lYYYYY}
    \toprule
    Phase & Basis dim. & Active set & Redun. & Inactive & Zero-grad \\
    \midrule
    \(0\text{--}20\%\) & \(5.7 \pm 0.9\) & \(9.5 \pm 1.6\) & \(3.8 \pm 1.0\) & \(8.5 \pm 1.6\) & \(2.5 \pm 1.2\) \\
    \(20\text{--}40\%\) & \(3.8 \pm 1.5\) & \(6.5 \pm 3.1\) & \(2.7 \pm 1.6\) & \(11.5 \pm 3.1\) & \(3.2 \pm 1.7\) \\
    \(40\text{--}60\%\) & \(3.9 \pm 1.7\) & \(6.6 \pm 4.0\) & \(2.7 \pm 1.8\) & \(11.4 \pm 4.0\) & \(1.6 \pm 1.1\) \\
    \(60\text{--}80\%\) & \(3.1 \pm 1.5\) & \(5.5 \pm 4.2\) & \(2.4 \pm 1.7\) & \(12.5 \pm 4.2\) & \(1.2 \pm 0.5\) \\
    \(80\text{--}100\%\) & \(2.8 \pm 1.4\) & \(4.9 \pm 4.0\) & \(2.1 \pm 1.5\) & \(13.1 \pm 4.0\) & \(0.9 \pm 0.4\) \\
    \bottomrule
  \end{tabularx}
\end{table}

\subsection{Learning curves}
\label{app:learning_curves}
We report learning curves for the experiments in the main text and Appendix.
Costs are normalized by their corresponding limits, so values above 1 indicate constraint violations.
\begin{figure}[H]
  \centering
  \includegraphics[width=0.7\textwidth]{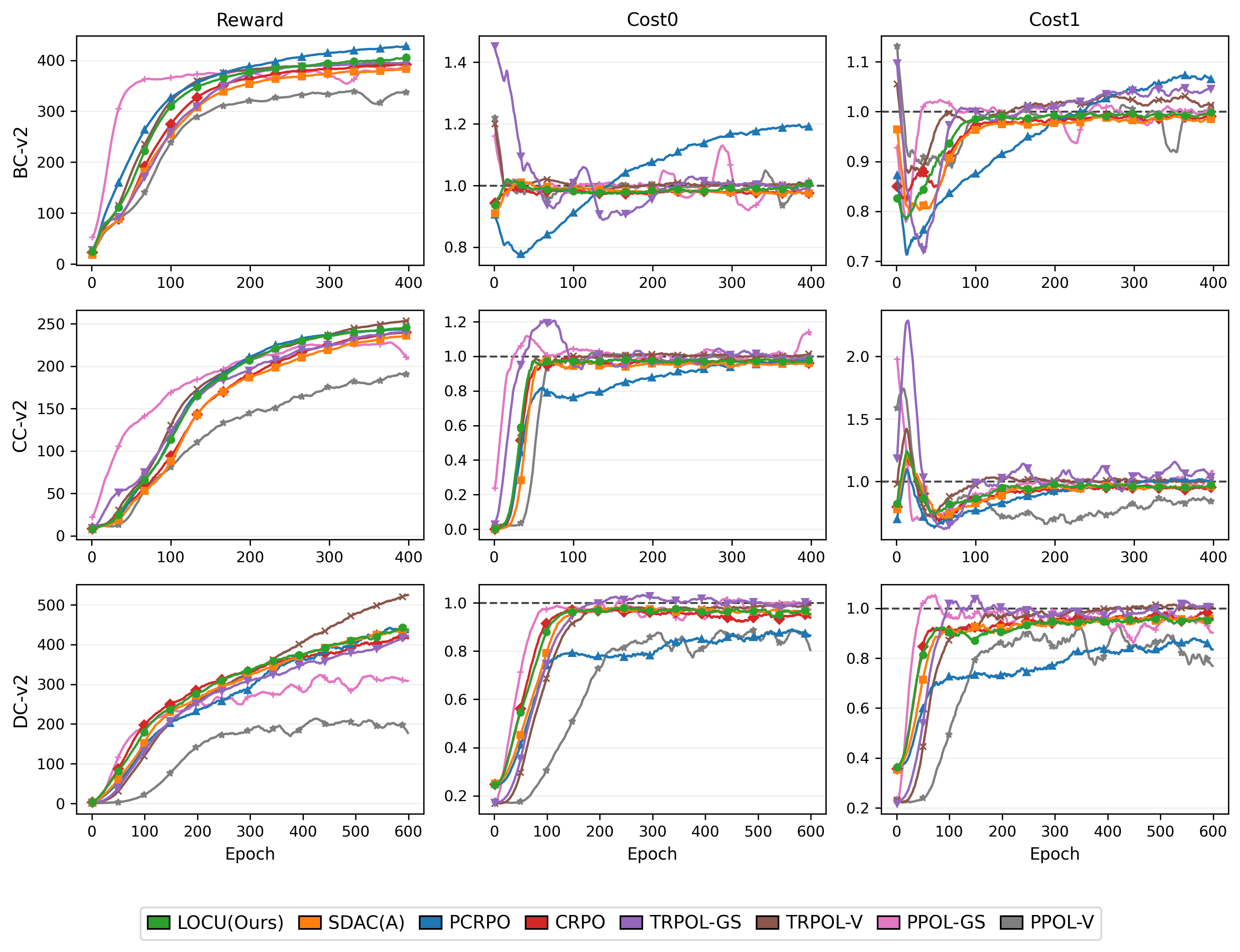}
  \caption{
    Learning curves on the Bullet Safety Gym v2 suite for BC, CC, and DC.
  }
  \label{fig:lc:v2}
\end{figure}
\begin{figure}[H]
  \centering
  \includegraphics[width=\textwidth]{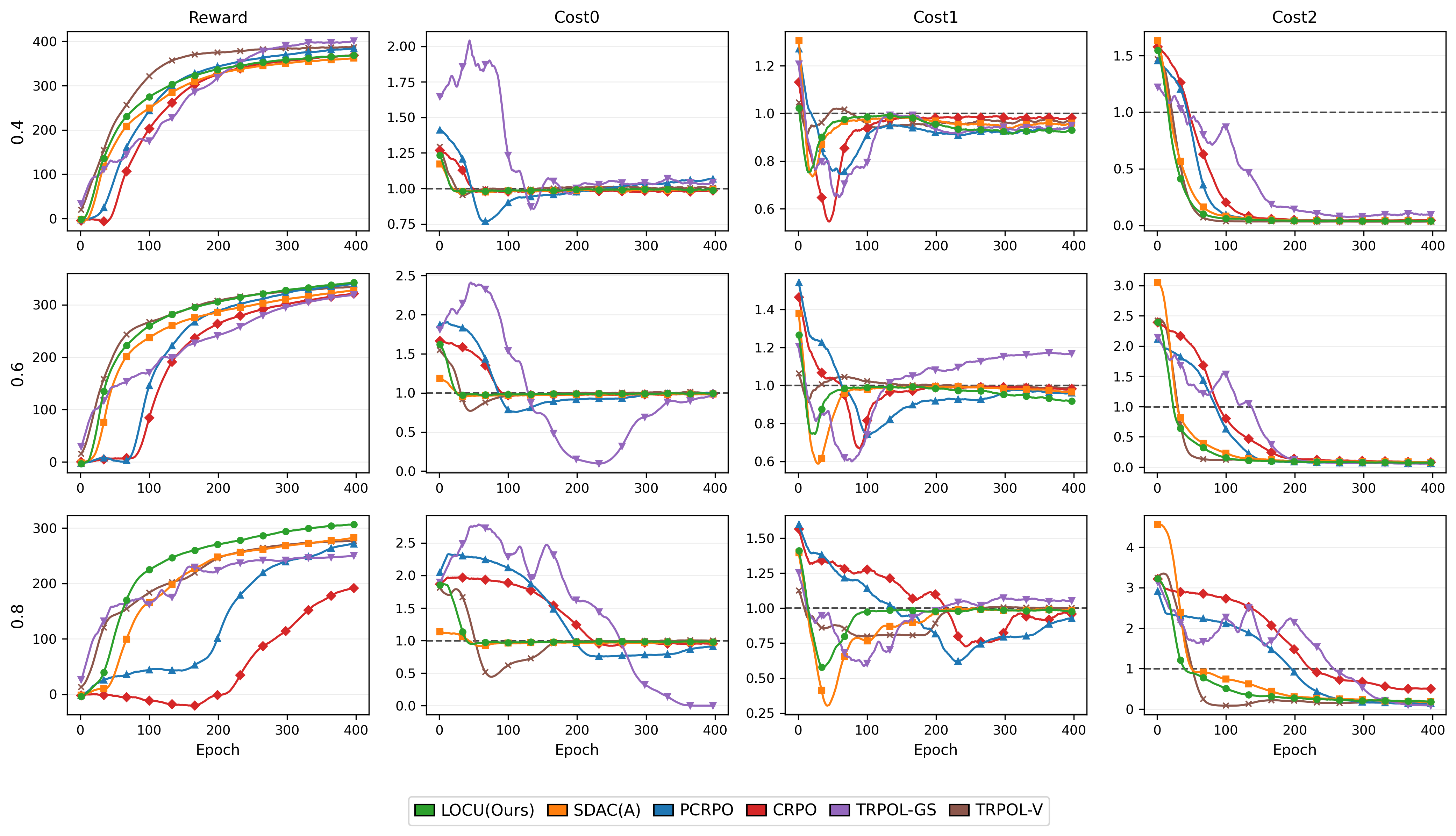}
  \caption{
    Learning curves on BC-v3 under the low velocity sweep from the v2 case to $\kappa_{\mathrm{lo}}\in\{0.4,0.6,0.8\}$.
  }
  \label{fig:lc:bcv3-low}

\end{figure}
\begin{figure}[H]
  \centering
  \includegraphics[width=\textwidth]{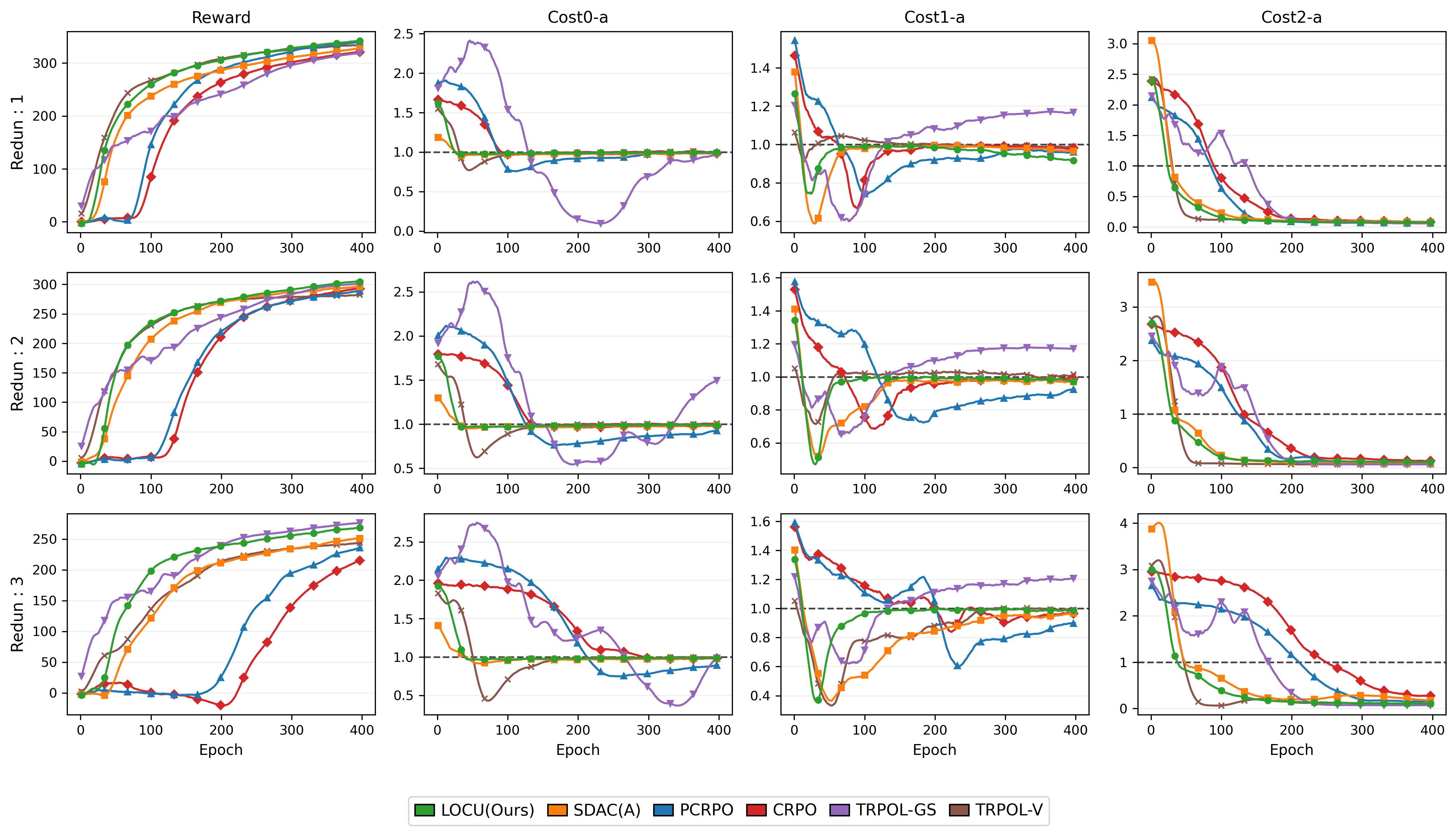}
  \caption{
    Learning curves on BC-v3 under redundancy variants redun : 1, redun : 2, and redun : 3.
    The "-a" denotes the tightest redundant constraint.
  }
  \label{fig:lc:bcv3-redun}
\end{figure}
\begin{figure}[H]
  \centering
  \includegraphics[width=\textwidth]{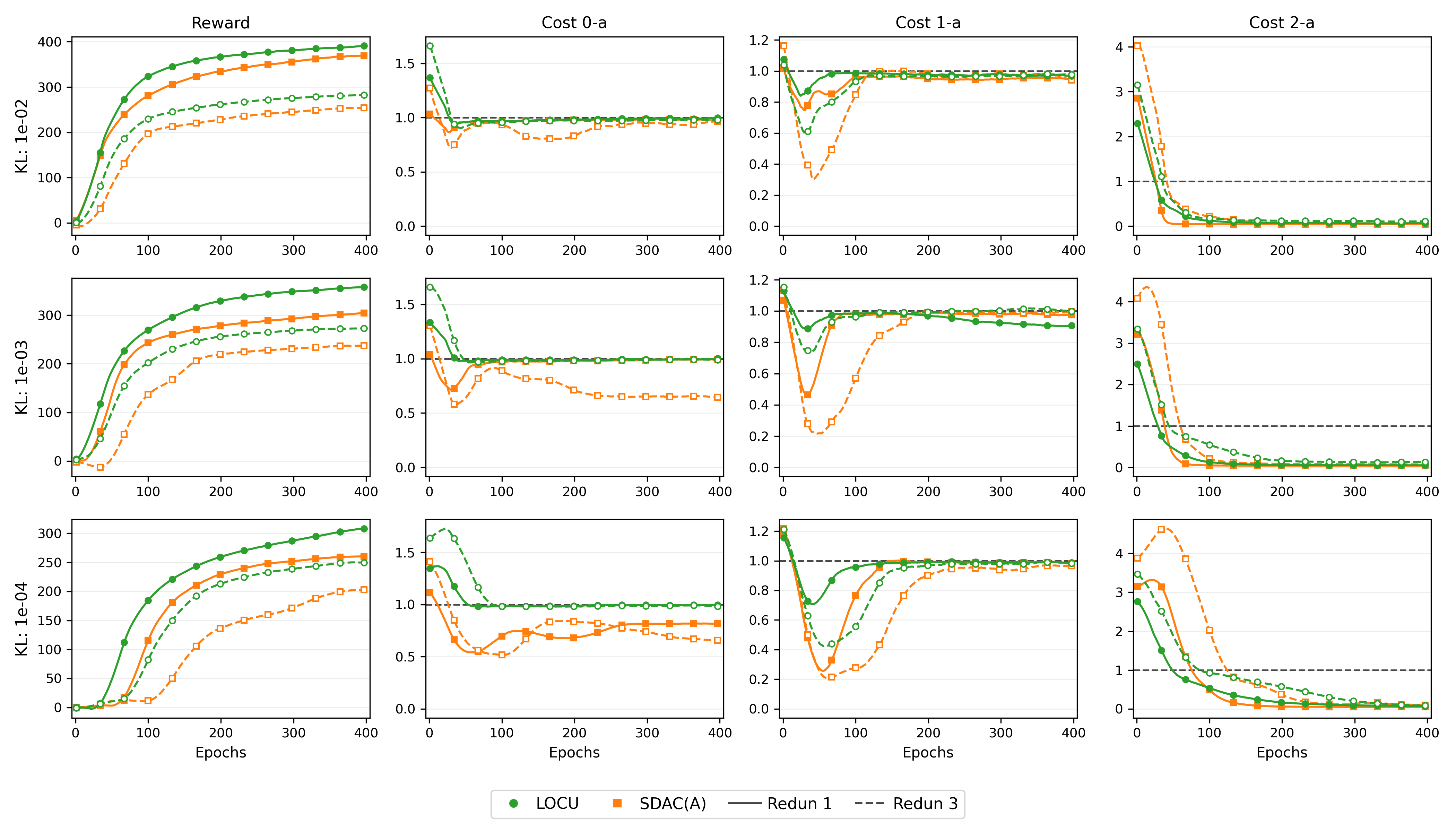}
  \caption{
    Learning curves of LOCU with different hyperparameters on BC-v3 under redundancy variants redun : 1 and redun : 3. 
  }
  \label{fig:lc:bcv3-redun}
\end{figure}

\begin{figure}[H]
  \centering
  \includegraphics[width=\textwidth]{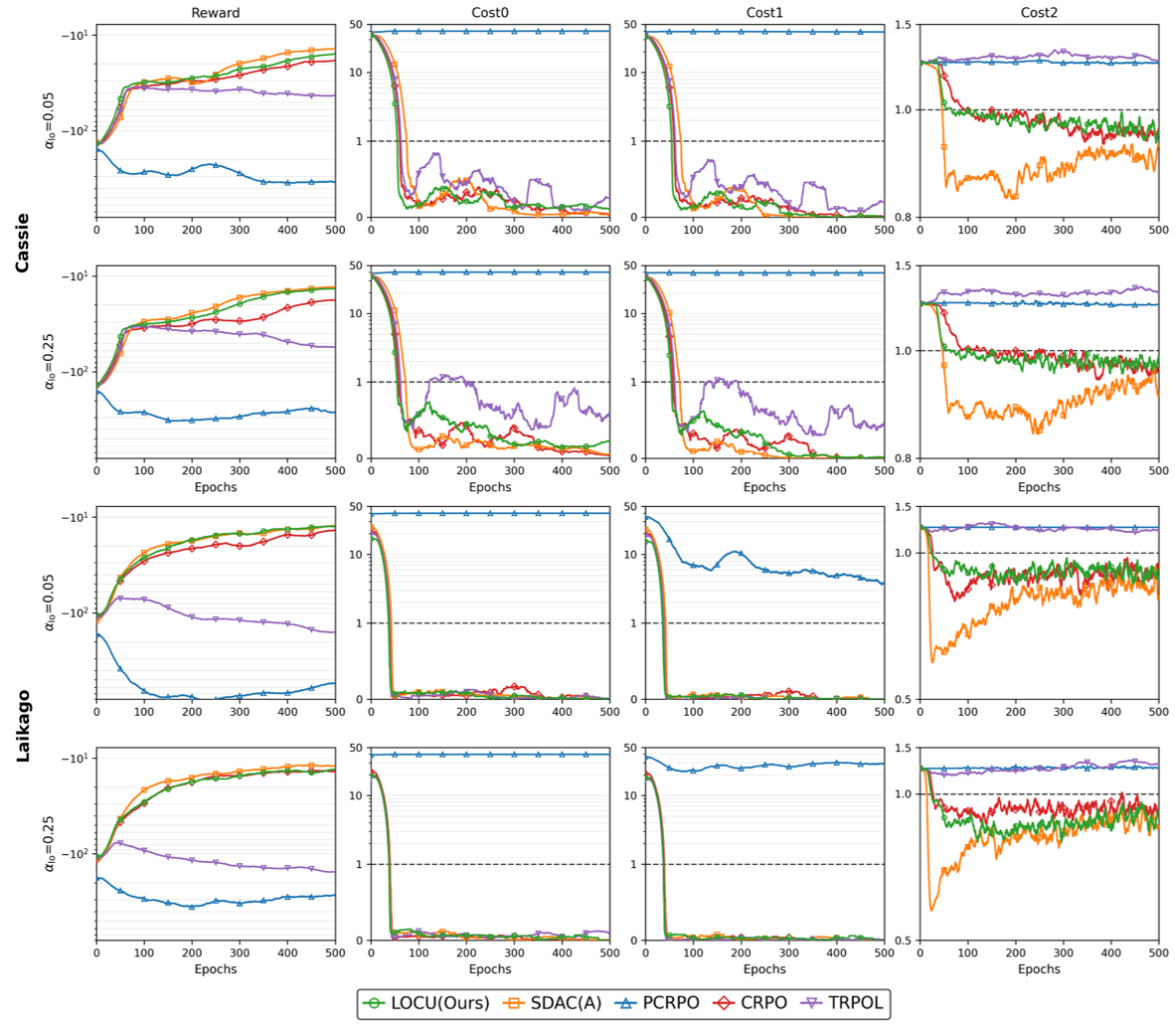}
  \caption{
    Learning curves on the locomotion tasks with Cassie and Laikago under three simultaneous constraints.
  }
  \label{fig:lc:locomotion}
\end{figure}
\begin{figure}[H]
  \centering
  \includegraphics[width=\textwidth]{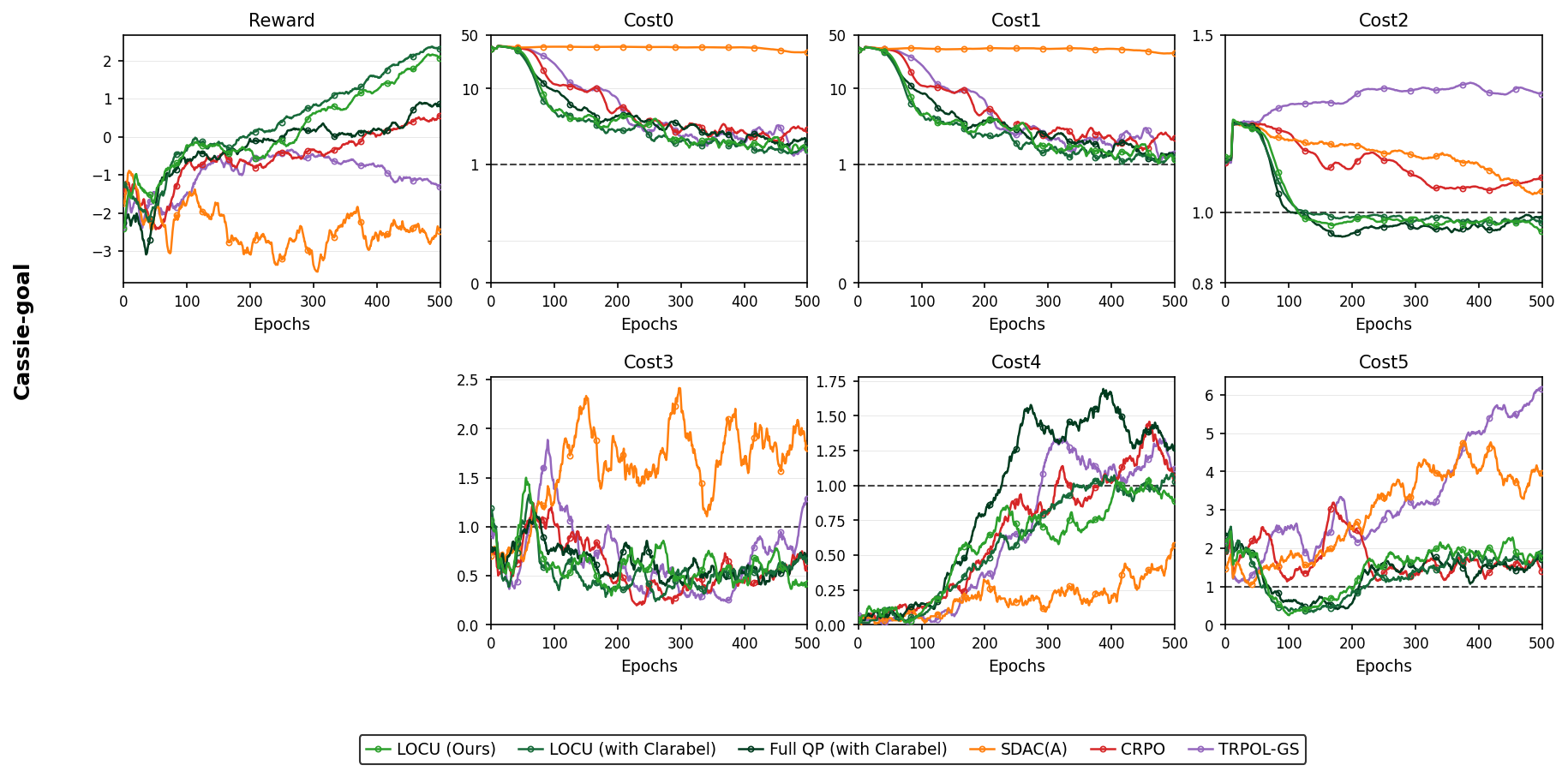}
  \caption{
    Learning curves on the goal-object task with Cassie under additional three simultaneous constraints.
  }
  \label{fig:lc:cassie_goal}
\end{figure}

\begin{figure}[H]
  \centering
  \includegraphics[width=\textwidth]{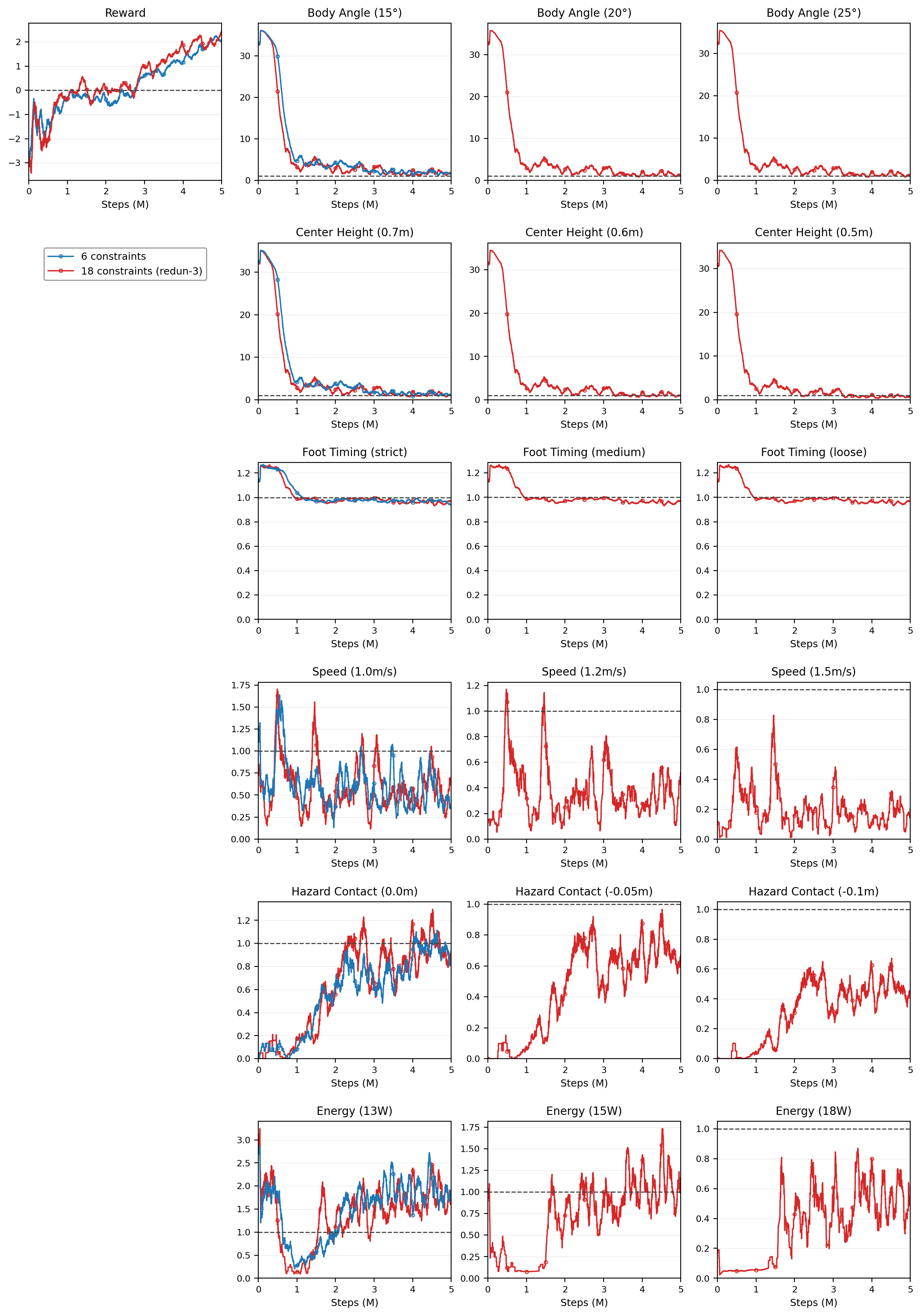}
  \caption{Learning curves for the goal-objective task with Cassie and the redundant 18-constraint version.}
  \label{fig:robustness-redun-vs-base-learning}
\end{figure}

\begin{figure}[H]
  \centering
  \includegraphics[width=\textwidth]{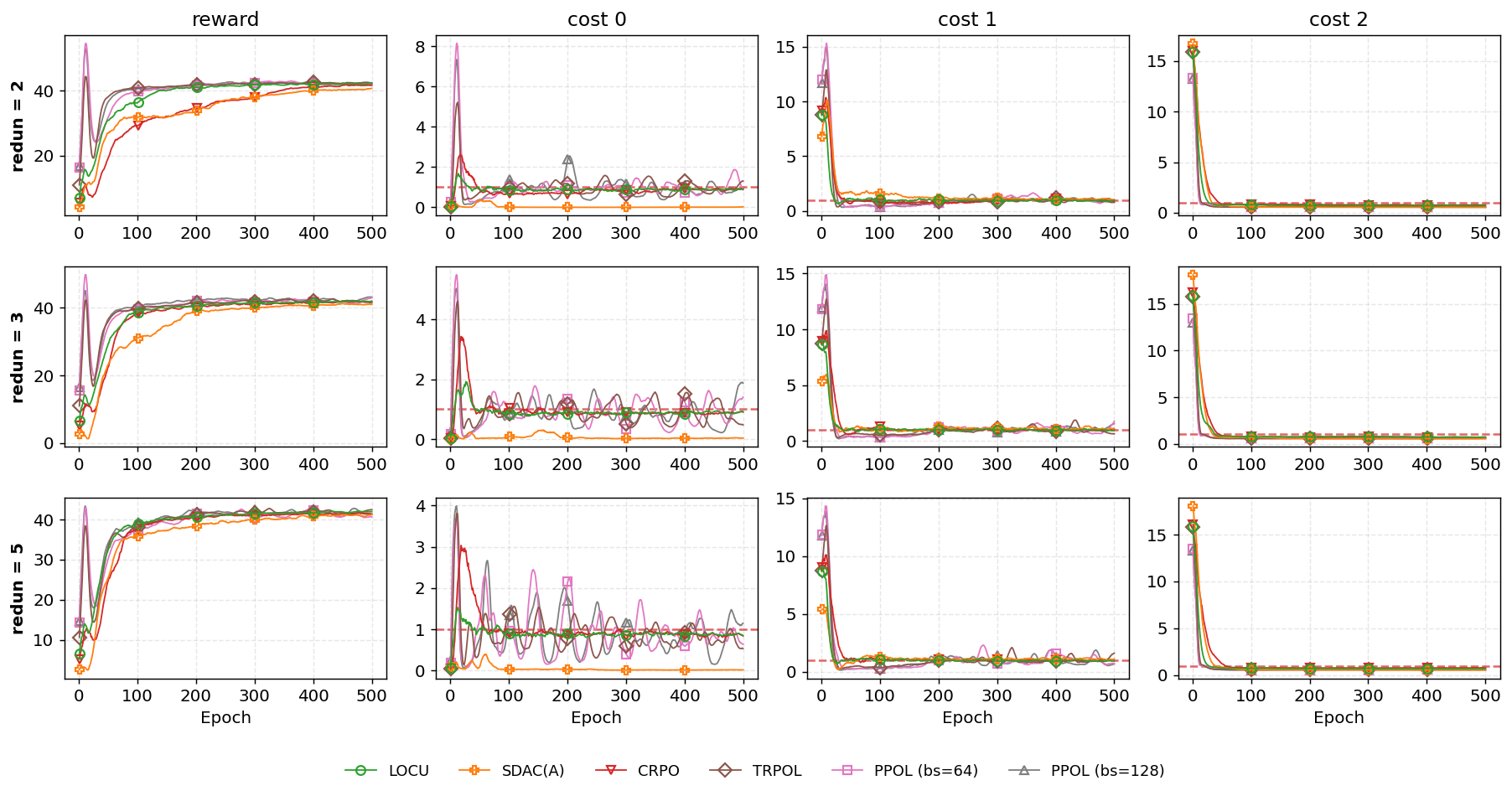}
  \caption{Learning curves for the mjx implementation of circle tasks.}
  \label{fig:mjx-circle-redun}
\end{figure}

\begin{figure}[H]
  \centering
  \includegraphics[width=\textwidth]{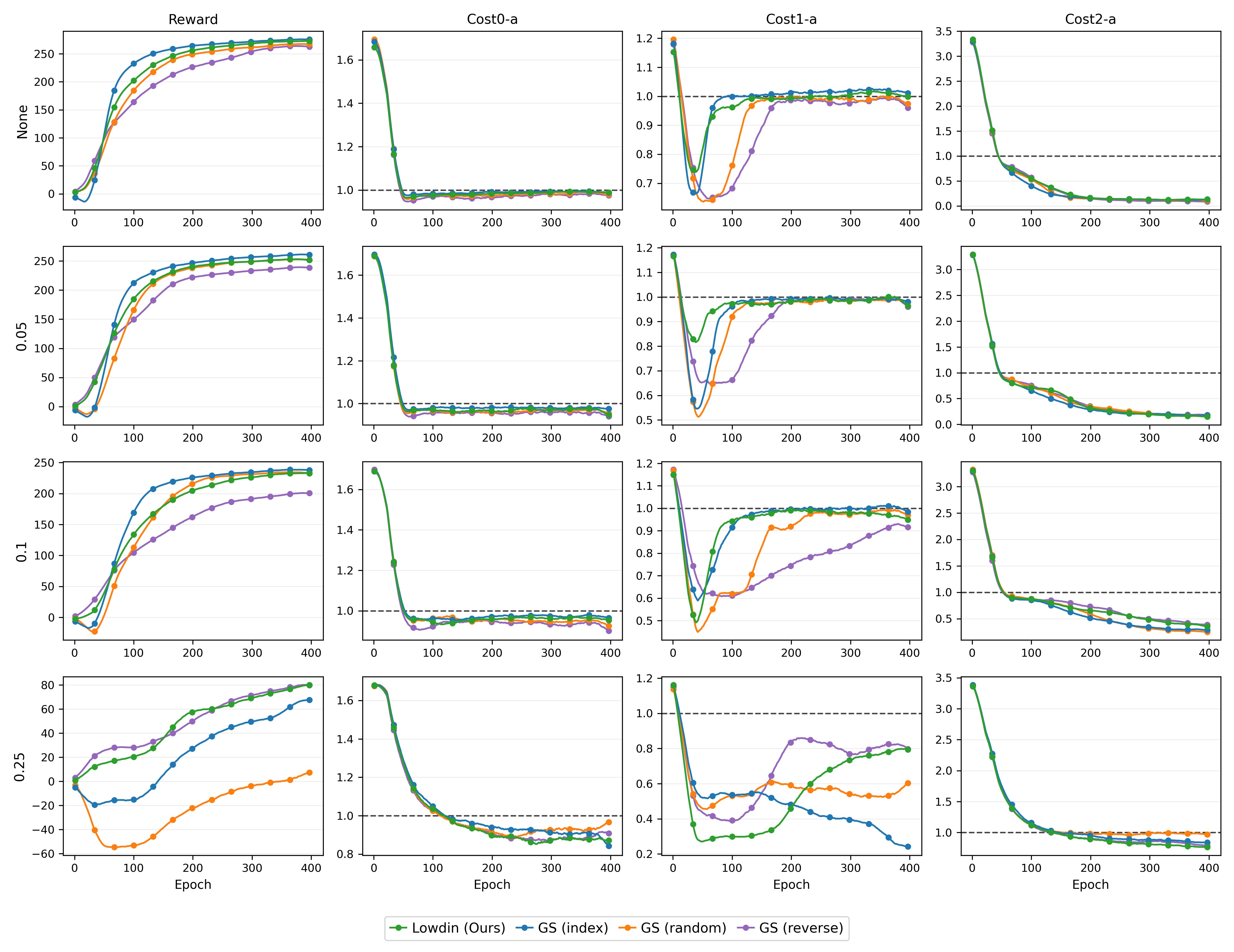}
  \caption{Learning curves for the orthogonalization comparison.}
  \label{fig:mjx-circle-redun}
\end{figure}

\clearpage

\end{document}